
\documentclass{article}

\usepackage{microtype}
\usepackage{graphicx}
\usepackage{subfigure}
\usepackage{booktabs} 

\usepackage{hyperref}





\usepackage[accepted]{icml2024}

\usepackage[utf8]{inputenc} 
\usepackage[T1]{fontenc}    
\usepackage{url,wrapfig}            
\usepackage{amsfonts}       
\usepackage{nicefrac}       
\usepackage{xcolor}         
\usepackage[inline]{enumitem}

\usepackage{bm}
\usepackage{subcaption}
\usepackage{wrapfig, framed}
\usepackage{multirow}
\hypersetup{colorlinks,linkcolor={blue},citecolor={blue},urlcolor={black}}  

\usepackage{dsfont}
\usepackage{amsthm}
\usepackage{amsmath}
\usepackage{amssymb}
\usepackage{mathtools}     
\pdfoutput=1

\theoremstyle{plain}
\newtheorem{theorem}{Theorem}[section]
\newtheorem{proposition}[theorem]{Proposition}
\newtheorem{lemma}[theorem]{Lemma}
\newtheorem{corollary}[theorem]{Corollary}
\theoremstyle{definition}
\newtheorem{definition}[theorem]{Definition}
\newtheorem{assumption}[theorem]{Assumption}
\theoremstyle{remark}
\newtheorem{remark}[theorem]{Remark}
\theoremstyle{condition}
\newtheorem{condition}[theorem]{Condition}
\theoremstyle{Question}
\newtheorem{question}[theorem]{Question}







\DeclareMathOperator*{\argmax}{argmax} 
\DeclareMathOperator*{\argmin}{argmin}

\definecolor{mygreen}{rgb}{0.0, 0.7, 0.0}

\usepackage[capitalize,noabbrev]{cleveref}

\usepackage[textsize=tiny]{todonotes}

\icmltitlerunning{A Fixed-Point Approach for Causal Generative Modeling}

\begin{document}

\twocolumn[
\icmltitle{A Fixed-Point Approach for Causal Generative Modeling}




\begin{icmlauthorlist}
\icmlauthor{Meyer Scetbon}{msr}
\icmlauthor{Joel Jennings}{deepg}
\icmlauthor{Agrin Hilmkil}{msr}
\icmlauthor{Cheng Zhang}{msr}
\icmlauthor{Chao Ma}{msr}
\end{icmlauthorlist}

\icmlaffiliation{msr}{Microsoft Research}
\icmlaffiliation{deepg}{Google DeepMind}

\icmlcorrespondingauthor{Meyer Scetbon}{t-mscetbon@microsoft.com}

\icmlkeywords{Machine Learning, ICML}

\vskip 0.3in
]



\printAffiliationsAndNotice{}  

\begin{abstract}
%
%
%
We propose a novel formalism for describing Structural Causal Models (SCMs) as fixed-point problems on causally ordered variables, eliminating the need for Directed Acyclic Graphs (DAGs), and establish the weakest known conditions for their unique recovery given the topological ordering (TO).
%
Based on this, we design a two-stage causal generative model that first infers in a zero-shot manner a valid TO from observations, and then learns the generative SCM on the ordered variables. 
To infer TOs, we propose to amortize the learning of TOs on synthetically generated datasets by sequentially predicting the leaves of graphs seen during training.
To learn SCMs, we design a transformer-based architecture that exploits a new attention mechanism enabling the modeling of causal structures, and show that this parameterization is consistent with our formalism.
Finally, we conduct an extensive evaluation of each method individually, and show that when combined, our model outperforms various baselines on generated out-of-distribution problems.
The code is available on \href{https://github.com/microsoft/causica/tree/main/research_experiments/fip}{Github}.

\end{abstract}

\setlength{\textfloatsep}{10pt}
\section{Introduction}

Recent machine learning (ML) works have witnessed a flurry of activity around causal modeling~\cite{peters2017elements}. SCMs and their associated DAGs provide a complete framework to describe the data generation process, and enable proactive interventions in this process to generate the effects on the data. Such unique properties offer a comprehensive understanding of the underlying generation process, which have made them popular in various fields such as e.g. economics \citep{zhang2006extensions, battocchi2021estimating}, biology~\cite{van2006application} genetics~\cite{sachs2005causal} or healthcare \citep{bica2019estimating, huang2021diagnosis}.

In most ML settings, only observational data are available, and as a result, the recovery of SCMs and their associated DAGs from observations has become one of the most fundamental tasks in causal ML~\cite{pearl2009causality}. However, this inverse problem suffers from several limitations that arise mainly from its computational and modeling aspects, making it difficult to solve. Computationally, the combinatorial nature of the DAG space makes DAG learning an NP-hard problem~\cite{chickering2004large}. Besides, an SCM relies on functions satisfying a DAG structure to define its causal mechanisms. Consequently, the modeling of these functions depends on an \emph{unknown} DAG making the SCM recovery an ill-posed problem in general~\cite{bongers2021foundations}.

Despite these challenges, numerous approaches have been proposed in the literature. Several works have studied the DAG search problem ~\cite{maxwell1997efficient, zheng2018dags, lachapelle2019gradient,charpentier2022differentiable,kamkari2023ocdaf}, thus grappling with its NP-hard complexity. New methods propose to bypass this search by directly amortizing the inference of DAGs from observations on generated datasets~\cite{lorch2022amortized, ke2022learning}, but do not guarantee to predict DAGs. Prior research has also explored modeling SCMs using deep learning techniques~\cite{kocaoglu2017causalgan,pawlowski2020deep,xia2021causal}, but assumes the knowledge of the causal graph. More recently, causality research has led to the emergence of autoregressive flows~\cite{khemakhem2021causal, NEURIPS2023_b8402301} to learn the generative processes induced by SCMs using triangular monotonic increasing (TMI) maps. However, these flows fail at modeling exactly SCMs, leaving aside the underlying causal system.

\textbf{Our Contributions.} 
In this work, we introduce a new framework to learn SCMs from data without instantiating DAGs. By formulating SCMs as fixed point problems on the causally ordered variables, we design a specific attention-based architecture enabling them to be parameterized and learned from data given only the topological order (TO). To recover TOs, we propose to amortize the learning of a zero-shot TO inference method on generated datasets, thus by-passing the NP-hard search over the permutations and enabling their predictions at scale. When combined, these two models provide a complete framework to learn SCMs from observations. We summarize our contributions below.
\begin{itemize}
[leftmargin=.3cm,itemsep=.0cm,topsep=0cm,parsep=2pt]
    \item In section~\ref{sec:scm}, we introduce a new definition of SCMs that eliminates the need for DAGs by framing them as fixed-point problems on ordered variables, and we show its equivalence with the standard one. We also exhibit three important cases where they can be uniquely recovered from observations given the TO. To the best of our knowledge, we obtain the weakest conditions to ensure their recovery from observations when the TO is known.
    \item Rather than searching for the TO in the set of permutations, we propose in section~\ref{sec:topological} to amortize the learning of a zero-shot TO inference method from observations on synthetically generated datasets. To further reduce the complexity of this task, we learn to sequentially predict the leaves of the graphs seen during training. 
    \item In section~\ref{sec:scm_learning}, we introduce our attention-based architecture to parameterize fixed-point SCMs on the causally ordered nodes. The proposed model is an autoencoder exploiting a new attention mechanism to learn causal structures, and we show its consistency with our formalism. 
    \item Finally, in section~\ref{sec:experiment}, we evaluate the performance of each proposed model individually, and compare our final causal generative model, obtained by combining them, against various SoTA methods on both causal discovery and inference tasks. We show that our approach consistently outperforms others on generated out-of-distribution datasets.
\end{itemize}

\subsection{Related Work}
\textbf{Causal Learning through Amortization.} The unsupervised nature of the inverse problem posed by the SCM recovery task, makes causal learning a non-convex and NP-hard optimization problem~\cite{chickering2004large}. To bypass this limitation, \citet{lorch2022amortized} leverage amortization techniques to predict causal structures from observations in a supervised manner. More specifically, they propose to randomly generate synthetic SCMs to build pairs of observational samples and target DAGs, and train a transformer-based architecture to predict the DAGs from the samples. While amortization circumvents the original graph search problem, acyclicity is not guaranteed. In addition, the method aims to correctly predict full DAGs, thus suffering from a quadratic complexity w.r.t the number of variables. Here, we propose 
to drastically reduce the complexity of the amortized DAG inference approach 
by amortizing the inference of topological orders in a sequential manner instead. More precisely, we propose to sequentially infer the leaves of the DAGs given observational samples, from which we deduce the topological ordering. Our procedure is guaranteed to produce a permutation, while only seeking to infer leaves, thus enabling its application at scale.

\textbf{Causal Normalizing Flows.}~\citet{khemakhem2021causal} first introduced the connections between SCMs and autoregressive flows (AFs). When the variables are causally ordered, the data-generating process of an SCM induces a triangular map that pushes forward the exogenous distribution of the noise to the endogenous distribution of the observations. While~\citet{khemakhem2021causal}  focus on affine AFs with additive noise,~\citet{NEURIPS2023_b8402301} generalize this viewpoint by considering instead triangular monotonic increasing maps (TMI). 
%
%
However, due to the monotonicity constraint, this framework does not provide an exact equivalence with standard SCMs that can in principle induce any triangular maps. In addition, these generating maps lack access to the structural equations defining an SCM. Instead, we propose a strict generalization of the AF setting by modeling directly the system of equations defining an SCM as a fixed-point problem on the ordered nodes. Our formalism is exactly equivalent to standard SCMs, and as a by-product recovers the generating AFs which are not constraint to be monotonic. We also generalize the 
identifiability result of~\citep[Corollary 2]{NEURIPS2023_b8402301} and show that the full SCM can be recovered under TMI assumptions.

\section{Fixed-Point Formulation of SCMs}
\label{sec:scm}
In this section, we introduce our new parameterization of SCMs that does not require DAGs, by viewing them as fixed-point problems on the causally ordered nodes. We start by recalling the standard definition of SCMs. 
Then, we present our definition of fixed-point SCMs independent of DAGs, and  we show their equivalences with standard ones. 
Finally, we exhibit three important cases where fixed-point SCMs can be uniquely recovered given the TO. 

\subsection{Standard SCMs}
\label{standard-scm}
An SCM defines the data-generating process of $d$ endogenous random variables, $\bm{X}\sim \mathbb{P}_{\bm{X}}$, from $d$ exogenous and independent random variables, $\bm{N}\sim\mathbb{P}_{\bm{N}}$, using a function $F$ and a graph $\mathcal{G}:=(\bm{V},\mathcal{E})$ where $\bm{V}:=\{1,\dots, d\}$ is a set of $d$ indices and $\mathcal{E}$ is a subset of $\bm{V}^2$ indicating the edges. More precisely, the endogenous variables $\bm{X}$ are defined by the SCM as the following assignments:
\vspace{-0.1cm}
\begin{align}
    \label{eq:scm-simplified}
    X_i := F_i(\textbf{PA}^{(i)}_{\mathcal{G}}(\bm{X}),N_i),~\quad \forall i\in\{1,\dots,d\}
\end{align}

where $\bm{X} := [X_1,\dots,X_d]$, $\bm{N}:=[N_1,\dots,N_d]$, $F:=[F_1,\dots,F_d]$ and $\textbf{PA}^{(i)}_{\mathcal{G}}(\bm{X})$\footnote{Refer to Appendix~\ref{sec:background} for a complete definition.} denotes the subset of variables in $\{X_1,\dots,X_d\}$ that are the parents of $X_i$ according to the graph $\mathcal{G}$. In the following, we denote $\mathcal{S}(\mathcal{G},\mathbb{P}_{\bm{N}},F)$ the standard SCM associated to $\mathcal{G}$, $\mathbb{P}_{\bm{N}}$, and $F$.

\textbf{Topological Ordering.} In an SCM, the graph $\mathcal{G}$ is assumed to be directed and acyclic (DAG)\footnotemark[\value{footnote}], and for such graphs, it is always possible to causally order the nodes. More formally, there exists a permutation $\pi$, i.e. a bijective mapping  $\pi: \{1,\dots, d \}  \to \{1,\dots, d \}, $ satisfying $\pi(i) < \pi(j)$ if $j$ is a parent of $i$. We call such permutation a topological order (TO). We also denote $P_\pi$ the permutation matrix associated, defined as $[P_\pi]_{i,j}=1$ if $\pi^{-1}(i)=j$ and 0 otherwise, and $\Sigma_d$ the set of permutation matrices of size $d$.

\textbf{Assumptions.} In the following, we assume that (i) all the variables $X_i$ and $N_i$ live in $\mathbb{R}$, (ii) the functions $F_i$ are differentiable, and (iii) structural minimality holds. For a precise statement of these assumptions, refer to Appendix~\ref{sec:background}.

\begin{remark}
Note that under the differentibility assumption, the structural minimality assumption simply ensures that if an edge $(i,j)\in\mathcal{E}$ exists in the graph $\mathcal{G}$, then $\frac{\partial F_j}{\partial x_i} \neq \bm{0}$.
\end{remark}

\subsection{Fixed-Point SCMs Without DAGs}
Before introducing our new definition of SCMs, we need to establish some notations. 

\textbf{Notations.} 
Consider a Polish space $\mathcal{Z}$, we denote $\mathcal{P}(\mathcal{Z})$ the set of Borel probability measures on $\mathcal{Z}$, and for $p\geq 1$ an integer, $\mathcal{P}_p(\mathcal{Z})$ the set of p-integrable probability measures on $\mathcal{Z}$. We also denote  $\mathcal{P}(\mathcal{Z})^{\otimes d}$ the set of $d$ jointly independent distributions over $\mathcal{Z}^d$. For $z\in\mathcal{Z}$, we denote $\delta_z$ the Dirac distribution at $z$.
For $\mathbb{P}\in\mathcal{P}(\mathbb{R}^d)$, and $i\in\{1,2\}$, we denote $\Pi_{i,\mathbb{P}}:=\{\gamma\in \mathcal{P}(\mathbb{R}^d\times\mathbb{R}^d)\colon
p_i\#\gamma=\mathbb{P}\}$ where $p_1: (x,y)\in\mathbb{R}^d\times\mathbb{R}^d\to x\in\mathbb{R}^d$,  $p_2: (x,y)\in\mathbb{R}^d\times\mathbb{R}^d\to y\in\mathbb{R}^d$, and $\#$ is the push-forward operator. Next we introduce a simple structural condition on functions.
\begin{condition}
\label{cond:structure}
$H:\mathbb{R}^d\times\mathbb{R}^d\to\mathbb{R}^d$ is differentiable, and satisfies for all $x,n\in\mathbb{R}^d$
\begin{equation}
\label{eq-struc}
\begin{aligned}
    [\textnormal{Jac}_1 H(x,n)]_{i,j} &= 0,\quad \text{if}\quad j\geq i,\quad \text{and} \\ 
    [\textnormal{Jac}_2 H(x,n)]_{i,j} &= 0,\quad \text{if}\quad i\neq j,
\end{aligned}
\end{equation}
where $\textnormal{Jac}_1 H$ and $\textnormal{Jac}_2 H$ are the Jacobians of H w.r.t the first and second variables, i.e. $x$ and $n$ respectively.
\end{condition}
Let us also define the function space of interest in this paper, $$\mathcal{F}_d:=\{H:\mathbb{R}^d\times\mathbb{R}^d\to\mathbb{R}^d \text{ s.t. $H$ satisfies 
 Cond. }\ref{cond:structure}\}\;.$$ We are now ready to present our new definition of SCMs.

\begin{definition}[\textbf{Fixed-Point SCM}]
\label{def:fp-scm}
Let $P\in\Sigma_d$ a permutation matrix of size $d$, $\mathbb{P}_{\bm{N}}\in\mathcal{P}(\mathbb{R})^{\otimes d}$ a jointly independent distribution over $\mathbb{R}^d$ and $H\in\mathcal{F}_d$. Then we define the fixed-point SCM associated, denoted $\mathcal{S}_{\text{fp}}(P, \mathbb{P}_{\bm{N}}, H)$, as the following fixed-point problem on $\gamma\in \Pi_{2,\mathbb{P}_{\bm{N}}}:$
\begin{align}
\label{eq:dfp_scm}
 (P^{T}H( P\cdot, P\cdot) - p_1(\cdot, \cdot))\# \gamma = \delta_{0}.
\end{align}
\end{definition}
The fixed-point formulation becomes clear when one adopts a random variable perspective. Indeed for $(\bm{X},\bm{N})\sim\gamma\in \Pi_{2,\mathbb{P}_{\bm{N}}}$, we have that $\gamma$ is solution of~\eqref{eq:dfp_scm} i.i.f $\bm{X}$ solves $\bm{X} = P^TH( P\bm{X}, P\bm{N})$. In the following proposition we show that the solution $\gamma$ of the fixed-point SCM is unique.
\begin{proposition}
\label{prop:unique-gamma}
Let  $\mathcal{S}_{\text{fp}}(P, \mathbb{P}_{\bm{N}}, H)$ a fixed-point SCM as defined in definition~\ref{def:fp-scm}. Then the fixed-point problem~\eqref{eq:dfp_scm} on $\gamma\in \Pi_{2,\mathbb{P}_{\bm{N}}}$ admits a unique solution.
\end{proposition}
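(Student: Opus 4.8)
The plan is to collapse the fixed-point equation into an explicit triangular recursion and read off both existence and uniqueness from it. First I would pass to the random-variable viewpoint: a coupling $\gamma\in\Pi_{2,\mathbb{P}}$ solves~\eqref{eq:dfp_scm} if and only if a pair $(\bm{X},\bm{N})\sim\gamma$ satisfies $\bm{N}\sim\mathbb{P}$ and $\bm{X}=P^{T}H(P\bm{X},P\bm{N})$ almost surely. Substituting $\bm{Y}:=P\bm{X}$ and $\bm{M}:=P\bm{N}$, this is equivalent to $\bm{Y}=H(\bm{Y},\bm{M})$ a.s., so it is enough to understand the deterministic equation $y=H(y,m)$ for each fixed $m\in\mathbb{R}^d$, and then push everything back through $P$.

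Next I would exploit \cref{cond:structure}. Since $\mathbb{R}^d$ is connected and $[\textnormal{Jac}_1 H]_{i,j}=0$ for $j\ge i$ while $[\textnormal{Jac}_2 H]_{i,j}=0$ for $i\ne j$, integrating along coordinate directions shows that each component $H_i$ depends only on $y_1,\dots,y_{i-1}$ and on $m_i$; write $H_i(y,m)=h_i(y_1,\dots,y_{i-1},m_i)$, with $h_1$ a function of $m_1$ alone. The equation $y=H(y,m)$ thus becomes the triangular system $y_i=h_i(y_1,\dots,y_{i-1},m_i)$, which is solved by forward substitution: define $T\colon\mathbb{R}^d\to\mathbb{R}^d$ recursively by $T_1(m):=h_1(m_1)$ and $T_i(m):=h_i(T_1(m),\dots,T_{i-1}(m),m_i)$. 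A straightforward induction on $i$ shows that $y=H(y,m)$ holds if and only if $y=T(m)$, so for every fixed $m$ there is exactly one solution, and $T$ is continuous as a composition of the continuous maps $h_i$.

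Combining the two reductions, any solution $\gamma$ is supported on the set $\{(x,n): x=P^{T}T(Pn)\}$, i.e. on the graph of the measurable map $\Phi\colon n\mapsto P^{T}T(Pn)$; since any probability measure concentrated on the graph of a function is determined by its marginal on the domain, and that marginal is pinned to $\mathbb{P}$ by $\gamma\in\Pi_{2,\mathbb{P}}$, the solution $\gamma$ is necessarily the pushforward of $\mathbb{P}$ under $n\mapsto(\Phi(n),n)$, proving uniqueness. For existence one simply verifies that this pushforward lies in $\Pi_{2,\mathbb{P}}$ (its second marginal is $\mathbb{P}$ by construction) and satisfies $\bm{X}=P^{T}H(P\bm{X},P\bm{N})$ a.s. by the defining property of $T$, hence solves~\eqref{eq:dfp_scm}. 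The only step that genuinely requires care is the passage from the pointwise Jacobian vanishing in \cref{cond:structure} to the coordinate-dependence structure $H_i(y,m)=h_i(y_1,\dots,y_{i-1},m_i)$, which relies on connectedness of the domain; the remainder is routine triangular substitution together with the bookkeeping of the permutation matrix $P$.
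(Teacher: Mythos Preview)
Your argument is correct and is essentially the same as the paper's: both exploit the strictly lower-triangular structure of $H$ to build a triangular map $T$ (the paper writes it as $n\mapsto H(\cdot,n)^{\circ d}(0_d)$, you write it as explicit forward substitution) and then observe that any solution $\gamma$ must be the pushforward of $\mathbb{P}$ under $n\mapsto (P^{T}T(Pn),n)$. Your version is slightly more detailed in justifying the passage from the Jacobian conditions to the coordinate-dependence structure, which the paper simply asserts.
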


The above result ensures that a fixed-point SCM entails a unique coupling $\gamma$, and as a direct consequence a unique observational distribution $\mathbb{P}_{\bm{X}}:=p_1\#\gamma$. In the following we denote $\gamma(P, \mathbb{P}_{\bm{N}}, H)\in \Pi_{2,\mathbb{P}_{\bm{N}}}$ the solution of~\eqref{eq:dfp_scm}. 

Now, observe that in our definition of fixed-point SCMs, we do not use a DAG to define the structure of the function $H$. In fact, $H$ has a simple structure given by Cond.~\ref{cond:structure} and we can easily define the causal graph from it.

\begin{definition}[Causal Graph of Fixed-Point SCM] 
\label{def:causal-graph}
Let $\mathcal{S}_{\text{fp}}(P, \mathbb{P}_{\bm{N}}, H)$ a fixed-point SCM. Then we say that $j$ is a parent of $i$ if $(x,n)\to [\text{Jac}_1 P^T H(Px,Pn)]_{i,j} \neq \bm{0}$.
\end{definition}

\begin{remark}
Note that as $H$ has to satisfy Cond.~\ref{cond:structure}, then the graph induced by definition~\ref{def:causal-graph} is necessarily a DAG. 
\end{remark}

\textbf{Equivalent Formulation.} In the next Proposition, we finally show the equivalence between our formalism and the standard definition of SCMs introduced in section~\ref{standard-scm}.

\begin{proposition}
\label{prop:equivalence}
Let $\mathcal{S}(\mathcal{G},\mathbb{P}_{\bm{N}}, F)$ an SCM as defined in~\eqref{eq:scm-simplified}, $\pi$ a topological ordering of $\mathcal{G}$, and $P_{\pi}\in\Sigma_d$ the associated permutation matrix. Then, there exists a unique fixed-point SCM of the form $\mathcal{S}_{\text{fp}}(P_\pi,\mathbb{P}_{\bm{N}},H)$
such that for all $i\in\{1,\dots,d\}$, and $x,n\in\mathbb{R}^d$, 
\begin{align}
\label{eq:unique-equi}
[P_\pi^T H(P_\pi x,P_\pi n)]_i = F_i(\textbf{PA}_{\mathcal{G}}^{(i)}(x),n_i)\; .
\end{align}
Reciprocally, for any fixed-point SCM with TO $P$, there exists a unique SCM as defined in section~\ref{standard-scm} with same noise distribution  such that $P$ is a valid TO of its assoacited DAG and~\eqref{eq:unique-equi} is satisfied.
\end{proposition}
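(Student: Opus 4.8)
The plan is to prove both directions by explicitly constructing the corresponding object and then verifying uniqueness. For the forward direction, I start from $\mathcal{S}(\mathcal{G},\mathbb{P}_{\bm{N}},F)$ with topological order $\pi$ and set $H := F_\pi^{\mathcal{G}}$, the function defined in~\eqref{eq:func_ordered}. By Lemma~\ref{lem-existence-fp-paper}, $F_\pi^{\mathcal{G}}$ satisfies Cond.~\ref{cond:structure}, hence $H \in \mathcal{F}_d$, and the triple $\mathcal{S}_{\text{fp}}(P_\pi, \mathbb{P}_{\bm{N}}, H)$ is a well-defined fixed-point SCM. To check~\eqref{eq:unique-equi}, I unwind the definitions: $[P_\pi^T H(P_\pi x, P_\pi n)]_i = [P_\pi^T F_\pi^{\mathcal{G}}(P_\pi x, P_\pi n)]_i$, and by~\eqref{eq:func_ordered} the $\pi(i)$-th coordinate of $F_\pi^{\mathcal{G}}(P_\pi x, P_\pi n)$ is $F_{\pi^{-1}(\pi(i))}^{\mathcal{G}}(P_\pi^T P_\pi x, P_\pi^T P_\pi n) = F_i^{\mathcal{G}}(x,n) = F_i(\textbf{PA}_{\mathcal{G}}^{(i)}(x), n_i)$; applying $P_\pi^T$ routes this to position $i$, which is exactly the right-hand side. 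So existence holds, and the fixed-point reformulations~\eqref{eq:scm_fp} and~\eqref{eq:fp_ordered} together with Proposition~\ref{prop:unique-gamma} guarantee the induced observational law matches that of the original SCM.

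For uniqueness in the forward direction, suppose $H' \in \mathcal{F}_d$ also satisfies~\eqref{eq:unique-equi} with the same $P_\pi$. Then $P_\pi^T H(P_\pi x, P_\pi n) = P_\pi^T H'(P_\pi x, P_\pi n)$ for all $x,n$, and since $P_\pi$ is invertible and $x \mapsto P_\pi x$ is a bijection of $\mathbb{R}^d$, this forces $H = H'$ pointwise. For the reciprocal direction, given $\mathcal{S}_{\text{fp}}(P,\mathbb{P},H)$, I define the candidate standard SCM by reading off the structure of $H$: use Definition~\ref{def:causal-graph} to obtain the graph $\mathcal{G}$ (a DAG by the remark following it), and for each $i$ define $F_i$ to be the function of the parent coordinates and $n_i$ obtained by restricting $[P^T H(P\cdot, P\cdot)]_i$; the strictly-lower-triangular Jacobian condition on $H$ in the $P$-ordered coordinates is exactly what ensures $[P^T H(P\cdot,P\cdot)]_i$ depends only on coordinates $x_j$ with $\pi(j) < \pi(i)$, i.e.\ on the parents of $i$, so this restriction is well-defined; the diagonal Jacobian in $n$ ensures the noise dependence is only on $n_i$. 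Structural minimality holds by construction of $\mathcal{G}$ via Definition~\ref{def:causal-graph}, and $P$ is a valid TO of $\mathcal{G}$ by the same definition. Equation~\eqref{eq:unique-equi} then holds by construction, and uniqueness of this standard SCM follows because~\eqref{eq:unique-equi} pins down each $F_i$ on the relevant coordinates and structural minimality pins down $\mathcal{E}$.

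The main obstacle I anticipate is bookkeeping the index gymnastics of conjugation by $P_\pi$ cleanly — in particular, carefully arguing that Cond.~\ref{cond:structure}'s triangularity statement (which is about $\textnormal{Jac}_1 H$, i.e.\ $H$ itself in ordered coordinates) translates correctly into the statement that $[P^T H(P\cdot, P\cdot)]_i$ depends only on the $\pi$-ancestors/parents of $i$ in the \emph{original} coordinates, since the chain rule introduces the permutation on both sides. A secondary subtlety is confirming that the graph extracted via Definition~\ref{def:causal-graph} coincides with a graph for which $P$ is a topological order and for which structural minimality is automatic; this should follow directly from the definition, but it is worth stating explicitly that an edge $(j,i)$ is included precisely when the partial derivative is not identically zero, which is the minimality condition itself. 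I do not expect the measure-theoretic side (push-forwards, the coupling $\gamma$) to cause trouble here, since Proposition~\ref{prop:unique-gamma} already does that work and the equivalence at the level of the functional relationships transfers to the level of distributions automatically.
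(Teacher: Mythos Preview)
Your proposal is correct and follows essentially the same approach as the paper's proof: construct $H$ as $F_\pi^{\mathcal{G}}$ (the paper writes this as conjugating the extended $\tilde{F}$ by $P_\pi$), obtain uniqueness from the bijectivity of $P_\pi$, and for the converse extract the graph via Definition~\ref{def:causal-graph} and use structural minimality to pin down the $F_i$'s and their parent sets. You are somewhat more explicit about the index bookkeeping and about invoking Lemma~\ref{lem-existence-fp-paper} for membership in $\mathcal{F}_d$, whereas the paper's proof is terser; the appeal to Proposition~\ref{prop:unique-gamma} for matching the observational law is a harmless aside but not actually needed for the statement as written.
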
 

\subsection{Partial Recovery of Fixed-Point SCMs}
\label{sec:partial-id}

Now, we investigate the partial recovery of fixed-point SCMs from observations, that is when the TO is given. To do so, let us introduce some clarifying notations. For any $\mathcal{A}\subset \mathcal{P}(\mathbb{R}^d)$, we denote $\mathcal{A}_c\subset \mathcal{A}$, the subset of distributions in $\mathcal{A}$ which are absolutely continuous w.r.t Lebesgue, and $\mathcal{A}_{cc}\subset \mathcal{A}_c$ the subset of $\mathcal{A}_c$ with distributions that admit a continuous density. We also denote for $\mathbb{Q}\in\mathcal{P}(\mathbb{R}^d)$ and $P\in \Sigma_d$, $\mathcal{A}_P(\mathbb{Q}):=\{(P,\mathbb{P},H)\colon \mathbb{P}\in\mathcal{P}(\mathbb{R})^{\otimes d},~H\in\mathcal{F}_d,~p_1\#\gamma(P,\mathbb{P},H) = \mathbb{Q}\}$, the set of fixed-point SCMs with TO $P$ generating the observational distribution $\mathbb{Q}$. 

\textbf{Partial Recovery Problem.} Let $\mathcal{S}_{\text{fp}}(P, \mathbb{P}_{\bm{N}}, H)$ a fixed-point SCM generating $\gamma(P, \mathbb{P}_{\bm{N}}, H)$ with left marginal $\mathbb{P}_{\bm{X}}:=p_1\#\gamma(P, \mathbb{P}_{\bm{N}}, H)$. Given $P$ and $\mathbb{P}_{\bm{X}}$, can we recover uniquely the generating fixed-point SCM $\mathcal{S}_{\text{fp}}(P, \mathbb{P}_{\bm{N}}, H)$? Or more formally, is $\mathcal{A}_P(\mathbb{P}_{\bm{X}})$ a singleton?\footnote{$A_P(\mathbb{P}_{\bm{X}})$ is a singleton if and only if there exists a unique fixed-point SCM with topological order $P$ generating $\mathbb{P}_{\bm{X}}$.}




Let us now exhibit two important cases where this recovery problem admits a positive answer.


\begin{proposition}
\label{prop:anm}
Let $P\in\Sigma_d$ and $\mathbb{P}_{\bm{X}}\in\mathcal{P}_1(\mathbb{R}^d)$. Let us also denote $\mathcal{F}_d^{ANM}:=\{H\in\mathcal{F}_d\colon H(x,n)=h(x)+n\}$  and $\mathcal{A}_P^{\text{ANM}}(\mathbb{P}_{\bm{X}}):=\{(P,\mathbb{P},H)\in \mathcal{A}_P(\mathbb{P}_{\bm{X}})\colon \mathbb{P}\in\mathcal{P}_1(\mathbb{R})^{\otimes d},~ H\in\mathcal{F}_d^{ANM},~\mathbb{E}_{\bm{N}\sim \mathbb{P}}(\bm{N})=0_d\}$. Then $\mathcal{A}_P^{\text{ANM}}(\mathbb{P}_{\bm{X}})$ admits at most 1 element $\mathbb{P}_{P\bm{X}}$ a.s.
\end{proposition}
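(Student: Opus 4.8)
The plan is to reduce to the identity permutation, rewrite each candidate fixed-point SCM in ``forward-substituted'' form at the level of random variables, and then recover the functional part coordinate by coordinate as a conditional expectation. First I would relabel the coordinates through $P$: setting $\bm Y=P\bm X$ and $\bm M=P\bm N$, the fixed point $\bm X=P^TH(P\bm X,P\bm N)$ becomes $\bm Y=H(\bm Y,\bm M)$ with $H\in\mathcal{F}_d$ in natural order, $\bm M$ jointly independent with mean $0_d$, and $\mathrm{Law}(\bm Y)=\mathbb{P}_{P\bm X}:=P\#\mathbb{P}_{\bm X}$; moreover, pinning down $P^TH(P\cdot,P\cdot)$ up to $\mathbb{P}_{\bm X}$-a.s.\ equivalence is the same as pinning down $H$ up to $\mathbb{P}_{P\bm X}$-a.s.\ equivalence. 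So I may assume $P=I$. If $\mathcal{A}^{\text{ANM}}_I(\mathbb{P}_{P\bm X})$ is empty there is nothing to prove; otherwise pick two elements $(I,\mathbb{P}^1,H^1)$ and $(I,\mathbb{P}^2,H^2)$ and write $H^k(x,n)=h^k(x)+n$ with $h^k$ differentiable and $[\mathrm{Jac}\,h^k(x)]_{i,j}=0$ whenever $j\ge i$ (Condition~\ref{cond:structure}). In particular each $h^k_i$ depends only on $(x_1,\dots,x_{i-1})$, and $h^k_1$ is constant on $\mathbb{R}^d$.

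Next I would invoke the random-variable reading of Definition~\ref{def:fp-scm}: for $k\in\{1,2\}$ let $(\bm X^k,\bm N^k)\sim\gamma(I,\mathbb{P}^k,H^k)$, so that $\bm N^k$ is jointly independent with $\mathbb{E}[\bm N^k]=0_d$, $\bm X^k=h^k(\bm X^k)+\bm N^k$, and $\mathrm{Law}(\bm X^k)=\mathbb{P}_{P\bm X}$. Since $\mathrm{Jac}_1 h^k$ is strictly lower triangular, the fixed point is solved by forward substitution, and an immediate induction on $i$ produces measurable maps with $X^k_i=g^k_i(N^k_1,\dots,N^k_i)$. Hence $(X^k_1,\dots,X^k_{i-1})$ is $\sigma(N^k_1,\dots,N^k_{i-1})$-measurable, and joint independence of the noise yields $N^k_i\perp(X^k_1,\dots,X^k_{i-1})$. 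Because $\mathbb{P}_{P\bm X}\in\mathcal{P}_1(\mathbb{R}^d)$ and $\mathbb{P}^k\in\mathcal{P}_1(\mathbb{R})^{\otimes d}$, every $X^k_i$, $N^k_i$, and therefore $h^k_i(X^k_1,\dots,X^k_{i-1})=X^k_i-N^k_i$ is integrable.

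Then I would carry out the identification. Writing $X^k_{<i}:=(X^k_1,\dots,X^k_{i-1})$ and conditioning on $X^k_{<i}$, the independence $N^k_i\perp X^k_{<i}$ with $\mathbb{E}[N^k_i]=0$ gives $\mathbb{E}[X^k_i\mid X^k_{<i}]=h^k_i(X^k_{<i})$ almost surely. But $\mathrm{Law}(\bm X^1)=\mathrm{Law}(\bm X^2)=\mathbb{P}_{P\bm X}$, so the joint law of $(X^k_1,\dots,X^k_i)$ does not depend on $k$; hence $h^1_i$ and $h^2_i$ are versions of one and the same a.s.-unique conditional expectation computed under $\mathbb{P}_{P\bm X}$, so $h^1_i=h^2_i$ holds $\mathbb{P}_{(X_1,\dots,X_{i-1})}$-a.s., i.e.\ $\mathbb{P}_{P\bm X}$-a.s. (for $i=1$ this is just $h^1_1=\mathbb{E}[X_1]=h^2_1$). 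Finally $N^k_i=X^k_i-h^k_i(X^k_{<i})$ has, by the previous step, a law determined by $\mathbb{P}_{P\bm X}$ and independent of $k$, whence $\mathbb{P}^1=\mathbb{P}^2$. Thus the two elements coincide, with functional parts agreeing $\mathbb{P}_{P\bm X}$-a.s., which is exactly the claimed uniqueness.

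I expect the genuinely delicate points to be measure-theoretic rather than conceptual: rigorously extracting the forward-substituted form $X^k_i=g^k_i(N^k_1,\dots,N^k_i)$ and the ensuing independence $N^k_i\perp X^k_{<i}$ from the coupling $\gamma(I,\mathbb{P}^k,H^k)$, and keeping track that $H$ is only determined up to $\mathbb{P}_{P\bm X}$-a.s.\ equivalence since it is unconstrained off the support of the generated distribution. Everything else is the classical ``regression equals causal mechanism'' argument for additive-noise models, here available coordinatewise thanks to the strictly lower triangular structure of $\mathcal{F}_d$ and made sharp by the mean-zero normalization $\mathbb{E}[\bm N]=0_d$, without which a constant could be shifted between $h_i$ and $N_i$.
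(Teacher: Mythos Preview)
Your proposal is correct and follows essentially the same route as the paper: reduce to the identity permutation via $\bm Y=P\bm X$, use the strictly lower-triangular structure together with joint independence and the mean-zero normalization to identify each $h_i$ as the conditional expectation $\mathbb{E}[Y_i\mid Y_{<i}]$, conclude $\mathbb{P}_{P\bm X}$-a.s.\ uniqueness of $h$, and then read off the noise law. The paper's write-up is terser---it asserts the independence $N_i\perp(Y_1,\dots,Y_{i-1})$ directly rather than deriving it through forward substitution---but the argument is the same.
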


Therefore if we restrict our search to Additive Noise Models (ANMs), then given $\mathbb{P}_{\bm{X}}$ and $P$, we can recover uniquely the fixed-point SCM $\mathbb{P}_{P\bm{X}}$ a.s. It is worth noting that in Proposition~\ref{prop:anm_gene}, we show a more general result where we obtain partial recovery in the ANM case with heteroscedastic noise.

Let us now show a generalized version of the ANM case where we relax the additive form of the model by assuming instead that the functions are monotonic w.r.t the noise.
\begin{theorem}
\label{thm:id-noise}
Let $P\in\Sigma_d$, $\mathbb{P}_{\bm{N}}\in\mathcal{P}(\mathbb{R})^{\otimes d}$, $\mathbb{P}_{\bm{X}}\in\mathcal{P}(\mathbb{R}^d)$ and let us assume that $\mathbb{P}_{\bm{N}}, \mathbb{P}_{\bm{X}}\in\mathcal{P}_c(\mathbb{R}^d)$. Let us also denote $\mathcal{F}_d^{MON}:=
    \{H\in\mathcal{F}_d \colon [\textnormal{Jac}_2 H(\cdot,\cdot)]_{i,i}\geq 0,~\forall i\}$,
 and $\mathcal{A}_P^{\text{MON}}(\mathbb{P}_{\bm{N}},\mathbb{P}_{\bm{X}}):=\{(P,\mathbb{P}_{\bm{N}},H)\in \mathcal{A}_P(\mathbb{P}_{\bm{X}})\colon ~H\in\mathcal{F}_d^{MON}\}$. Then $\mathcal{A}_P^{\text{MON}}(\mathbb{P}_{\bm{N}},\mathbb{P}_{\bm{X}})$ admits at most one element $\mathbb{P}_{P\bm{X}}\otimes \mathbb{P}_{P\bm{N}}$ a.s.
\end{theorem}

\begin{remark}
Note that $\mathcal{F}_d^{ANM}\subset \mathcal{F}_d^{MON}
$ and therefore theorem~\ref{thm:id-noise} generalizes the recovery of proposition~\ref{prop:anm} when the distribution of the exogenous variables is known.
\end{remark}

This result demonstrates that the partial recovery of monotonic fixed-point SCMs is feasible when the exogenous distribution is known.
In fact, we show that, for this class of fixed-point SCMs, fixing the noise distribution $\mathbb{P}_{\bm{N}}$ is also necessary to obtain partial recovery. See Proposition~\ref{prop:existence-scm}.


While in this section we focus on the recovery of each component constituting a fixed-point SCM given the TO, in the following we investigate a weaker partial recovery problem, where we only aim at identifying uniquely the interventional and counterfactual distributions of fixed-point SCMs from observations given the TO.

\subsection{Weak Partial Recovery of Fixed-Points SCMs}
\label{sec:identifiability-general}
Before we state the problem of interest, let us first introduce some additional notations defining the interventional and counterfactual distributions of a fixed-point SCM. 

Let $\mathcal{S}_{\text{fp}}(P, \mathbb{P}_{\bm{N}}, H)$ a fixed-point SCM. For any differentiable and lower-triangular map\footnote{That is any differentiable map satisfying for any $x\in\mathbb{R}^d$, and $i,j\in\{1,\dots,d\}$, $[\text{Jac}T(x)]_{i,j}=0$ if $j>i$.} $T:\mathbb{R}^d\to\mathbb{R}^d$, we define $H_T=T\circ H$. In particular, by defining for any $a\in\mathbb{R}$ and $i\in\{1,\dots,d\}$, the diagonal map $T_{i,a}:x\in\mathbb{R}^d\to[x_1,\dots,x_{i-1},a,x_{i+1},\dots,x_d]\in\mathbb{R}^d$, then the fixed-point SCM $\mathcal{S}_{\text{fp}}(P, \mathbb{P}_{\bm{N}}, H_{T_{i,a}})$ corresponds to the intervened SCM under the intervention $\text{do}([PX]_i=a)$, that is $\mathcal{S}_{\text{fp}}(P, \mathbb{P}_{\bm{N}}, H_{T_{i,a}})=\mathcal{S}^{\text{do}([PX]_i=a)}_{\text{fp}}(P, \mathbb{P}_{\bm{N}}, H)$. We also denote $\mathbb{P}_{\bm{X}}^{\text{do}(T)}(P, \mathbb{P}_{\bm{N}}, H):=p_1\#\gamma(P, \mathbb{P}_{\bm{N}}, H_T)$, and we call it the interventional distribution of $\mathcal{S}_{\text{fp}}(P, \mathbb{P}_{\bm{N}}, H)$ under the intervention $T$. In particular, if $T=T_{i,a}$,
$\mathbb{P}_{\bm{X}}^{\text{do}(T_{i,a})}(P, \mathbb{P}_{\bm{N}}, H)$ is the interventional distribution of $\mathcal{S}_{\text{fp}}(P, \mathbb{P}_{\bm{N}}, H)$ under the intervention $\text{do}([PX]_i=a)$. 

Let us now introduce a mild condition on $H$, enabling the proper definition of the counterfactual distributions.

\begin{condition}
\label{cond:counterfactual}
Let $H\in\mathcal{F}_d$, then $H$ is $C^1$ and $H^{\text{gen}}:n\in\mathbb{R}^d\to H^{\circ d}(\cdot,n)(0_d)\in\mathbb{R}^d$ is bijective. 
\end{condition}

\begin{remark}
Note that $H^{\text{gen}}$ corresponds to the generative process associated to $H$ that maps the (ordered) exogenous distribution $\mathbb{P}_{P\bm{N}}$ to the (ordered) observable one $\mathbb{P}_{P\bm{X}}$.
\end{remark}

\begin{remark}
It is also worth noting that $0_d$ can be arbitrarily replaced by any vector $z\in\mathbb{R}^d$ in the condition above.
\end{remark}

Then, if $H$ satisfies cond.~\ref{cond:counterfactual}, 
we can define 
$$\gamma^{\text{do}(T)}(P, \mathbb{P}_{\bm{N}}, H):=(I_d, P^T \circ H_T^{\text{gen}}\circ (H^{\text{gen}})^{-1} \circ P) \#\mathbb{P}_{\bm{X}}$$ 
the counterfactual distribution\footnote{This coupling is well defined as both $H_T^{\text{gen}}$ and $H^{\text{gen}}$ are Borel measurable, and the inverse of a Borel measurable function is Borel measurable, that is $(H^{\text{gen}})^{-1}$ is also Borel measurable.} of $\mathcal{S}_{\text{fp}}(P, \mathbb{P}_{\bm{N}}, H)$ under the intervention $T$. It is also worth noting that the right marginal of $\gamma^{\text{do}(T)}(P, \mathbb{P}_{\bm{N}}, H)$ is the interventional distribution under the intervention $T$, that is
$p_2\#\gamma^{\text{do}(T)}(P, \mathbb{P}_{\bm{N}}, H)=\mathbb{P}_{\bm{X}}^{\text{do}(T)}(P, \mathbb{P}_{\bm{N}}, H)$. Let us now state the problem of interest.

\textbf{Weak Partial Recovery Problem.}  Let $\mathcal{S}_{\text{fp}}(P, \mathbb{P}_{\bm{N}}, H)$ a fixed-point SCM generating $\mathbb{P}_{\bm{X}}:=p_1\#\gamma(P, \mathbb{P}_{\bm{N}}, H)$. Given $P$ and $\mathbb{P}_{\bm{X}}$, can we recover uniquely all the interventional and counterfactual distributions? That is for any differentiable and lower-triangular transformation $T$, can we recover uniquely $\mathbb{P}^{\text{do}(T)}_{\bm{X}}(P, \mathbb{P}_{\bm{N}}, H)$ and $\gamma^{\text{do}(T)}(P, \mathbb{P}, H)$?

We answer positively to this in the following main result. 
\begin{theorem}
\label{thm-gene-identification}
Let $\mathcal{S}_{\text{fp}}(P, \mathbb{P}_{\bm{N}}, H)$ a fixed-point SCM generating $\mathbb{P}_{\bm{X}}:=p_1\#\gamma(P, \mathbb{P}_{\bm{N}}, H)$. Let us assume that 
$H$ satisfies cond.~\ref{cond:counterfactual}, $\mathbb{P}_{\bm{N}}\in\mathcal{P}_{cc}(\mathbb{R})^{\otimes d}$ and $\mathbb{P}_{\bm{X}}\in\mathcal{P}_c(\mathbb{R}^d)$. Then, 
$\mathcal{A}_P^{\text{INV}}(\mathbb{P}_{\bm{X}}):=\{(P,\mathbb{P},F)\in \mathcal{A}_P(\mathbb{P}_{\bm{X}}):~\mathbb{P}\in\mathcal{P}_{cc}(\mathbb{R})^{\otimes d},~F~\text{satisfies cond. \ref{cond:counterfactual}}\}$ is non-empty, and for any $(P,\mathbb{P},F)\in \mathcal{A}_P^{\text{INV}}(\mathbb{P}_{\bm{X}})$, and for any differentiable and lower-triangular map $T$, we have
\begin{align*}
   \mathbb{P}^{\text{do}(T)}_{\bm{X}}(P, \mathbb{P}_{\bm{N}}, H) &= \mathbb{P}^{\text{do}(T)}_{\bm{X}}(P, \mathbb{P}, F)\\
   \gamma^{\text{do}(T)}(P, \mathbb{P}_{\bm{N}}, H)&=\gamma^{\text{do}(T)}(P, \mathbb{P}, F).
\end{align*}
In addition, for any $\mathbb{Q}\in\mathcal{P}_{cc}(\mathbb{R})^{\otimes d}$, and by denoting $\mathcal{A}_P^{\text{INV}}(\mathbb{Q},\mathbb{P}_{\bm{X}}):=\{(P,\mathbb{Q},F)\in\mathcal{A}_P^{\text{INV}}(\mathbb{P}_{\bm{X}})\}$, then $\mathcal{A}_P^{\text{INV}}(\mathbb{Q},\mathbb{P}_{\bm{X}})$ is not empty. That is there always exists an invertible fixed-point SCM with exogenous distribution $\mathbb{Q}$, and topological order $P$ generating $\mathbb{P}_{\bm{X}}$ (and which has the same causal distributions).
\end{theorem}
The theorem above has two important consequences: (1) that any invertible fixed-point SCM with topological ordering $P$ generating $\mathbb{P}_{\bm{X}}$, that is any element in $\mathcal{A}_P^{\text{INV}}(\mathbb{P}_{\bm{X}})$, recovers the true interventional and counterfactual distributions uniquely, and (2) that we can arbitrarily choose the exogenous distribution to recover these causal distributions, that is we can impose $\mathbb{Q}:=\mathcal{N}(0_d,I_d)$ and recover any fixed-point SCM in $\mathcal{A}_P^{\text{INV}}(\mathbb{Q},\mathbb{P}_{\bm{X}})$ to obtain the causal distributions. These results are, to the best of our knowledge, the weakest conditions obtained to ensure recovery from observations when the TO is known.

In the rest of the paper, we aim at recovering the three components constituting a fixed-point SCM, that are $P$, $\mathbb{P}_{\bm{N}}$ and $H$, from observations only in the ANM setting  and leave the more general cases for future work. In section~\ref{sec:topological}, we describe our approach to amortize the learning of a zero-shot TO inference method, enabling the recovery of $P$ from observations only. In section~\ref{sec:scm_learning}, we present our attention-based parameterization of fixed-point SCMs, and we leverage our partial recovery results in the ANM case to recover uniquely $H$ and $\mathbb{P}_{\bm{N}}$ from observations given $P$.

\section{Amortized Learning of TO}
\label{sec:topological}
We now present the first component of our causal generative model, that aims at inferring in a zero-shot manner the topological ordering of the nodes from observational data. To do so, we propose to amortize the learning of a model trained to sequentially predict the leaves of the graphs seen during training from their corresponding observations.


\begin{algorithm}[t]
    \caption{$\text{d-TOE}(\mathcal{M},(\mathcal{D}_{\text{tr}},\mathcal{G}_{\text{tr}}))$}
   \label{alg:one-forward-pass}
\begin{algorithmic}[1]
\STATE {\bfseries Input:} $\mathcal{M}$, $(\mathcal{D}_{\text{tr}},\mathcal{G}_{\text{tr}})$
\STATE Initialize $\text{d-TOE} = 0$.
   \FOR{$q=1$ {\bfseries to} $d$}
   \STATE  $\bm{p} \gets \mathcal{M}(\mathcal{D}_{\text{tr}}),\quad \bm{y} \gets  \mathcal{L}(\mathcal{G}_{\text{tr}})$
  \STATE $\text{d-TOE} \gets \text{d-TOE} + \textbf{BN} (\bm{p},\bm{y})$ 
   \STATE $\hat{\ell} \gets \argmax_{i} [\bm{p}]_i,\quad \ell \gets \mathcal{B}(\bm{y},\hat{\ell})$
   \STATE $\mathcal{D}_{\text{tr}}\gets \mathcal{R}_1(\mathcal{D}_{\text{tr}},\ell), \quad  \mathcal{G}_{\text{tr}}\gets \mathcal{R}_2(\mathcal{G}_{\text{tr}},\ell)$
   \ENDFOR
\STATE Return $\text{d-TOE}$
\end{algorithmic}
\end{algorithm}

\textbf{Training Setting.} Given $K\geq 1$ training datasets and their associated DAGs $(\mathcal{D}_{\text{tr}}^{(1)},\mathcal{G}_{\text{tr}}^{(1)}),\dots,(\mathcal{D}_{\text{tr}}^{(K)},\mathcal{G}_{\text{tr}}^{(K)})$, obtained from $K$ synthetically generated SCMs, our goal here is to optimize a learnable architecture $\mathcal{M}$ that given the observations $\mathcal{D}_{\text{tr}}^{(k)}$ can predict a valid TO of $\mathcal{G}_{\text{tr}}^{(k)}$, and so for all $k\in\{1,\dots,K\}$. 

\textbf{Architecture.} We use the exact same encoder $\textbf{En}$ as the one proposed in~\cite{lorch2022amortized} in order to map a dataset $\mathcal{D}\in\mathbb{R}^{n\times d}$ with $n$ observational samples of $d$ endogenous variables, to a latent representation of the nodes $\textbf{En}(\mathcal{D})\in\mathbb{R}^{d\times d_h}$ where $d_h$ is the latent dimension. As we only need to predict whether a node is a leaf, we use a simple linear classifier $f$ to predict the logits of each node, given by $\mathcal{M}(\mathcal{D}):=f(\textbf{En}(\mathcal{D}))\in\mathbb{R}^d$.

\textbf{Training Procedure.} To train the model $\mathcal{M}$ to infer TOs in a zero-shot manner, we propose to successively infer the leaves of the graphs seen during training in the topological order. To formalize the procedure, let us first introduce some operators. We define $\mathcal{R}_1$ the operator that for any dataset $\mathcal{D}\in\mathbb{R}^{n\times d}$ and index $q\in\{1,\dots,d\}$ returns the same dataset where the $q$-th column has been removed, denoted $\mathcal{R}_1(\mathcal{D},q)\in\mathbb{R}^{n\times (d-1)}$. Similarly, we denote $\mathcal{R}_2$ the operator such that for graph $\mathcal{G}\in\{0,1\}^{d\times d}$ and index $q\in\{1,\dots,d\}$ returns the same graph where the $q$-th row and the $q$-th column have been removed, denoted $\mathcal{R}_2(\mathcal{G},q)\in\{0,1\}^{(d-1)\times (d-1)}$. We define also $\mathcal{L}$ the operator that for any DAG $\mathcal{G}\in\{0,1\}^{d\times d}$, returns a binary vector of size $d$ indicating its leaves $\mathcal{L}(\mathcal{G})\in\{0,1\}^d$, i.e. $\mathcal{L}(\mathcal{G})_k=1$ i.f.f $k$ is a leaf. We
denote for any binary vector $v\in\{0,1\}^d$ and index $q\in\{1,\dots,d\}$, the set $S_{v,q}:=\{k\in\{1,\dots,d\}\colon v_k=1\}$, and we 
define $\mathcal{B}$ the operator that returns a sampled index $\mathcal{B}(v,q)$ from either the Dirac distribution $\delta_{q}$ if $v_q=1$ or from the uniform distribution over $S_{v,q}$ otherwise. Finally we define the binary loss between some logits $\bm{p}:=[p_1,\dots,p_d]\in\mathbb{R}^d$ and a binary vector $\bm{y}:=[y_1,\dots,y_d]\in\{0,1\}^d$ as
$\text{BN}(\bm{p},\bm{y}):=
-\sum_{k=1}^{d} (y_k\log(\sigma(p_k)) +(1-y_k)\log(\sigma(-p_k))$
where $\sigma(x):=1/(1+\exp(-x))$ is the sigmoid function. 

We are now ready to present our training loss to learn $\mathcal{M}$. Given any pair $(\mathcal{D}_{\text{tr}}, \mathcal{G}_{\text{tr}})$, we introduce the differentiable topological ordering error (d-TOE) defined in  Algorithm~\ref{alg:one-forward-pass}, and we propose to learn $\mathcal{M}$ by minimizing 
\vspace{-0.2cm}
\begin{align*}
    \sum_{k=1}^K \text{d-TOE}(\mathcal{M},(\mathcal{D}_{\text{tr}}^{(k)}, \mathcal{G}_{\text{tr}}^{(k)}))\; .
\end{align*}
Note that the model $\mathcal{M}$ as well as all the operators involved in Alg.~\ref{alg:one-forward-pass} are fully parallelizable w.r.t the number of datasets, which allow us to compute d-TOE per batch of datasets.

\begin{remark}
While d-TOE requires $d$ successive calls of $\mathcal{M}$, the memory and time complexities of the backward passes are still linear w.r.t these, since we are not considering the gradient of either $\ell$ or $\hat{\ell}$ defined in line~$\bm{6}$ of Alg.~\ref{alg:one-forward-pass}. 
\end{remark}


\textbf{Sub-sampling of d-TOE.} In order to improve the scalability of the training procedure, we propose to compute d-TOE only on a subset of indices randomly sampled. More formally, let us denote $1\leq d_{\text{max}}\leq d$, the maximum number of indices to keep during training for computing the loss. Then for each training pair $(\mathcal{D}_{\text{tr}},\mathcal{G}_{\text{tr}})$ (or batch of pairs), we randomly sample a set of $d_{\text{max}}$ indices in $\{1,\dots,d\}$, and we only update the d-TOE in line $\bm{5}$ of Alg.~\ref{alg:one-forward-pass} if the current index $q$ of the $\textbf{for}$ loop is in the set. Note that if we choose $d_{\text{max}}=1$ the backward computation is equivalent to the one where only a single call of $\mathcal{M}$ is performed.


\textbf{Inference.} We summarize the zero-shot TO inference of the amortized model $\mathcal{M}$ in  algorithm~\ref{alg:inference}. Note that when $\mathcal{M}$ predicts a valid TO, Alg.~\ref{alg:one-forward-pass} and~\ref{alg:inference} return the same TO.

Next, we present the second component of our causal generative model, that is a new transformer-based architecture enabling the parametrization of fixed-point SCMs.
\section{Fixed-Point SCM Learning}
\label{sec:scm_learning}

Let $\mathcal{S}_{\text{fp}}(P, \mathbb{P}_{\bm{N}}, H)$ a fixed-point SCM generating $\gamma(P, \mathbb{P}_{\bm{N}}, H)$ with left marginal $\mathbb{P}_{\bm{X}}$ and let $\mathcal{D}:=[\bm{X}^{(1)},\dots, \bm{X}^{(n)}]^T$, a dataset of $n$ i.i.d. samples drawn from $\mathbb{P}_{\bm{X}}$. Our goal is to learn a generating fixed-point SCM of $\mathcal{A}_P(\mathbb{P}_{\bm{X}})$, given the samples $\mathcal{D}$ and the topological ordering $P$. To do so, we propose to parameterize $\mathcal{F}_d$ using an attention-based autoencoder. 



\subsection{Proposed Architecture}
To model $\mathcal{F}_d$ in a learnable fashion, we design an attention-based architecture that comprises four stages: (i) a high dimensional causal embedding of the ordered samples that preserves the causal structure of the generating SCM, (ii) a causal attention mechanism to model ordered DAGs, (iii) a causal encoder that parameterizes $\mathcal{F}_d$ in a latent space, and (iv) a causal decoder that brings back the encoded samples to the original space while preserving the causal structure.

\textbf{Causal Embedding.} This layer maps the samples $(\bm{X},\bm{N})\sim \gamma(P, \mathbb{P}_{\bm{N}}, H)$ into a higher dimensional space  without modifying the causal structure of the generating SCM $\mathcal{S}_{\text{fp}}(P, \mathbb{P}_{\bm{N}}, H)$. To do so, we embed the ordered samples by considering two diagonal maps $E_1, E_2:\mathbb{R}^d\to\mathbb{R}^{d\times D}$ , with $D\gg 1$ the embedding dimension, defined as
\vspace{-0.02cm}
\begin{align*}
    E_{i}(w):=[w_1 * \theta_{i,1},\dots,w_d *\theta_{i,d}]^T + \textbf{Pos}\in\mathbb{R}^{d\times D}
\end{align*}
\vspace{-0.05cm}
where $w:=[w_1,\dots,w_d]\in\mathbb{R}^d$, $\bm{\theta}_i:=[\theta_{i,1},\dots,\theta_{i,d}]^T\in\mathbb{R}^{d\times D}$ with $\theta_{i,q}\in\mathbb{R}^{D}$ some learnable parameters for $i\in\{1,2\}$ and $\textbf{Pos}\in\mathbb{R}^{d\times D}$ a learnable matrix. Then we define our embedded samples as $\bm{X}_{\text{emb}}:=E_1(P\bm{X})$ and $\bm{N}_{\text{emb}}:=E_2(P\bm{N})$. We show the law of $(\bm{X}_{\text{emb}},\bm{N}_{\text{emb}})$ is the solution of a latent fixed-point SCM with the same causal structure as the generating one $\mathcal{S}_{\text{fp}}(P, \mathbb{P}_{\bm{N}}, H)$ in proposition~\ref{prop:enc}.

\textbf{Causal Attention.} 
We propose to encode the causal graph of the ordered nodes with an attention matrix. Before doing so, let us first recall the definition of the standard attention. Given a key, and a query, denoted respectively $K,Q\in\mathbb{R}^{d\times d_{\text{head}}}$ with $d_{\text{head}}$ the dimension of a single head, and a (potential) mask $M\in\{0,+\infty\}^{d\times d}$, the attention matrix is defined as
 $\text{A}_M(Q,K):=\text{softmax}((QK^{T} - M)/\sqrt{D})$
where for $W\in\mathbb{R}^{d\times d}$, $[\text{softmax}(W)]_{i,j}:=\exp(W_{i,j})/\sum_{k}\exp(W_{i,k})$. By viewing $Q$ and $K$ as two sequences of $d$ nodes living in $\mathbb{R}^{d_{\text{head}}}$, the attention matrix can be interpreted as a continuous graph explaining the relationships between the nodes. However, the $\text{softmax}$ forces all the rows to sum to $1$, thus preventing the use of attention to model DAGs, since each node would have at least one parent. We propose to relax this constraint in the following.

\begin{definition}[Causal Attention]
For $Q,K\in\mathbb{R}^{d\times d_{\text{head}}}$, and $M\in\{0,+\infty\}^{d\times d}$  the causal attention matrix $\text{CA}_M(Q,K)$ is defined
as 
\begin{align*}
\text{CA}_M(Q,K):=\frac{\exp((QK^T - M)/\sqrt{D})}{\mathcal{V}(\exp((QK^T - M)/\sqrt{D})\mathbf{1}_d)}
\end{align*}
where for $v:=[v_1,\dots,v_d]\in\mathbb{R}^d_{+}$ and $i\in\{1,\dots,d\}$, $[\mathcal{V}(v)]_i:=v_i$ if $v_i\geq 1$ and $[\mathcal{V}(v)]_i:=1$ otherwise.
\end{definition}

Now the rows of $ \text{CA}_M(Q,K)$ can sum to any values between $[0,1]$, and therefore can be used to model any DAGs. In the following, we consider a specific masking, that is $M_{i,j}:=0$ if $i<j$ and $M_{i,j}:=+\infty$ otherwise, to encode Cond.~\ref{cond:structure} on $\text{Jac}_1 H$ in the causal attention.

\begin{remark}
It is worth noting our causal attention is a strict relaxation of the standard attention viewed as the solution of a partial and entropic optimal transport problem~\cite{cuturi2013sinkhorn}. See Appendix~\ref{sec:arch} for more details.
\end{remark}
\begin{remark}
The model uses multi-head attention, but we have presented a single head for better readability.
\end{remark}

\textbf{Causal Encoder.} To encode the embedded samples, we consider a transformer-like encoder~\cite{vaswani2017attention} using our causal attention. More formally, given the embedded samples, $(\bm{X}_{\text{emb}}, \bm{N}_{\text{emb}})$, we consider the following encoder layer defined as:
\begin{equation*}
\label{def:DL}
\begin{gathered}
\mathcal{C}(\bm{X}_{\text{emb}}, \bm{N}_{\text{emb}}):=\\
h(\text{CA}_M(\bm{N}_{\text{emb}}W_Q,\bm{X}_{\text{emb}}W_K)\bm{X}_{\text{emb}}W_V + \bm{N}_{\text{emb}})
\end{gathered}
\end{equation*}
where $W_Q, W_K, W_V\in\mathbb{R}^{D\times D}$ are learnable parameters, $h(x):= \text{LN} \circ (\text{I}_{D} + \text{MLP})\circ\text{LN}(x)$ applied point-wise on each row, and $\text{LN}$ and $\text{MLP}$ denote a layer norm operator and a multi-layer perceptron respectively. Then starting from $\bm{N}^{(0)}_{\text{emb}}:=\bm{N}_{\text{emb}}$, we compute for $k\in\{0,\dots,L-1\}$:
\begin{align*}
    \bm{N}^{(k+1)}_{\text{emb}}:=\mathcal{C}( \bm{X}_{\text{emb}}, \bm{N}^{(k)}_{\text{emb}}),
\end{align*}
where for each $k$ a new encoder layer $\mathcal{C}$ is instantiated. The causal decoder is then defined as $\mathcal{C}_L:(x,n)\to \mathcal{C}(x,\cdot)^{\circ L}(n)$, and we show that it defines a valid fixed-point SCM in the latent space. See proposition~\ref{prop-causal-enc}.

\textbf{Causal Decoder.} To decode the samples without affecting the causal structure, we propose to consider a simple parametric function $\mathcal{J}$ defined for $x:=[x_1,\dots,x_d]^T\in\mathbb{R}^{d\times D}$ as $\mathcal{J}(x):=[\langle x_1,w_1\rangle,\dots,\langle x_d,w_d\rangle]$, where, $\langle \cdot,\cdot\rangle$ denotes the inner product, and $w_i\in\mathbb{R}^D$ are learnable parameters. We show that $\mathcal{J}$ allows to preserve the causal structure of the encoded samples in proposition~\ref{prop:cond-dec}.

Finally, we can introduce our final architecture $\mathcal{T}$ defined for $x,n\in\mathbb{R}^d$ as:
 \begin{align}
 \label{eq:final-arch}
     \mathcal{T}(x,n):=\mathcal{J}\circ\mathcal{C}_L(E_{1}(x),E_{2}(n))\in\mathbb{R}^d,
 \end{align}
which is guaranteed to be in $\mathcal{F}_d$ and to preserve the causal structure of SCMs during embedding and decoding phases.

\subsection{Training and Generation}

While our architecture can parameterize complex functions in $\mathcal{F}_d$, we consider a restricted setting in this paper. More formally, we consider the learning of additive noise models of the form $\mathcal{T}_{\text{ANM}}(x,n):=\mathcal{T}(x,0) + n$ where $\mathcal{T}$ is defined as in~\eqref{eq:final-arch} and we leave the general case for future works.

\textbf{Training.} To train such a model, we propose to minimize the mean squared error (MSE), that is:
\begin{align}
\label{eq:MSE-training}
    \mathbb{E}_{\bm{X}\sim \hat{\mathbb{P}}_{\bm{X}}}\Vert \bm{X} - P^{T}\mathcal{T}_{\text{ANM}}(P\bm{X},0_d)\Vert_2^2.
\end{align} 
where $\hat{\mathbb{P}}_{\bm{X}}:=1/n\sum_{i=1}^n\delta_{\bm{X}^{(i)}}$ is the empirical distribution of the observations and $P$ is the TO of the generating SCM $\mathcal{S}_{\text{fp}}(P, \mathbb{P}_{\bm{N}}, H)$. Thanks to our partial recovery result obtained in Proposition~\ref{prop:anm}, We show in proposition~\ref{prop:anm-id-train} that in the limit of infinite samples, the minimization of our objective~\eqref{eq:MSE-training} enables to recover uniquely the generating function $H$ if $H\in\mathcal{F}_d^{ANM}$.

\begin{algorithm}[tb]
   \caption{Generative Procedure of $\mathcal{T}_{\text{ANM}}$}
   \label{alg:sampling}
\begin{algorithmic}
\STATE {\bfseries Input:} 
$\mathcal{T}_{\text{ANM}}$, $P$, $g_1,\dots,g_k$,
\STATE Initialize $\tilde{\bm{X}}=0_d$, and $U_1\dots,U_k$ $d$ i.i.d from $\mathbb{U}$
\STATE $\tilde{N}_k \gets g_k(U_k)$, 
$\forall k\in\{1,\dots,d\},\quad \tilde{\bm{N}}\gets [\tilde{N}_1,\dots,\tilde{N}_d]$
\STATE $\tilde{\bm{X}}\gets \mathcal{T}_{\text{ANM}}(\cdot,P\bm{\tilde{N}})^{\circ d}(\tilde {\bm{X}})$
\STATE $\tilde{\bm{X}}\gets P^T\tilde{\bm{X}}$
\STATE Return $(\tilde{\bm{X}}, \tilde{\bm{N}})$
\end{algorithmic}
\end{algorithm}

\textbf{Generative Modeling.} Once $\mathcal{T}_{\text{ANM}}$ is trained, we propose to estimate simple 1-dimensional functions to generate new samples from our learned SCM. To do so, we first estimate the marginals of $\mathbb{P}_{\bm{N}}$, denoted $\hat{\mathbb{P}}_{N_k}$for $k\in\{1,\dots,d\}$, using the predicted noises $\tilde{\bm{N}}^{(i)}:=\bm{X}^{(i)} - P^T\mathcal{T}_{\text{ANM}}(P\bm{X}^{(i)},0_d)$. 
Then we propose to solve the following 1-d problems:
\vspace{-0.1cm}
\begin{align}
\label{eq:simple-1-d}
\min_{g_k:\mathbb{R}\to\mathbb{R}} \text{OT}( g_k\#\hat{\mathbb{U}},\hat{\mathbb{P}}_{N_k}),\quad \forall~k\in\{1,\dots,d\},
\end{align}
where $\text{OT}$ is the optimal transport distance~\cite{villani2009optimal} and $\hat{\mathbb{U}}$ is the empirical measure of the uniform on $[0,1]$. These problems can be solved by estimating the quantile functions of each $\mathbb{P}_{N_k}$~\cite{peyre2019computational}. We summarize the generative process in Alg.~\ref{alg:sampling} and show in proposition~\ref{prop:generation-noise}, that in the limit of infinite samples, this procedure generates samples from $\gamma(P,\mathbb{P}_{\bm{N}}, H)$ if $H\in\mathcal{F}_d^{ANM}$.


\section{Experiments}
\label{sec:experiment}

We start by evaluating individually the performances of each component of our causal generative model, that are, the zero-shot TO inference method $\mathcal{M}$ in section~\ref{sec:eval-amortization}, and the fixed-point SCM parameterization $\mathcal{T}_{\text{ANM}}$ in section~\ref{sec:eval-fp-scm}. Finally we benchmark our final causal model, obtained by combining them, against various baselines in section~\ref{sec:benchmark}. 

\subsection{Evaluation of $\mathcal{M}$}
\label{sec:eval-amortization}

\textbf{Data-generating Process.} We reproduce the procedure proposed in~\cite{lorch2022amortized} to generate synthetic datasets and their associated DAGs using randomly sampled SCMs. More precisely, we consider two distributions of SCMs denoted $\mathbb{P}_{\text{IN}}$ and $\mathbb{P}_{\text{OUT}}$. In $\mathbb{P}_{\text{IN}}$, the graphs are sampled according  Erdos-Renyi~\cite{erdds1959random} and scale-free models~\cite{barabasi1999emergence}, while in $\mathbb{P}_{\text{OUT}}$, we consider Watts-Strogatz~\cite{watts1998collective} and stochastic block models ~\cite{holland1983stochastic}. We simulate homoscedastic Gaussian noise
in $\mathbb{P}_{\text{IN}}$ but consider heteroscedastic Laplacian noise in $\mathbb{P}_{\text{OUT}}$. Finally both $\mathbb{P}_{\text{IN}}$ and $\mathbb{P}_{\text{OUT}}$ use randomly sampled linear (LIN) and nonlinear functions of random Fourier features (RFF) to model functional relationships, but in $\mathbb{P}_{\text{OUT}}$, we sample the parameters of these functions from a range different from that of $\mathbb{P}_{\text{IN}}$. Finally, we use $\mathbb{P}_{\text{IN}}$ to amortize the training of $\mathcal{M}$ and $\mathbb{P}_{\text{OUT}}$ to evaluate its out-of-distribution (O.O.D) performances. 

\begin{algorithm}[tb]
   \caption{$\text{TOS}(\hat{P},\mathcal{G})$}
   \label{alg:tos}
\begin{algorithmic}
\STATE {\bfseries Input:} $\hat{P}$, $\mathcal{G}$
\STATE Initialize $\mathcal{G}_{\hat{P}}=\hat{P}\mathcal{G}^T\hat{P}^T$, and $M\in\{0,1\}^{d\times d}$ s.t. $M_{i,j}=0$ if $i\leq j$ and $M_{i,j}=1$ otherwise.
\STATE $\bm{\ell}\gets (M \odot \mathcal{G}_{\hat{P}})\mathbf{1}_d$,\quad $\text{TOS}\gets \sum_{i=1}^{d} \mathds{1}_{ \bm{\ell}_i\geq 1}$
\STATE Return $1 -\text{TOS} / (d-1) $
\end{algorithmic}
\end{algorithm}

\begin{figure}[!h]
\centering
\includegraphics[width=.9\linewidth]{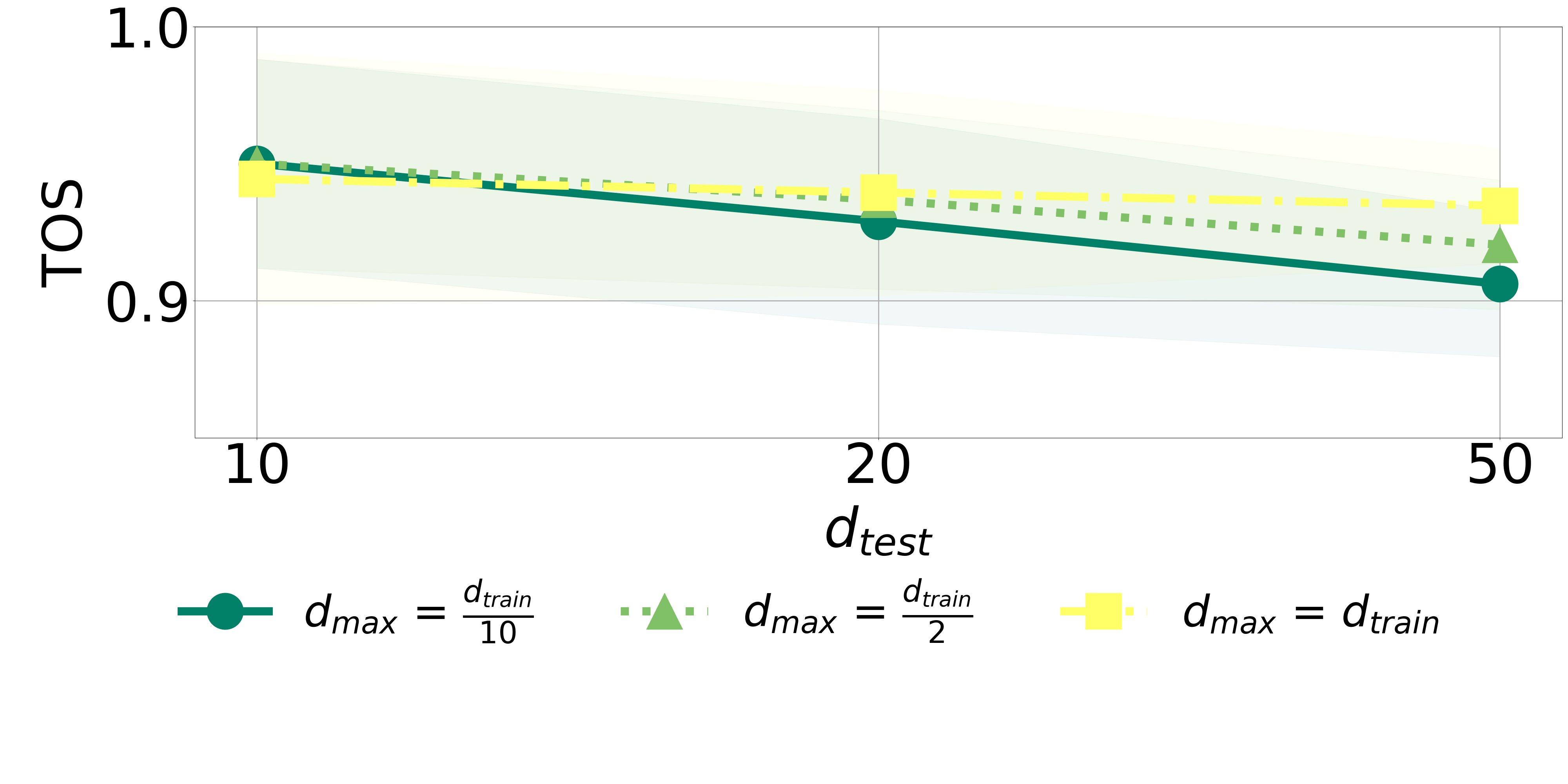}
\caption{We compare the performances of three models $\mathcal{M}$ trained on datasets of $n_{\text{train}}=200$ samples in $d_{\text{train}}=20$, but with different value for $d_{\text{max}}$. We measure their $\text{TOS}$ on the aggregation of both O.O.D metadatasets $\text{LIN}~\textbf{OUT}$ and $\text{RFF}~\textbf{OUT}$ for $d_{\text{test}}\in\{10,20,50\}$ and show as well the standard deviations. Note that we test on larger instance problems than seen during training when $d_{\text{test}}=50$.\label{fig:leaf-ablation-out}}
\vskip -0.1in
\end{figure}

\textbf{Train Datasets.} 
During training, we generate $K\simeq 200k$ datasets with their DAGs according to $\mathbb{P}_{\text{IN}}$, each consisting of $n_{\text{train}}=200$ i.i.d samples with $d_{\text{train}}=100$ dimensions. 

\textbf{Test Datasets.} To test our model, we build 2 in and 2 out-of-distribution metadatasets, each consisting of several datasets. More precisely, $\text{LIN}~\textbf{IN}$ consists of 27 synthetic datasets newly generated according to $\mathbb{P}_{\text{IN}}$, and using only linear functions. For each dimension $d_{\text{test}}\in \{10,20,50\}$, and possible choice for the graph distribution, we randomly generate 3 datasets with $n_{\text{test}}=10k$. Similarly, $\text{RFF}~\textbf{IN}$ is generated from $\mathbb{P}_{\text{IN}}$ with only RFF functions. Finally  $\text{LIN}~\textbf{OUT}$ and $\text{RFF}~\textbf{OUT}$ are generated with the same splitting of the functional relationships but according to $\mathbb{P}_{\text{OUT}}$.

\textbf{Evaluation Metrics.} To evaluate the inferred TO, we introduce the topological ordering score (TOS), a measure that quantifies precisely the quality of the TO inferred. More formally, for a predicted TO $\hat{P}\in\Sigma_d$ and a DAG $\mathcal{G}\in\{0,1\}^{d\times d}$, we define the topological ordering score as presented in Algorithm~\ref{alg:tos}. This score counts exactly the number of nodes that are correctly ranked topologically. 

\begin{table}[!h]
\caption{Training Memory Usage of $\mathcal{M}$ with $d_{\text{train}}=20$.}
\label{table:mem}
\begin{center}
\begin{small}
\begin{sc}
\begin{tabular}{lccc}
\toprule
$d_{\text{max}}$ & $d_{\text{train}}/10$ &  $d_{\text{train}}/2$ & $d_{\text{train}}$ \\
\midrule
Memory (GiB) & $ 3.35$& $6.59$& $8.77$ \\
\bottomrule
\end{tabular}
\end{sc}
\end{small}
\end{center}
\end{table}

\textbf{Results (Effect of Sub-sampling).} Firstly, we investigate the effect of the sub-sampling strategy introduced in section~\ref{sec:topological} to compute $\text{d-TOE}$ (Alg.~\ref{alg:one-forward-pass}) during training. For this experiment, we train smaller models $\mathcal{M}$ on datasets with $n_{\text{train}}=200$ samples and $d_{\text{train}}=20$ dimensions, and compare their performances when varying $d_{\text{max}}\in\{2,10,20\}$. In figure~\ref{fig:leaf-ablation-out}, we show that with only $10\%$ of \text{d-TOE}, we obtain similar performance as the full training $d_{\text{max}}=d_{\text{train}}$. In addition, in Table~\ref{table:mem}, we show empirically the linear dependency of the memory usage during training w.r.t $d_{\text{max}}$.

\textbf{Results (Generalization Performance).} Next, we evaluate the performance of our larger model $\mathcal{M}$ trained on datasets of $n_{\text{train}}=200$ samples in $d_{\text{train}}=100$, and we use $d_{\text{max}}=50$. We also generate larger instances of each test problem, allowing $d_{\text{test}}\in\{100,200\}$. 
In Figure~\ref{fig:leaf-pred-eval}, we show that our model is able to generalize on O.O.D datasets of smaller or equal size, and even to significantly larger problems. Note that on the O.O.D problems, our models show a drop of $10\%$ in performance when increasing the size of the problems. This is mostly due to the fact that the datasets considered in these metadatasets are out-of-distribution datasets sampled according to a distribution substantially different from that used for training $\mathcal{M}$.

\begin{figure}[!t]
\centering
\includegraphics[width=.9\linewidth]{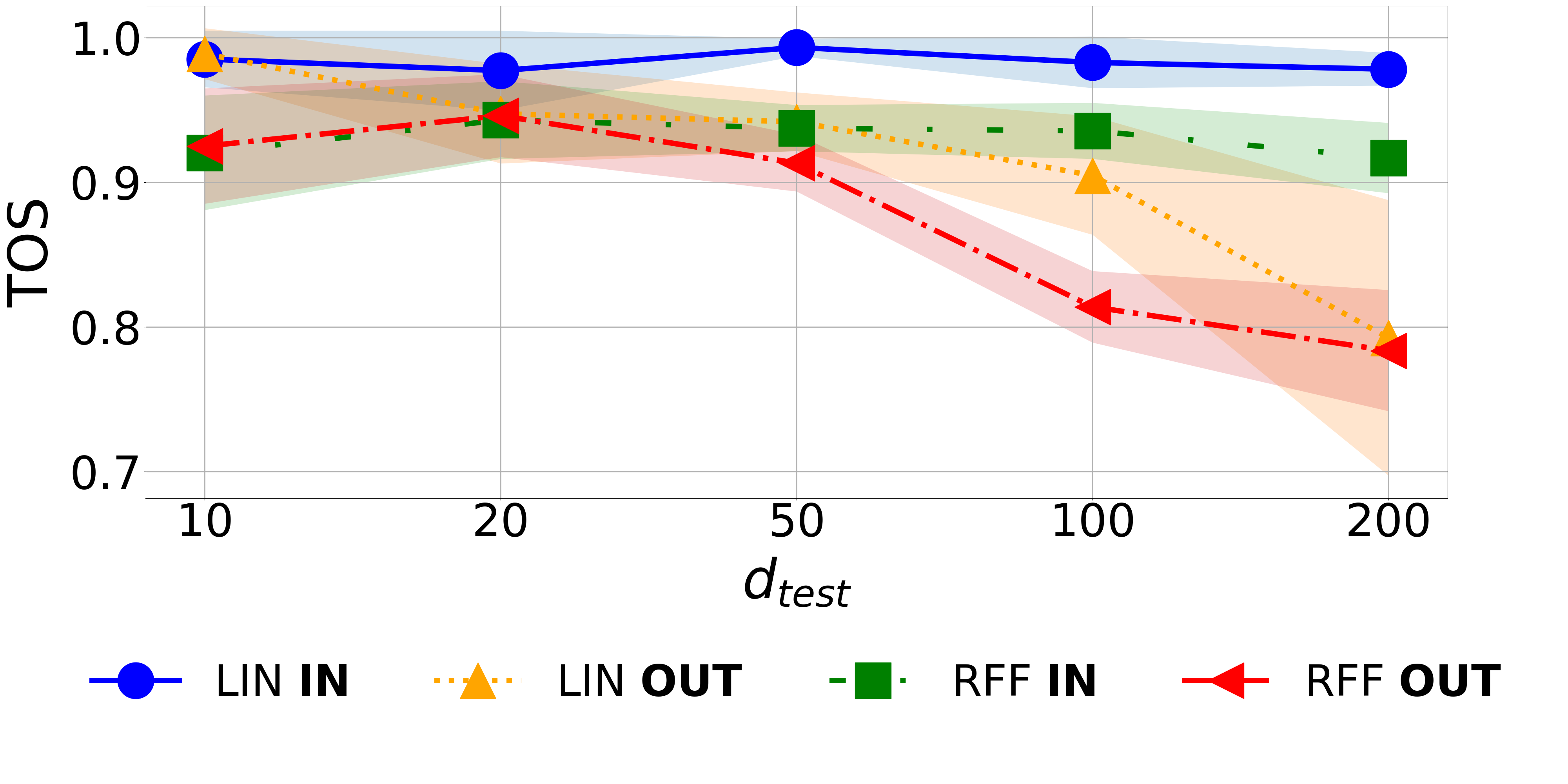}
\caption{For each of the four test metadatasets, we plot the $\text{TOS}$ obtained against the dimension $d_{\text{test}}$. For each curve, each point is obtained by averaging over all the test datasets of a given dimension. We also show the standard deviations.\label{fig:leaf-pred-eval}}
\end{figure}

\subsection{Evaluation of $\mathcal{T}_{\text{ANM}}$}
\label{sec:eval-fp-scm}

We evaluate the performances of our parameterization $\mathcal{T}_{\text{ANM}}$ for learning fixed-point SCMs when a true topological ordering is given on both causal discovery and inference tasks. 

\textbf{Datasets.} Besides reusing the synthetic metadatasets of section~\ref{sec:eval-amortization}, we consider three other settings where the causal graphs are accessible, namely C-Suite~\cite{geffner2022deep}, SynTReN~\cite{van2006syntren, lachapelle2019gradient} and the real-world dataset of protein measurements from~\cite{sachs2005causal}. They all consist of multiple datasets with continuous variables, except C-Suite where we discard the discrete and mixed type problems.

\textbf{Model Configuration.} We consider $\mathcal{T}_{\text{ANM}}$ with an embedding dimension of $D=128$, and $L=2$ layers. The causal attention mechanism uses $8$ heads with an embedding dimension of $d_{\text{head}}=32$. We do not hyper-tune the model on each specific instance, but use the same training configuration for all experiments. See appendix~\ref{sec:add-experiment} for details.

\vspace{-0.1cm}
\begin{table}[!h]
\caption{We compare the 
counterfactual predictions of $\mathcal{T}_{\text{ANM}}$ when trained with the true graph or the TO on various settings. We measure the re-scaled $\ell_2$ distance between the predicted counterfactual samples and the ground truth ones. The results presented are of the form $x/y~(z)$ where $x$ is the median, $y$ the mean and $z$ the standard deviation (std) w.r.t the number of datasets of the averaged errors.}
\label{table:cf-pred-self}
\begin{center}
\begin{small}
\begin{sc}
\begin{tabular}{lcc}
\toprule
Datasets & True $P$ & True $\mathcal{G}$ \\
\midrule
LIN \textbf{IN} &  0.037 / 0.066 (0.057) &  0.012 / 0.039 (0.060)\\
LIN \textbf{OUT} &  0.065 / 0.11 (0.084) &  0.017 / 0.034 (0.048)\\
RFF \textbf{IN} & 0.065 / 0.10 (0.089) & 0.033 / 0.059 (0.075)\\
RFF \textbf{OUT} & 0.11 / 0.12 (0.088) &  0.033 / 0.042 (0.040)\\
C-Suite & 0.026 / 0.032 (0.030) & 0.022 / 0.025 (0.022)\\
\bottomrule
\end{tabular}
\end{sc}
\end{small}
\end{center}
\vskip -0.1in
\end{table}

\textbf{Results (Graph Prediction).} We test the performances of $\mathcal{T}_{\text{ANM}}$ on DAG recovery problems. To obtain a binary graph from our model, we first estimate a continuous graph defined as the mean of its  absolute value Jacobian over samples, i.e. $\hat{\mathcal{G}_c}:=\mathbb{E}_{\bm{X}\sim \hat{P}_{\bm{X}}}|\text{Jac}_1 P^T \mathcal{T}_{\text{ANM}}(P\bm{X},0_d)|$. Then, we obtain the binary graph by applying a naive uniform threshold $\tau>0$, i.e. $\hat{\mathcal{G}}(\tau):=(\hat{\mathcal{G}_c}>\tau)$, thus discarding values smaller than $\tau$. In practice we propose to use $\tau=0.1$. To evaluate our prediction and the proposed rule, we compare the F1 scores obtained by $\mathcal{T}_{\text{ANM}}$ when trained given either the topological ordering or the causal graph, the latter being considered as the gold standard of our model. In figure~\ref{fig:f1-score-eval}, we show that, our model trained given the TO is still able to compete with the gold standard one, and the proposed rule, while not optimal, is able to keep most of the true non-zeros.

\begin{figure}[!t]
\centering
\includegraphics[width=.9\linewidth]{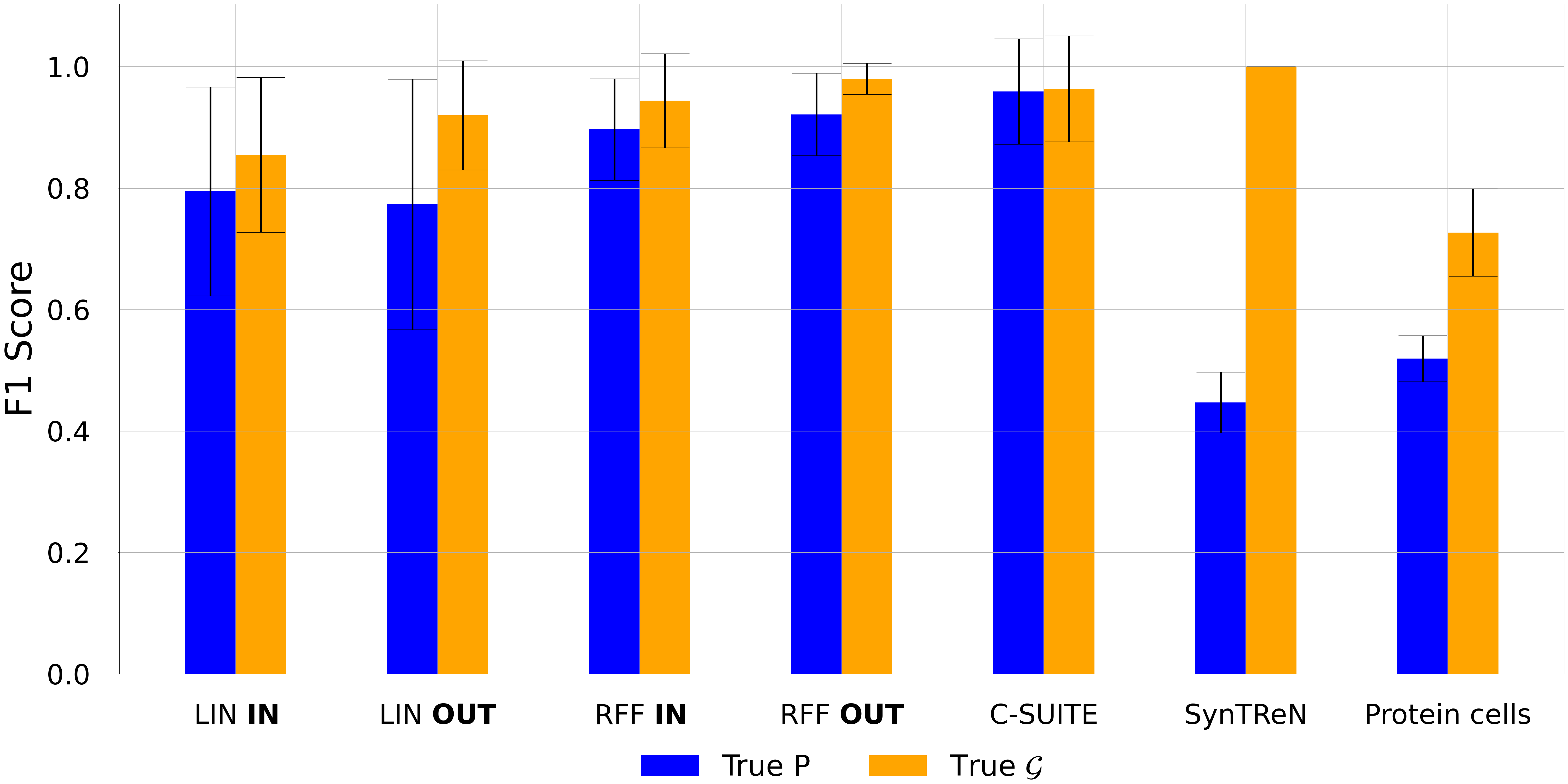}
\caption{We compare the F1 scores obtained by learning $\mathcal{T}_{\text{ANM}}$ with either the full graph or the TO on various settings. These score are obtained by comparing $\hat{\mathcal{G}}(0.1)$ with the ground truth graph and we show the averaged score over all instances of a given setting with their standard deviations.\label{fig:f1-score-eval}}
\end{figure}

\textbf{Results (Counterfactual Prediction).} Next, we evaluate the counterfactual predictions of $\mathcal{T}_{\text{ANM}}$. As we need to generate ground truth counterfactual samples (CF), we only focus on the test metadatasets of section~\ref{sec:eval-amortization} and C-Suite. For each setting, and each dataset, we randomly generate as many interventions as the number of nodes in the dataset, where each intervention is performed on 100 generated test samples. To measure the quality of a generated counterfactual sample, we measure the re-scaled $\ell_2$ distance to the ground truth, that is $r-\ell_2(x,\hat{x}):=\sqrt{\frac{1}{d}\sum_{i=1}^d \left(\frac{x_i - \hat{x}_i}{\sigma_i}\right)^2}$ where the $\sigma_i$ are the standard deviations of the variables $X_i$. Note that we divide the metric by $\sqrt{d}$ as we compare the results across various dimension choices. In table~\ref{table:cf-pred-self}, we show that $\mathcal{T}_{\text{ANM}}$ can recover almost perfectly the ground truth CF samples when learned with true DAGs and obtain $12\%$ mean errors at worst using true TOs.


\subsection{Full Pipeline Benchmarking}
\label{sec:benchmark}
Finally we evaluate our final causal generative modeling pipeline, obtained by combining $\mathcal{M}$ and $\mathcal{T}_{\text{ANM}}$, against various baselines on both causal discovery and counterfactual prediction tasks. More formally, given a new instance problem $\mathcal{D}$, we use our pre-trained zero-shot TO inference model $\mathcal{M}$ to predict a TO $\hat{P}$ from $\mathcal{D}$, and then learn $\mathcal{T}_{\text{ANM}}$ by minimizing~\eqref{eq:MSE-training} where $P$ is replaced by the predicted TO $\hat{P}$. We call our method FiP standing for Fixed-Point model.

\textbf{Baselines.} On causal discovery tasks, we compare our model with 
AVICI~\cite{lorch2022amortized},
PC~\cite{kalisch2007estimating}, GES~\cite{chickering2002optimal}, GOLEM~\cite{ng2020role}, DAG-GNN~\cite{yu2019dag}, GraN-DAG~\cite{lachapelle2019gradient}, DP-DAG~\cite{charpentier2022differentiable}, and DECI~\cite{geffner2022deep}. On counterfactual prediction tasks, we only compare with DECI and DoWhy~\cite{dowhy_gcm} trained with the predicted causal graph of AVICI, as other baselines do not provide this functionality in their codes.

\textbf{Results.} We show in tables~\ref{table:f1-pred-baseline} and~\ref{table:cf-pred-benchmark} that our model outperforms consistently all the other baselines on both causal discovery and counterfactual predictions tasks over the generated O.O.D test datasets LIN \textbf{OUT} and RFF \textbf{OUT}.

\begin{table}[!h]
\caption{We compare the directed F1 scores obtained by our model against various baselines on the out-of-distribution test metadatasets introduced in section~\ref{sec:eval-amortization}. The values reported are obtained by taking for each setting the mean over all the datasets as well as their standard deviations.}
\label{table:f1-pred-baseline}
\begin{center}
\begin{small}
\begin{sc}
\begin{tabular}{lcc}
\toprule
Datasets & LIN \textbf{OUT} & RFF \textbf{OUT} \\
\midrule
PC & 0.47 (0.14) & 0.40 (0.12)\\
GES & 0.56 (0.12) & 0.37 (0.060)\\
GOLEM  & 0.73 (0.29) & 0.31 (0.13)\\
DECI & 0.36 (0.13) & 0.74 (0.14)\\
GraN-DAG & 0.29 (0.19) & 0.50 (0.26)\\
DAG-GNN & 0.61 (0.19) & 0.44 (0.15)\\
DP-DAG & 0.17 (0.074) & 0.16 (0.067)\\
AVICI & 0.73 (0.16) & 0.74 (0.17)\\
FiP (Ours) & $\bm{0.76 (0.20)}$ & $\bm{0.81 (0.15)}$\\
\bottomrule
\end{tabular}
\end{sc}
\end{small}
\end{center}
\end{table}

\vspace{-0.2cm}
\begin{table}[!h]
\caption{We compare 
the counterfactual predictions obtained by our model against other baselines on the O.O.D metadatasets. We report the re-scaled $\ell_2$ errors to the ground truth. Note that we only report the mean and the standard deviation  w.r.t the number of datasets for space reason.}
\label{table:cf-pred-benchmark}
\begin{center}
\begin{small}
\begin{sc}
\begin{tabular}{lcc}
\toprule
Datasets & LIN \textbf{OUT} & RFF \textbf{OUT} \\
\midrule
DECI &  0.39 (0.29) &  0.18 (0.12)\\
DoWhy - AVICI &  0.20 (0.18) & 0.16 (0.096)\\
FiP (Ours) & $\bm{0.13~(0.10)}$ & $\bm{0.13~(0.096)}$\\
\midrule
FiP w. $\mathcal{G}$ &0.034~(0.048) & 0.042~(0.040)\\
DoWhy w. $\mathcal{G}$ & 0.0017~(0.0017) & 0.088~(0.072)\\
\bottomrule
\end{tabular}
\end{sc}
\end{small}
\end{center}
\vskip -0.1in
\end{table}
\section{Conclusion}
In this work, we introduce a new framework that defines SCMs as fixed-point problems on the ordered nodes. Based on this, we train a two-stage causal generative model, that infers in a zero-shot manner the TO, and learn the fixed-point SCM given the predicted ordering. We show that our model addresses causal discovery and inference tasks, and outperforms various baselines on O.O.D generated problems. For future work we aim to extend our method to a fully zero-shot SCM learning method, enabling paradigmatic shift towards assimilation of causal knowledge across domains, akin to the formulation of modern foundation models.

\nocite{langley00}

\newpage
\clearpage
\section*{Impact Statement}
This paper presents work whose goal is to advance the field of Machine Learning. There are many potential societal consequences of our work, none of which we feel must be specifically highlighted here. Nonetheless, the ability to accurately model and infer causal relationships is a powerful tool that could inform decision-making processes and policy formulation. At the same time, the misuse or misinterpretation of causal models could lead to incorrect conclusions and actions, especially in high-stakes domains such as public health, social science, and automated systems. Therefore, we stress the importance of critical examination, domain expertise, and the inclusion of fairness in the development and application of such models. 

\bibliography{biblio}
\bibliographystyle{icml2024}

\newpage
\appendix
\onecolumn

\section{Background}
\label{sec:background}
Structural Causal Models (SCMs) are widely used in the causal literature to express causal functional relationships between random variables. As they require a graph to represent the causal structure, we first review some basic graphical terminologies.

\textbf{Graph Terminology.} Let $d\geq 1$ an integer, $\bm{V}:=\{1,\dots, d\}$ a set of indices and $\mathcal{E}$ a subset of $\bm{V}^2$. Then, $\mathcal{G}:=(V,\mathcal{E})$ is called a graph on $d$ nodes $\bm{V}$ with edges $\mathcal{E}$. An edge $(i,j)\in\mathcal{E}$ is called directed if $(j,i)\notin \mathcal{E}$. The graph $\mathcal{G}$ is called directed if all its edges are directed. A node $i$ is called a parent of $j$ if $(i,j)\in \mathcal{E}$ and $(i,j)$ is directed, that is $(j,i)\notin \mathcal{E}$. We denote the set of parents of a node $j$ as $\textbf{PA}_{\mathcal{G}}(j)$ and its cardinal as $\textit{c}_j$. To refer explicitly to the parents of a node $j$,  we denote $(\text{pa}_1(j),\dots, \text{pa}_{\textit{c}_j}(j))$ the sequence of its parents ranked in the increasing order\footnote{That is: $\text{pa}_k(j) < \text{pa}_q(j)$ if $k < q$.}. A sequence of at least two nodes $(i_1,\dots, i_m)$ with $m\geq 2$, is called a directed path from $i_1$ to $i_m$ of $\mathcal{G}$ if for all $k\in[|1,m-1|]$, $(i_k,i_{k+1})$ is a directed edge of $\mathcal{G}$. A directed path from a node $i$ to itself is called a directed cycle. Finally $\mathcal{G}$ is called a directed acyclic graph (DAG) if it is directed and does not contain directed cycle.

\textbf{Topological Ordering.} An important notion from the graph terminology that is extensively used in this paper is the notion of topological ordering. When the graph $\mathcal{G}$ is a DAG, it is always possible to order the nodes in a specific manner. We call $j$ a descendant of $i$ in $\mathcal{G}$ if there exists a directed path from $i$ to $j$ in $\mathcal{G}$. We denote the set of all the descendants of a node $i$ as $\textbf{DE}_{\mathcal{G}}(i)$. If a node does not have any descendants, we call it a leaf node. If a node does not have any parents, we call it a root node. When $\mathcal{G}$ is a DAG, there exists a permutation $\pi$, that is a bijective mapping  $$\pi: \{1,\dots, d \}  \to \{1,\dots, d \}, $$ satisfying $\pi(i)< \pi(j)$ if $j\in\textbf{DE}_{\mathcal{G}}(i)$. We call such a permutation a topological ordering of $\mathcal{G}$ and it does not have to be unique. Note that the node $\pi^{-1}(1)$ is a root node, and the node $\pi^{-1}(d)$ is a leaf node. In the following we also denote $P_{\pi}\in\{0,1\}^{d\times d}$ the permutation matrix associated, that is $[P_{\pi}]_{i,j}=1$ if $\pi^{-1}(i)=j$ and $0$ otherwise, and $\Sigma_d$ the set of permutation matrices of size $d$.

\textbf{Structural Causal Models.} A structural causal model is a generative model that aims at modeling the causal relationships between random variables. The model consists of three main components: (i) a jointly independent distribution $\mathbb{P}_{\bm{N}}$ modeling the distribution of $d$ \emph{jointly independent} exogenous random variables $\bm{N}:=(N_1,\dots,N_d)$, (ii) a DAG $\mathcal{G}$ on $d$ nodes, and (iii) a sequence of $d$ measurable functions $(f_1,\dots,f_d)$. The definition of these functions depends both on the exogenous variables and, most importantly, on the graph. More formally, let $n_1,\dots,n_d\geq 1$, $d$ integers, $N_1,\dots, N_d$, $d$ jointly independent random variables on respectively $\mathbb{R}^{n_1},\dots, \mathbb{R}^{n_d}$ s.t. $(N_1,\dots,N_d)\sim\mathbb{P}_{\bm{N}}$, and $\mathcal{G}$ a DAG on $d$ nodes. Let also $t_1,\dots, t_d\geq 1$, $d$ integers, and for all $i\in \{1,\dots,d\}$, let $f_i$ a measurable function satisfying $f_i: \mathbb{R}^{p_i}\times\mathbb{R}^{n_i}\to \mathbb{R}^{t_i}$, where $p_i:=\sum_{k\in \textbf{PA}_{\mathcal{G}}(i)} t_k$ if $\textbf{PA}_{\mathcal{G}}(i)\neq \emptyset$,  $p_i:=0$ otherwise. Then, the SCM associated to $(\mathcal{G}, \mathbb{P}_{\bm{N}}, (f_1,\dots,f_d))$, is the 4-tuple  $(\mathcal{G}, \mathbb{P}_{\bm{N}}, (f_1,\dots,f_d), \mathcal{S}(\mathcal{G}, \mathbb{P}_{\bm{N}}, (f_1,\dots,f_d)))$, where $\mathcal{S}(\mathcal{G}, \mathbb{P}_{\bm{N}}, (f_1,\dots,f_d))$ is defined as the collection of the following $d$ (structural) equations on the $X_i$'s:
\begin{align}
\label{eq:scm-standard}
    X_i = f_i(\textbf{PA}_{\mathcal{G}}^{(i)}(\bm{X}), N_i),~\quad \forall i\in\{1,\dots,d\}
\end{align}
where we denote $\textbf{PA}_{\mathcal{G}}^{(i)}(\bm{X}):=[X_{\text{pa}_1(i)},\dots, X_{\text{pa}_{\textit{c}_i}(i)}]\in \mathbb{R}^{p_i}$. The random variables $X_i$ are implicitly defined as the solution of the system~\eqref{eq:scm-standard} which is unique thanks to the DAG structure of $\mathcal{G}$. In the following, we denote the set of all possible standard SCMs $\mathcal{S}$, that is the set of all possible 4-tuples of the form $(\mathcal{G}, \mathbb{P}_{\bm{N}}, (f_1,\dots,f_d), \mathcal{S}(\mathcal{G}, \mathbb{P}_{\bm{N}}, (f_1,\dots,f_d)))$. We also allow ourselves an abuse of notations and represent an SCM only by $\mathcal{S}(\mathcal{G}, \mathbb{P}_{\bm{N}}, (f_1,\dots,f_d))$, while we mean to represent the full 4-tuple  $(\mathcal{G}, \mathbb{P}_{\bm{N}}, (f_1,\dots,f_d), \mathcal{S}(\mathcal{G}, \mathbb{P}_{\bm{N}}, (f_1,\dots,f_d)))$.

\begin{remark}
Observe that here, we move from the original representation of parents viewed as a \textbf{set} of variables (e.g. as defined in~\cite{peters2017elements}), to the equivalent representation where $\textbf{PA}_{\mathcal{G}}^{(i)}(\bm{X})$ is viewed as a finite sequence of variables (or vector of variables) ordered in the increasing order w.r.t the natural order of the indices in the graph $\mathcal{G}$. 
\end{remark}

This general definition of SCM allows the existence of an edge $(j,i)$ in $\mathcal{G}$ that has no influence, meaning that the function $f_i$ can be independent of the variable $x_j$. In order to exclude such situations, we will assume in the following that the $f_i$'s always depend on all its variables. More formally, we consider the following assumption.

\begin{assumption}[Sturctural Minimality]
\label{ass-minimality}
We assume that for all $i\in\{1,\dots,d\}$, there does not exist a $k\in\{1,\dots,\textit{c}_i\}$ and a function $g_i:\mathbb{R}^{p_i-t_{\text{pa}_k(i)}}\times \mathbb{R}^{n_i}\to\mathbb{R}$, such that for all $z_1\in\mathbb{R}^{t_{\text{pa}_1(i)}},\dots,z_{c_i}\in\mathbb{R}^{t_{\text{pa}_{\textit{c}_i}(i)}}$ and $n_i\in\mathbb{R}^{n_i}$:
\begin{align*}
&f_i(z_{1},\dots, z_{c_i}, n_i) =\\ &g_i(z_{1},\dots,z_{k-1},z_{k+1},\dots,z_{c_i},n_i).
\end{align*}
\end{assumption}

Therefore a SCM defines a causal generative process of the $X_i$'s obtained from the exogenous variables $N_i$'s, where the causal structure is given by $\mathcal{G}$, and the functional relationships are given by $f_i$'s. In the following we often refer the exogenous variables $N_i$ as the noise variables. Let us now present two mild assumptions that we hold in the paper.
\begin{assumption}
\label{ass-1d}
Let $\mathcal{S}(\mathcal{G}, \mathbb{P}_{\bm{N}}, (f_1,\dots,f_d))$ an SCM as defined in~\eqref{eq:scm-standard}. Assume that for all $i\in\{1,\dots,d\}$, $n_i=t_i=1$. 
\end{assumption}
This assumption restricts our framework to the case where both the exogenous variables $N_i$ and the generated variables $X_i$ are real-valued. It is made mostly to simplify our notations. 

\begin{assumption}
\label{ass-diff}
Let $\mathcal{S}(\mathcal{G}, \mathbb{P}_{\bm{N}}, (f_1,\dots,f_d))$ an SCM as defined in~\eqref{eq:scm-standard}. Assume that the $f_i$'s are differientiable.
\end{assumption}
This assumption is used in section~\ref{sec:scm} where we revisit the definition of an SCM and propose another formalism on which we build our proposed causal generative model.

\paragraph{Background on Triangular Maps.} Let us now recall some basic facts on triangular maps, which are functions that will be largely exploited in the rest of the paper. A differentiable and triangular map $T$ from $\mathbb{R}^d$ to $\mathbb{R}^d$ is a function such that its Jacobian $Jac_x T$ is a lower (or upper) triangular matrix, and so for any $x\in\mathbb{R}^d$. This property ensures that for all $i\in\{1,\dots,d\}$,  $x\in\mathbb{R}^d \to [T(x)]_i\in\mathbb{R}$ is a function of $[x_1,\dots,x_i]\in\mathbb{R}^{i}$. Such maps have been studied in various domains of ML, especially for generative modeling \cite{kingma2016improved,papamakarios2017masked, pmlr-v151-irons22a}, as the determinant of the Jacobian of these functions can be efficiently computed.

In fact, these maps are also closely related to SCMs, as once the nodes in the graph are ordered in a topological order, there always exists a triangular function that maps the exogenous variables $P_{\pi}\bm{N}$ to the endogenous ones $P_{\pi}\bm{X}$, that is there exists $T$ triangular such that $T(P_{\pi}\bm{N})=P_{\pi}\bm{X}$. This can be seen by unrolling the equations~\eqref{eq:scm-standard} in a topological ordering $\pi$ associated to the underlying graph $\mathcal{G}$. However, the function $T$ obtained by unrolling these equations can only provide an access to the generative process of the causal system (or the SCM), but not the causal system itself, which is of capital interest if one wants to perform counterfactual operations.

In this work, we generalize this viewpoint by considering the effect of the ordering on the causal system itself, which leads us to the proposed definition of fixed-point SCMs.
\section{Fixed-Point SCMs}

\subsection{An Equivalent Representation of Standard SCMs}
\label{sec:random-var-fp}
The goal of this section is to obtain an equivalent formulation of standard SCMs viewed as fixed-point problems. However, we will see that this reparameterization will create a strong dependency between the functional relationship involved in the fixed-point problem and the graph $\mathcal{G}$. 

Following Appendix~\ref{sec:background}, let $\mathcal{S}(\mathcal{G}, \mathbb{P}_{\bm{N}}, (f_1,\dots,f_d))$ an SCM as defined in~\eqref{eq:scm-standard}. Let us now define
$F^{\mathcal{G}}:\mathbb{R}^d\times \mathbb{R}^d\to \mathbb{R}^d$, satisfying $\forall i\in\{1,\dots,d\},~~x,n\in\mathbb{R}^d$:
\begin{align*}
F_i^{\mathcal{G}}(x,n):= f_i(x_{\text{pa}_1(i)},\dots,x_{\text{pa}_{\textit{c}_i}(i)},n_i),
\end{align*}
where $F^{\mathcal{G}}(x,n):=[F_1^{\mathcal{G}}(x,n),\dots,F_d^{\mathcal{G}}(x,n)]$, $x:=[x_1,\dots, x_d]$ and $n:=[n_1,\dots, n_d]$. Then the system of equations introduced in~\eqref{eq:scm-standard} can be equivalently reformulated as the following fixed-point problem on $\bm{X}$:
\begin{align}
\label{scm_fp}
\bm{X}=F^{\mathcal{G}}(\bm{X},\bm{N}).
\end{align}
Observe that here, we have made explicit the dependence between $F^{\mathcal{G}}$ and the graph $\mathcal{G}$. This is because $F^{\mathcal{G}}$ is not allowed to be any function from $\mathbb{R}^d\times\mathbb{R}^d\to\mathbb{R}^d$, but it has to satisfy the structure of the graph $\mathcal{G}$. More simply put, $F^{\mathcal{G}}$ is the composition of the unconstrained functions $(f_1,\dots,f_d)$ and the operators $\textbf{PA}_{\mathcal{G}}^{(i)}(x):=[x_{\text{pa}_1(i)},\dots, x_{\text{pa}_{\textit{c}_i}(i)}]\in \mathbb{R}^{c_i}$ that projects $x$ to its coordinates given by the parents of $i$ in $\mathcal{G}$. In addition, one can recover the graph $\mathcal{G}$ from $F^{\mathcal{G}}$ as shown in the following Lemma.
\begin{lemma}
Let $\mathcal{S}(\mathcal{G}, \mathbb{P}_{\bm{N}}, (f_1,\dots,f_d))$ an SCM and $F^{\mathcal{G}}$ as defined in~\eqref{scm_fp}. Then $(i,j)\in\mathcal{E}$ i.i.f we have $(x,n)\to\frac{\partial F_j^{\mathcal{G}}}{\partial x_i}(x,n)\neq \bm{0}$.
\end{lemma}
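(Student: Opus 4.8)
The plan is to establish the two implications of the equivalence separately; both follow from the definition of $F^{\mathcal{G}}$ in~\eqref{scm_fp} once one invokes differentiability (Assumption~\ref{ass-diff}) and structural minimality (Assumption~\ref{ass-minimality}).

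For the direction ``$\partial F_j^{\mathcal{G}}/\partial x_i$ not identically $\bm{0}$ $\Rightarrow$ $(i,j)\in\mathcal{E}$'', I would argue by contraposition. By construction $F_j^{\mathcal{G}}(x,n)=f_j(x_{\text{pa}_1(j)},\dots,x_{\text{pa}_{c_j}(j)},n_j)$, so as a function on $\mathbb{R}^d\times\mathbb{R}^d$ it involves only the coordinates $x_k$ with $k\in\textbf{PA}_{\mathcal{G}}(j)$ and the coordinate $n_j$. Hence if $i\notin\textbf{PA}_{\mathcal{G}}(j)$, the map $(x,n)\mapsto\partial F_j^{\mathcal{G}}/\partial x_i(x,n)$ is the zero map. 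Since $\mathcal{G}$ is a DAG, every edge is directed, so $i\in\textbf{PA}_{\mathcal{G}}(j)$ is the same as $(i,j)\in\mathcal{E}$, which gives the claim.

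For the converse, suppose $(i,j)\in\mathcal{E}$, i.e. $i$ is the $k$-th parent of $j$, $i=\text{pa}_k(j)$ for some $k\in\{1,\dots,c_j\}$. By the chain rule, $\partial F_j^{\mathcal{G}}/\partial x_i(x,n)=\frac{\partial f_j}{\partial z_k}(x_{\text{pa}_1(j)},\dots,x_{\text{pa}_{c_j}(j)},n_j)$. The map $(x,n)\mapsto(x_{\text{pa}_1(j)},\dots,x_{\text{pa}_{c_j}(j)},n_j)$ is surjective onto $\mathbb{R}^{c_j}\times\mathbb{R}$ (the parent indices are distinct, and under Assumption~\ref{ass-1d} all the $t_k$ equal $1$), so $\partial F_j^{\mathcal{G}}/\partial x_i$ is the zero map if and only if $\frac{\partial f_j}{\partial z_k}$ is identically zero. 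I would then suppose for contradiction that $\frac{\partial f_j}{\partial z_k}\equiv 0$: since $\mathbb{R}$ is connected and $f_j$ is differentiable, freezing all arguments other than the $k$-th makes $t\mapsto f_j(\dots,t,\dots)$ have vanishing derivative, hence constant; therefore $g_j(z_1,\dots,z_{k-1},z_{k+1},\dots,z_{c_j},n_j):=f_j(z_1,\dots,z_{k-1},0,z_{k+1},\dots,z_{c_j},n_j)$ satisfies $f_j(z_1,\dots,z_{c_j},n_j)=g_j(z_1,\dots,z_{k-1},z_{k+1},\dots,z_{c_j},n_j)$, contradicting Assumption~\ref{ass-minimality}. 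Hence $\partial F_j^{\mathcal{G}}/\partial x_i$ is not the zero map.

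The only step that is more than unwinding definitions is the implication ``vanishing partial derivative in one slot $\Rightarrow$ no dependence on that slot'', which is where connectedness of the domain $\mathbb{R}$ (Assumption~\ref{ass-1d}) and differentiability of $f_j$ (Assumption~\ref{ass-diff}) are used to manufacture the witness $g_j$ that violates structural minimality; everything else is the chain rule and bookkeeping with the parent-projection operators $\textbf{PA}_{\mathcal{G}}^{(i)}$.
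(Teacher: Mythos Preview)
Your proof is correct and follows the same approach as the paper's: both directions come from the definition of $F^{\mathcal{G}}$ together with structural minimality. The paper's proof is essentially a one-liner that asserts the equivalence ``edge $\Leftrightarrow$ nonzero partial of $f_j$ $\Leftrightarrow$ nonzero partial of $F_j^{\mathcal{G}}$'' by appeal to minimality, whereas you spell out the step that the paper leaves implicit---namely, that a vanishing partial derivative in the $k$-th slot actually manufactures a witness $g_j$ contradicting Assumption~\ref{ass-minimality}---which is a welcome bit of rigor but not a different strategy.
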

\begin{proof}
This results follows directly from the structure of $\mathcal{F}^{\mathcal{G}}$: $(j,i)$ is an edge i.f.f there exists $k$ such that $pa_k(i)=j$, and by minimimality assumption~\ref{ass-minimality}, i.i.f $(x,n)\to\frac{\partial f_i}{\partial x_j}(\cdot,\cdot)\neq \bm{0}$ and therefore i.f.f $\frac{\partial F_j^{\mathcal{G}}}{\partial x_i}(\cdot,\cdot)\neq \bm{0}$.
\end{proof}

Therefore, by applying this reparameterization on the functional relationships, we obtain an equivalent representation of standard SCMs as fixed-point problems. More formally, any standard SCM of the $(\mathcal{G},\mathbb{P}_{\bm{N}},(f_1,\dots,f_d),\mathcal{S}(\mathcal{G}, \mathbb{P}_{\bm{N}}, (f_1,\dots,f_d))$, can be equivalently reparameterized as the 4-tuple $(\mathcal{G},\mathbb{P}_{\bm{N}}, F^{\mathcal{G}},\mathcal{S}_{\text{eq}}(\mathbb{P}_{\bm{N}}, F^{\mathcal{G}}))$, where $\mathcal{S}_{\text{eq}}( \mathbb{P}_{\bm{N}}, F^{\mathcal{G}})$ denotes the fixed-point problem $\bm{X}=F^{\mathcal{G}}(\bm{X},\bm{N})$. In the following, we denote $\mathcal{S}_{\text{eq}}$ the set of all possible  4-tuples of the form $(\mathcal{G},\mathbb{P}_{\bm{N}}, F^{\mathcal{G}},\mathcal{S}_{\text{eq}}(\mathbb{P}_{\bm{N}}, F^{\mathcal{G}}))$. We also allow to represent such reparameterized standard SCM as only $\mathcal{S}_{\text{eq}}(\mathbb{P}_{\bm{N}}, F^{\mathcal{G}})$. Let us now show the following simple result.
\begin{proposition}
There exists a bijection between $\mathcal{S}$ and $\mathcal{S}_{\text{eq}}$, and we denote such equivalence relation as $\mathcal{S}\longleftrightarrow \mathcal{S}_{\text{eq}}$.
\end{proposition}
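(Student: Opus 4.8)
The plan is to exhibit the two maps between $\mathcal{S}$ and $\mathcal{S}_{\text{eq}}$ explicitly and check they are mutually inverse. First I would define $\Phi\colon\mathcal{S}\to\mathcal{S}_{\text{eq}}$ to be the reparameterization already constructed in this section: given a standard SCM $(\mathcal{G},\mathbb{P}_{\bm{N}},(f_1,\dots,f_d),\mathcal{S}(\mathcal{G}, \mathbb{P}_{\bm{N}}, (f_1,\dots,f_d)))$, set $\Phi$ to output $(\mathcal{G},\mathbb{P}_{\bm{N}}, F^{\mathcal{G}},\mathcal{S}_{\text{eq}}(\mathbb{P}_{\bm{N}}, F^{\mathcal{G}}))$ with $F^{\mathcal{G}}$ defined by $F_i^{\mathcal{G}}(x,n):= f_i(x_{\text{pa}_1(i)},\dots,x_{\text{pa}_{\textit{c}_i}(i)},n_i)$. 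This is well defined and lands in $\mathcal{S}_{\text{eq}}$ by the definition of that set.

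Next I would construct the inverse $\Psi\colon\mathcal{S}_{\text{eq}}\to\mathcal{S}$. Given $(\mathcal{G},\mathbb{P}_{\bm{N}}, F^{\mathcal{G}},\mathcal{S}_{\text{eq}}(\mathbb{P}_{\bm{N}}, F^{\mathcal{G}}))$, I recover the graph $\mathcal{G}$ directly (it is part of the tuple), and for each $i$ I define $f_i\colon\mathbb{R}^{c_i}\times\mathbb{R}\to\mathbb{R}$ by restricting $F_i^{\mathcal{G}}(\cdot,\cdot)$ to the coordinates indexed by the parents of $i$: concretely $f_i(z_1,\dots,z_{c_i},n_i):=F_i^{\mathcal{G}}(x,n)$ where $x$ is any vector with $x_{\text{pa}_k(i)}=z_k$ and $n$ any vector with $n_i$ in the $i$-th slot — this is unambiguous precisely because $F_i^{\mathcal{G}}$ depends only on those coordinates by construction. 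Then $\Psi$ outputs $(\mathcal{G},\mathbb{P}_{\bm{N}},(f_1,\dots,f_d),\mathcal{S}(\mathcal{G}, \mathbb{P}_{\bm{N}}, (f_1,\dots,f_d)))$. I would note that structural minimality (Assumption~\ref{ass-minimality}) transfers correctly: the preceding Lemma shows $(i,j)\in\mathcal{E}$ iff $\frac{\partial F_j^{\mathcal{G}}}{\partial x_i}\neq\bm 0$, which is exactly minimality for the recovered $f_j$.

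Then I would verify $\Psi\circ\Phi=\mathrm{id}_{\mathcal{S}}$ and $\Phi\circ\Psi=\mathrm{id}_{\mathcal{S}_{\text{eq}}}$. For $\Psi\circ\Phi$: starting from $(f_i)$, forming $F^{\mathcal{G}}$ and then restricting back to the parent coordinates returns the original $f_i$ (the restriction and the composition with $\textbf{PA}_{\mathcal{G}}^{(i)}$ are inverse bijections on the relevant function space). For $\Phi\circ\Psi$: starting from $F^{\mathcal{G}}$, extracting $f_i$ and re-composing with $\textbf{PA}_{\mathcal{G}}^{(i)}$ recovers $F^{\mathcal{G}}$ because $F_i^{\mathcal{G}}$ only ever depended on the parent coordinates. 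In both cases the graph $\mathcal{G}$, the noise distribution $\mathbb{P}_{\bm{N}}$, and hence the induced system of structural equations are preserved, so the full 4-tuples agree. The only mild subtlety — and the place I would be careful — is making sure the extraction step in $\Psi$ is genuinely well-defined, i.e.\ that $F_i^{\mathcal{G}}$ is constant along non-parent coordinates of $x$; but this is immediate from how $F^{\mathcal{G}}$ was defined in~\eqref{scm_fp} as a composition with the projections $\textbf{PA}_{\mathcal{G}}^{(i)}$, so there is no real obstacle here — the result is essentially bookkeeping.
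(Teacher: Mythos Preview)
Your proof is correct and follows essentially the same approach as the paper: both treat the result as a bookkeeping verification that the reparameterization $(\mathcal{G},\mathbb{P}_{\bm{N}},(f_i))\mapsto(\mathcal{G},\mathbb{P}_{\bm{N}},F^{\mathcal{G}})$ is bijective. The only cosmetic difference is that the paper argues surjectivity (immediate, since $\mathcal{S}_{\text{eq}}$ is defined as the image of this map) and injectivity (if the outputs agree then $\mathcal{G}$ and $F^{\mathcal{G}}$ agree, hence the $f_i$ agree) separately, whereas you build the inverse $\Psi$ explicitly and check the two composites are identities; these are equivalent formulations of the same argument.
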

\begin{proof}
The function that maps $(\mathcal{G}, \mathbb{P}_{\bm{N}}, (f_1,\dots,f_d), \mathcal{S}(\mathcal{G}, \mathbb{P}_{\bm{N}}, (f_1,\dots,f_d)))$ to $(\mathcal{G},\mathbb{P}_{\bm{N}}, F^{\mathcal{G}},\mathcal{S}_{\text{eq}}(\mathbb{P}_{\bm{N}}, F^{\mathcal{G}}))$ as build above is clearly bijective. Indeed, the surjection comes from the definition of $\mathcal{S}_{\text{eq}}$, and the injection is trivial: if $(\mathcal{G}_1,\mathbb{P}_{\bm{N}_1}, F^{\mathcal{G}_1},\mathcal{S}_{\text{eq}}(\mathbb{P}_{\bm{N}_1}, F^{\mathcal{G}}_1))=(\mathcal{G}_2,\mathbb{P}_{\bm{N}_2}, H^{\mathcal{G}_2},\mathcal{S}_{\text{eq}}(\mathbb{P}_{\bm{N}_2}, H^{\mathcal{G}_2}))$, it follows that 
$\mathcal{G}_1=\mathcal{G}_2$, $\mathbb{P}_{\bm{N}_1}=\mathbb{P}_{\bm{N}_2}$, and $F^{\mathcal{G}_1}=H^{\mathcal{G}_2}$, from which follows that $f_i=h_i$ and so for all $i\in\{1,\dots,d\}$, where $f_i=F_i^{\mathcal{G}_1}$ and $h_i=H_i^{\mathcal{G}_2}$.
\end{proof}

While $\mathcal{S}_{\text{eq}}( \mathbb{P}_{\bm{N}}, F^{\mathcal{G}})$ allows to represent bijectively standard SCMs as fixed-point problems, observe that the functional relationships $F^{\mathcal{G}}$ involved in the fixed-point problems still depend on a DAG $\mathcal{G}$. We will see in the following that by simply augmenting the representation of standard SCMs with a topological ordering, we obtain another representation that is independent of any DAG. 

\subsection{An Augmented Represention of Standard SCMs}

The goal of this section is to obtain a new representation of SCMs that is able to distinguish between two same standard SCMs when ordered with two different topological orderings. More formally, given a reparametrized  standard SCM $\mathcal{S}_{\text{eq}}( \mathbb{P}_{\bm{N}}, F^{\mathcal{G}})$, and two valid topological ordering (if they exist) $\pi_1\neq \pi_2$ associated to $\mathcal{G}$, we want to be able to distinguish between $\mathcal{S}_{\text{eq}}( \mathbb{P}_{\bm{N}}, F^{\mathcal{G}})$  when ordered according to $\pi_1$ with the same SCM $\mathcal{S}_{\text{eq}}( \mathbb{P}_{\bm{N}}, F^{\mathcal{G}})$ when ordered according to $\pi_2$. To do so, we propose to simply augment the previous representation of standard SCMs by adding to the parameterization a valid topological ordering. 

Let $\pi$ a topological ordering associated to the DAG $\mathcal{G}$ and let us define the 5-tuple $(\mathcal{G}, \pi, F^{\mathcal{G}}, \mathbb{P}_{\bm{N}}, \mathcal{S}_{\text{eq}}( \mathbb{P}_{\bm{N}}, F^{\mathcal{G}}))$. Observe that this 5-tuple a simple augmentation of the previous representation of standard SCMs $(\mathcal{G}, F^{\mathcal{G}}, \mathbb{P}_{\bm{N}}, \mathcal{S}_{\text{eq}}( \mathbb{P}_{\bm{N}}, F^{\mathcal{G}}))$ where we add a valid topological ordering into the parameterization.

Now, recall that for a given DAG, there (always exists at least one and) might exist multiple valid topological orderings. To clarify this, we denote $\Pi(\mathcal{G}):=\{\pi~\text{s.t.} \pi \text{ is a valid topological ordering of } \mathcal{G}\}$. Then, we define $\mathcal{S}_{\text{aug}}$, the set of augmented standard SCMs defined as the set of all possible 5-tuples of the form $(\mathcal{G}, \pi, F^{\mathcal{G}}, \mathbb{P}_N, S_{\text{eq}}(\mathbb{P}_{\bm{N}},F^{\mathcal{G}}))$ with $\pi\in\Pi(\mathcal{G})$. 

While this augmentation does not remove the dependency between the functional relationship $F^{\mathcal{G}}$ and the graph $\mathcal{G}$, it has the essential advantage to allow distinguishing between two same standard SCMs but with two different topological orderings, a property that does not enjoy the standard formulation of SCMs (or its equivalent reparameterizations). In the following, we will see that by simply reparameterizing this augmented representation, we finally obtain a representation for SCMs independent of DAGs.

\subsection{Fixed-Point SCMs Without DAGs}
In this section, we will show that by reparameterizing the augmented representation of standard SCMs obtained above, we can define an equivelent (i.e.  bijective) representation of the augmented SCMs without having to instantiate a DAG in its parameterization.

Let $(\mathcal{G}, \pi, F^{\mathcal{G}}, \mathbb{P}_{\bm{N}}, S_{\text{eq}}(\mathbb{P}_{\bm{N}},F^{\mathcal{G}}))$ an augmented SCM, and let us denote $P_{\pi}\in\Sigma_d$ the permutation matrix associated to the topological ordering $\pi$. Then by defining $F_{\pi}^{\mathcal{G}} \colon \mathbb{R}^d\times  \mathbb{R}^d  \to \mathbb{R}^d$ such that for all $x,n\in\mathbb{R}^d$ 
\begin{equation}
\label{func_ordered}
\begin{gathered}
F_{\pi}^{\mathcal{G}}(x,n):=\\
[F_{\pi^{-1}(1)}^{\mathcal{G}}(P_\pi^{T} x, P_\pi^{T} n),\dots,F_{\pi^{-1}(d)}^{\mathcal{G}}(P_\pi^{T} x, P_\pi^{T} n)],
\end{gathered}
\end{equation}
we obtain an equivalent formulation of~\eqref{scm_fp} defined as the following fixed-point problem on $\bm{X}$: 
\begin{align}
\label{fp_ordered}
\bm{X}=P_{\pi}^TF_\pi^{\mathcal{G}}(P_{\pi}\bm{X}, P_\pi\bm{N}).
\end{align} 
Observe that here, we have made explicit the dependence between the function $F_\pi^{\mathcal{G}}$ and both the graph $\mathcal{G}$ and the TO $\pi$, due to the definition of the proposed reparameterization. In the following we denote $\mathcal{S}_{\text{fp}}$ the set of all possible reparameterized and augmented SCMs, that is the set of all possible 5-tuples of the form $(\mathcal{G}, \pi, F_{\pi}^{\mathcal{G}}, \mathbb{P}_{\bm{N}}, S_{\text{fp}}(P_\pi,\mathbb{P}_{\bm{N}},F_{\pi}^{\mathcal{G}}))$ where $ S_{\text{fp}}(P_\pi,\mathbb{P}_{\bm{N}},F_{\pi}^{\mathcal{G}})$ is the fixed-point problem  $\bm{X}=P_{\pi}^TF_\pi^{\mathcal{G}}(P_{\pi}\bm{X}, P_\pi\bm{N})$. Let us now show the following simple result.

\begin{proposition}
$\mathcal{S}_{\text{aug}}$ is in bijection with $\mathcal{S}_{\text{fp}}$, that is $\mathcal{S}_{\text{aug}}\longleftrightarrow\mathcal{S}_{\text{fp}}$.
\end{proposition}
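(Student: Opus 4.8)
The plan is to exhibit an explicit map $\Phi\colon\mathcal{S}_{\text{aug}}\to\mathcal{S}_{\text{fp}}$ together with an inverse. The natural candidate sends the augmented SCM $(\mathcal{G},\pi,F^{\mathcal{G}},\mathbb{P}_{\bm{N}},\mathcal{S}_{\text{eq}}(\mathbb{P}_{\bm{N}},F^{\mathcal{G}}))$ to the $5$-tuple $(\mathcal{G},\pi,F_{\pi}^{\mathcal{G}},\mathbb{P}_{\bm{N}},\mathcal{S}_{\text{fp}}(P_\pi,\mathbb{P}_{\bm{N}},F_{\pi}^{\mathcal{G}}))$, where $F_\pi^{\mathcal{G}}$ is defined by \eqref{func_ordered}. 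First I would verify that $\Phi$ is well defined: its image is by construction a tuple of the form used to define $\mathcal{S}_{\text{fp}}$, and a short computation from \eqref{func_ordered} (taking care with the convention $[P_\pi]_{i,j}=1\iff\pi^{-1}(i)=j$) gives the pointwise identity $P_\pi^{T}F_\pi^{\mathcal{G}}(P_\pi\,\cdot\,,P_\pi\,\cdot\,)=F^{\mathcal{G}}$, so that the fixed-point system \eqref{fp_ordered} defining $\mathcal{S}_{\text{fp}}(P_\pi,\mathbb{P}_{\bm{N}},F_{\pi}^{\mathcal{G}})$ is literally the same system of equations as \eqref{scm_fp} defining $\mathcal{S}_{\text{eq}}(\mathbb{P}_{\bm{N}},F^{\mathcal{G}})$; in particular it inherits from the DAG structure of $\mathcal{G}$ the unique solvability on $\bm{X}$, so the fixed-point component is genuinely determined by the other data and carries no extra degree of freedom.

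Surjectivity of $\Phi$ is then immediate from the very definition of $\mathcal{S}_{\text{fp}}$ as the set of all $5$-tuples of that shape: each such tuple arises from a DAG $\mathcal{G}$, a topological order $\pi\in\Pi(\mathcal{G})$, a noise law $\mathbb{P}_{\bm{N}}$ and the induced $F^{\mathcal{G}}$, hence is $\Phi$ of the corresponding augmented SCM. For injectivity I would construct the inverse explicitly. The key point is that \eqref{func_ordered} can be read backwards: composing the permutation on the inside gives $[F_\pi^{\mathcal{G}}(P_\pi x,P_\pi n)]_k=F_{\pi^{-1}(k)}^{\mathcal{G}}(x,n)$, hence
\[
F_j^{\mathcal{G}}(x,n)=\bigl[F_\pi^{\mathcal{G}}(P_\pi x,P_\pi n)\bigr]_{\pi(j)},\qquad \forall j\in\{1,\dots,d\},~x,n\in\mathbb{R}^d,
\]
so that $\pi$ together with $F_\pi^{\mathcal{G}}$ recovers $F^{\mathcal{G}}$. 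Consequently, if two augmented SCMs have the same image under $\Phi$, reading off the first, second and fourth coordinates forces $\mathcal{G}_1=\mathcal{G}_2$, $\pi_1=\pi_2$, $\mathbb{P}_{\bm{N}_1}=\mathbb{P}_{\bm{N}_2}$; equality of the third coordinates $F_{\pi_1}^{\mathcal{G}_1}=F_{\pi_2}^{\mathcal{G}_2}$ then yields $F^{\mathcal{G}_1}=F^{\mathcal{G}_2}$ by the inversion formula; and the last coordinate $\mathcal{S}_{\text{eq}}(\mathbb{P}_{\bm{N}},F^{\mathcal{G}})$, being fully determined by $\mathbb{P}_{\bm{N}}$ and $F^{\mathcal{G}}$, agrees as well. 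Thus the two augmented SCMs coincide, $\Phi$ is injective, and hence a bijection.

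I do not expect a genuine obstacle here — the statement is essentially careful bookkeeping — but the one place demanding attention is the permutation algebra: keeping straight $P_\pi$ versus $P_\pi^{T}$ and the indexing convention in both of the identities above, namely $P_\pi^{T}F_\pi^{\mathcal{G}}(P_\pi\,\cdot\,,P_\pi\,\cdot\,)=F^{\mathcal{G}}$ (needed so that the fixed-point component of an element of $\mathcal{S}_{\text{fp}}$ does not secretly add information) and $F_j^{\mathcal{G}}(x,n)=[F_\pi^{\mathcal{G}}(P_\pi x,P_\pi n)]_{\pi(j)}$ (needed for injectivity). It is also worth recording that $\Phi$ is claimed only as a bijection of sets, not a structure-preserving map, so no further compatibility conditions need to be checked.
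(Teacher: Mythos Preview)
Your proposal is correct and follows essentially the same approach as the paper: the paper's proof is a two-line remark that surjectivity holds by the very definition of $\mathcal{S}_{\text{fp}}$ and injectivity holds because all components of the tuple are identifiable from the image. Your write-up is simply a more careful unpacking of that same argument, including the explicit inversion formula for recovering $F^{\mathcal{G}}$ from $(\pi,F_\pi^{\mathcal{G}})$, which the paper leaves implicit.
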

\begin{proof}
The result is clear. It follows directly from the bijectivity of the mapping built above. Again the surjection of the construction come from the definition of $\mathcal{S}_{\text{fp}}$ and the injection comes from the identifiability of all the objects involved in the tuple from the mapping.
\end{proof}

The main advantage of this reparameterization of augmented SCMs, is that $F_\pi^{\mathcal{G}}$ must satisfy a very simple structure.
\begin{lemma}
\label{lem-existence-fp}
Let $F_\pi^{\mathcal{G}}$ as defined in~\eqref{func_ordered}, then it satisfies for all $x,n\in\mathbb{R}^d$:
\begin{equation*}
\begin{aligned}
[Jac_1 F_\pi^{\mathcal{G}}(x,n)]_{i,j} &= 0,\quad \text{if}\quad j\geq i,\quad \text{and} \\ 
[Jac_2 F_\pi^{\mathcal{G}}(x,n)]_{i,j} &= 0,\quad \text{if}\quad i\neq j.
\end{aligned}
\end{equation*}
\end{lemma}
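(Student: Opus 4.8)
The plan is to compute the two Jacobians of $F_\pi^{\mathcal{G}}$ directly from its definition in~\eqref{func_ordered} and read off the claimed vanishing patterns from the topological ordering property of $\pi$. Recall that row $i$ of $F_\pi^{\mathcal{G}}$ is $F_{\pi^{-1}(i)}^{\mathcal{G}}(P_\pi^T x, P_\pi^T n)$, and that $F_k^{\mathcal{G}}(u,m) = f_k(u_{\mathrm{pa}_1(k)},\dots,u_{\mathrm{pa}_{c_k}(k)}, m_k)$ depends on $u$ only through the coordinates indexed by the parents of $k$ in $\mathcal{G}$, and on $m$ only through $m_k$. Writing $u = P_\pi^T x$, so that $u_\ell = x_{\pi(\ell)}$, the chain rule gives, for $k=\pi^{-1}(i)$,
\begin{align*}
[\mathrm{Jac}_1 F_\pi^{\mathcal{G}}(x,n)]_{i,j}
= \sum_{\ell=1}^d \frac{\partial f_k}{\partial u_\ell}\big(P_\pi^T x, \dots\big)\,\frac{\partial u_\ell}{\partial x_j}
= \frac{\partial f_k}{\partial u_{\pi^{-1}(j)}}\big(P_\pi^T x,\dots\big),
\end{align*}
since $\partial u_\ell/\partial x_j = \mathds{1}_{\pi(\ell)=j} = \mathds{1}_{\ell = \pi^{-1}(j)}$. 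Thus this entry is nonzero only if $\pi^{-1}(j)$ is a parent of $k=\pi^{-1}(i)$ in $\mathcal{G}$.

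Now I invoke the defining property of a topological ordering: if $a$ is a parent of $b$ in $\mathcal{G}$, then $\pi(a) < \pi(b)$ (using the convention in the excerpt where $\pi(i)<\pi(j)$ whenever $i$ is a parent — i.e. $j$ is a descendant — of $j$; one must be careful to match the paper's exact orientation convention, but either way the parent gets the strictly smaller $\pi$-value). Applying this with $a = \pi^{-1}(j)$ and $b = \pi^{-1}(i)$: if the entry $[\mathrm{Jac}_1 F_\pi^{\mathcal{G}}]_{i,j}$ is nonzero then $\pi(\pi^{-1}(j)) < \pi(\pi^{-1}(i))$, i.e. $j < i$. Contrapositively, $j \geq i$ forces the entry to vanish, which is the first claim. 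For the second Jacobian, the same computation with $m = P_\pi^T n$, $m_\ell = n_{\pi(\ell)}$, gives
\begin{align*}
[\mathrm{Jac}_2 F_\pi^{\mathcal{G}}(x,n)]_{i,j}
= \frac{\partial f_k}{\partial m_k}\big(\dots, m_k\big)\cdot \mathds{1}_{k = \pi^{-1}(j)}
= \frac{\partial f_k}{\partial m_k}\big(\dots\big)\cdot \mathds{1}_{i=j},
\end{align*}
since $F_k^{\mathcal{G}}$ depends on $m$ only through $m_k$ and $k=\pi^{-1}(i)$ equals $\pi^{-1}(j)$ iff $i=j$. Hence off-diagonal entries of $\mathrm{Jac}_2 F_\pi^{\mathcal{G}}$ vanish, which is the second claim.

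The computation itself is routine bookkeeping with permutation matrices and the chain rule; the only place requiring care — and what I would flag as the main (minor) obstacle — is keeping the index conventions straight: the correspondence between $P_\pi$ and $P_\pi^T$ acting on coordinates, the direction of the inequality in the definition of topological order (parent versus descendant), and the fact that "$j\geq i$" in the statement refers to indices \emph{after} reordering by $\pi$. Once these are pinned down, the two displayed chain-rule identities above deliver the result immediately, using only the structural-minimality-free direction (we only need that $F_k^{\mathcal{G}}$ \emph{can depend at most on} the parent coordinates and on $m_k$, not that it genuinely depends on all of them). Differentiability of $F_\pi^{\mathcal{G}}$ follows from Assumption~\ref{ass-diff} on the $f_i$'s composed with the smooth linear maps $P_\pi^T$.
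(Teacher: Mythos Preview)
Your proof is correct and follows essentially the same route as the paper's own argument. The paper packages the chain-rule computation as the matrix identity $\mathrm{Jac}_k F_\pi^{\mathcal{G}}(x,n) = P_\pi\,\mathrm{Jac}_k F^{\mathcal{G}}(P_\pi^T x, P_\pi^T n)\,P_\pi^T$ and then reads off the triangular/diagonal pattern from the sparsity of $\mathrm{Jac}_k F^{\mathcal{G}}$ together with the permutation rearrangement, which is exactly your entrywise computation written in matrix form; your explicit tracking of indices and your remark about the orientation convention are a welcome clarification but do not constitute a different method.
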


\begin{proof}
Under assumption of~\ref{ass-diff}, $F^{\mathcal{G}}$ is therefore differentiable, and we obtain directly that for all $x,n\in\mathbb{R}^d$, and $k\in\{1,2\}$, $Jac_k F_\pi^{\mathcal{G}}(x,n) = P_\pi Jac_k F^{\mathcal{G}}(P_\pi^Tx,P_\pi^Tn)P_\pi^T$. In addition, we have that
for all $w,z\in\mathbb{R}^d$, $[Jac_1 F^{\mathcal{G}}(w,z)]_{i,j}=0$ if $j$ is not a parent of $i$ and $[Jac_2 F^{\mathcal{G}}(w,z)]_{i,j}=0$ if $i\neq j$, then using the rearrangement given by $P_{\pi}$, we deduce the result.
\end{proof}

Therefore, the function $F_\pi^{\mathcal{G}}$, defining the new fixed-point problem~\eqref{fp_ordered}, has to admit a strictly lower-triangular Jacobian w.r.t $x$ and a diagonal Jacobian w.r.t $n$. However, we still have a dependency between the functional relationship $F_\pi^{\mathcal{G}}$, and the graph $\mathcal{G}$ (as well as the topological odering $\pi$). We now ask the following question: 
\begin{question}
Is the above condition sufficient to remove the graphs from the parameterization of elements in $\mathcal{S}_{\text{fp}}$? 
\end{question}

We answer positively to the above question in the following proposition.
\begin{proposition}
Let us denote $\mathcal{\tilde{S}}_{\text{fp}}$ the set of all 4-tuples of the form $(P, H,\mathbb{P}_{\bm{N}},\mathcal{S}_{\text{fp}}(P,\mathbb{P}_{\bm{N}}, H))$ where $P\in\Sigma_d$ is a permutation matrix, $\mathbb{P}_{\bm{N}}\in\mathcal{P}(\mathbb{R})^{\otimes d}$ a jointly independent distribution over $\mathbb{R}^d$ and $H\in\mathcal{F}_d$. Then we have $$\mathcal{\tilde{S}}_{\text{fp}}\longleftrightarrow \mathcal{S}_{\text{fp}}\longleftrightarrow\mathcal{S}_{\text{aug}} $$
\end{proposition}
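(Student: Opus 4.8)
The plan is to reduce the claim to a single new bijection. Since the previously established proposition already gives $\mathcal{S}_{\text{fp}}\longleftrightarrow\mathcal{S}_{\text{aug}}$, it suffices to exhibit a bijection $\Phi\colon\mathcal{S}_{\text{fp}}\to\tilde{\mathcal{S}}_{\text{fp}}$ and compose the two. The natural candidate is the map that simply \emph{forgets the DAG}, sending the $5$-tuple $(\mathcal{G},\pi,F_\pi^{\mathcal{G}},\mathbb{P}_{\bm{N}},S_{\text{fp}}(P_\pi,\mathbb{P}_{\bm{N}},F_\pi^{\mathcal{G}}))$ to the $4$-tuple $(P_\pi,F_\pi^{\mathcal{G}},\mathbb{P}_{\bm{N}},\mathcal{S}_{\text{fp}}(P_\pi,\mathbb{P}_{\bm{N}},F_\pi^{\mathcal{G}}))$. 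This is well defined: $P_\pi\in\Sigma_d$, the noise distribution lies in $\mathcal{P}(\mathbb{R})^{\otimes d}$ under Assumption~\ref{ass-1d}, and $F_\pi^{\mathcal{G}}\in\mathcal{F}_d$ by Lemma~\ref{lem-existence-fp}; the fixed-point problem components coincide tautologically. It remains to prove $\Phi$ is onto and one-to-one.

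For surjectivity I would reconstruct an SCM from arbitrary data $(P,H,\mathbb{P}_{\bm{N}})$ with $H\in\mathcal{F}_d$. Let $\pi$ be the unique permutation with $P=P_\pi$ and set $\tilde F(w,z):=P^{T}H(Pw,Pz)$, so that $\mathrm{Jac}_k\tilde F(w,z)=P^{T}\big(\mathrm{Jac}_k H(Pw,Pz)\big)P$ for $k=1,2$. Conjugation by a permutation matrix preserves diagonality, so $\mathrm{Jac}_2\tilde F$ is diagonal; and $[\mathrm{Jac}_1\tilde F(w,z)]_{i,j}=[\mathrm{Jac}_1 H(Pw,Pz)]_{\pi(i),\pi(j)}$, which vanishes identically whenever $\pi(j)\ge\pi(i)$ by Condition~\ref{cond:structure}. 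I would then upgrade these identically-vanishing-Jacobian statements to genuine non-dependence on the corresponding variables (valid since $\mathbb{R}^d$ is connected, by integrating along axis-parallel segments), define the graph $\mathcal{G}$ on $\{1,\dots,d\}$ by declaring $(j,i)$ an edge iff $(w,z)\mapsto\partial\tilde F_i/\partial w_j\not\equiv\bm{0}$, and observe that every edge then satisfies $\pi(j)<\pi(i)$, which forbids directed cycles; hence $\mathcal{G}$ is a DAG with $\pi\in\Pi(\mathcal{G})$. Defining $f_i$ as the restriction of $\tilde F_i$ to its parent coordinates and to $n_i$ yields a differentiable map (restriction of a differentiable function to an affine subspace), so the resulting object is an SCM in the sense of Appendix~\ref{sec:background} satisfying Assumptions~\ref{ass-minimality}, \ref{ass-1d}, \ref{ass-diff} — structural minimality holding automatically since an argument is kept exactly when $f_i$ genuinely depends on it. Finally $F^{\mathcal{G}}=\tilde F$, hence $F_\pi^{\mathcal{G}}(x,n)=P\,\tilde F(P^{T}x,P^{T}n)=H(x,n)$, and this SCM augmented with $\pi$ is a $\Phi$-preimage of the given $4$-tuple.

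For injectivity I would recover every component of the $5$-tuple from its image. The permutation $\pi$ is determined by $P=P_\pi$; from $\pi$ and $H=F_\pi^{\mathcal{G}}$ one recovers $F^{\mathcal{G}}=P^{T}H(P\cdot,P\cdot)$; by the Lemma in Appendix~\ref{sec:random-var-fp}, $(j,i)\in\mathcal{E}$ iff $\partial F_i^{\mathcal{G}}/\partial x_j\not\equiv\bm{0}$, so $\mathcal{G}$ is determined by $F^{\mathcal{G}}$; and by the already-proven bijection $\mathcal{S}\longleftrightarrow\mathcal{S}_{\text{eq}}$ the tuple $(f_1,\dots,f_d)$ is determined by $\mathcal{G}$ and $F^{\mathcal{G}}$. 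Hence $\Phi$ is injective, therefore bijective, and composing with $\mathcal{S}_{\text{fp}}\longleftrightarrow\mathcal{S}_{\text{aug}}$ gives the claimed chain $\tilde{\mathcal{S}}_{\text{fp}}\longleftrightarrow\mathcal{S}_{\text{fp}}\longleftrightarrow\mathcal{S}_{\text{aug}}$.

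The main obstacle is the surjectivity step: Condition~\ref{cond:structure} is a purely differential (Jacobian-support) requirement, and the real work is certifying that it genuinely produces a \emph{bona fide} SCM — namely (i) upgrading vanishing partial derivatives to honest non-dependence on variables, (ii) checking that the graph read off from $\mathrm{Jac}_1$ is acyclic and admits $\pi$ as a topological order under the paper's ordering convention, and (iii) verifying that structural minimality and the scalar-variable assumption hold for free. The permutation-conjugation bookkeeping relating $F^{\mathcal{G}}$, $F_\pi^{\mathcal{G}}$ and their Jacobians is routine but must be carried out with care to keep the index conventions consistent throughout.
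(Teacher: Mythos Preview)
Your proposal is correct and follows essentially the same route as the paper, which simply defers to the proof of Proposition~\ref{prop:equivalence}: forget the DAG in one direction, and in the other direction reconstruct $F^{\mathcal{G}}=P^{T}H(P\cdot,P\cdot)$, read off $\mathcal{G}$ from its Jacobian support, and recover the $f_i$'s by restriction. Your write-up is in fact more careful than the paper's on the points you flag as obstacles (upgrading vanishing partials to genuine non-dependence, acyclicity of the recovered graph, and automatic structural minimality), all of which the paper leaves implicit.
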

\begin{proof}
Refer to the proof of Proposition~\ref{prop:equivalence} in Appendix~\ref{sec:proofs}.
\end{proof}
Therefore by transitivity, the above proposition ensures that one can equivalently represent any augmented SCM, i.e. any element in $\mathcal{S}_{\text{aug}}$, using the parameterization of $\mathcal{\tilde{S}}_{\text{fp}}$, that is using a 4-tuple of the form $(P, H,\mathbb{P}_{\bm{N}},\mathcal{S}_{\text{fp}}(P,\mathbb{P}_{\bm{N}}, H))$, which is now completely independent of any DAG. 

\begin{remark}
Note that $\mathcal{\tilde{S}}_{\text{fp}}$ and more precisely the 4-tuple $(P, H,\mathbb{P}_{\bm{N}},\mathcal{S}_{\text{fp}}(P,\mathbb{P}_{\bm{N}}, H))$, is exactly the parameterization introduced in definition~\ref{def:fp-scm}.
\end{remark}

To conclude, we obtain that by augmenting the representation of standard SCMs using a topological ordering, we can distinguish between two same standard SCMs with different topological orderings, which allows us to represent any augmented standard SCM (that is any element of $\mathcal{S}_{\text{aug}}$) independently of its DAG, by using an element of $\mathcal{\tilde S}_{\text{fp}}$, that is using a fixed-point SCM as defined in definition~\ref{def:fp-scm}.  

\subsection{Equivalence of Partial Recovery}
Recall that the partial recovery problem of interest in this paper is defined as follows.

\textbf{Partial Recovery Problem.} Let $\mathcal{S}_{\text{fp}}(P, \mathbb{P}_{\bm{N}}, H)$ a fixed-point SCM generating $\gamma(P, \mathbb{P}_{\bm{N}}, H)$ with left marginal $\mathbb{P}_{\bm{X}}:=p_1\#\gamma(P, \mathbb{P}_{\bm{N}}, H)$. Given $P$ and $\mathbb{P}_{\bm{X}}$, can we recover uniquely the generating fixed-point SCM $\mathcal{S}_{\text{fp}}(P, \mathbb{P}_{\bm{N}}, H)$? Or more formally, is $\mathcal{A}_P(\mathbb{P}_{\bm{X}})$ a singleton?

\begin{remark}
$A_P(\mathbb{P}_{\bm{X}})$ is a singleton if and only if there exists a unique fixed-point SCM with topological order $P$ generating $\mathbb{P}_{\bm{X}}$ .
\end{remark}

In the next proposition, we show that the partial recovery of fixed-point SCMs guarantees that of standard SCMs.

\begin{proposition}
\label{prop:rec-enough}
 Let $P_{\bm{X}}\in\mathcal{P}(\mathbb{R}^d)$, $P\in\Sigma_d$ and assume that $\mathcal{A}_P(\mathbb{P}_{\bm{X}})$ is a singleton. Then, there exists a unique standard SCM of the form $\mathcal{S}(\mathcal{G},\mathbb{P}, F)$ generating $\mathbb{P}_{\bm{X}}$ such that $P$ is a valid topological ordering of $\mathcal{G}$.
\end{proposition}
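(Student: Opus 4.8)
\medskip

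The plan is to reduce the statement about standard SCMs to the already-established correspondence between fixed-point SCMs and standard SCMs given by Proposition~\ref{prop:equivalence}. Since we are told $\mathcal{A}_P(\mathbb{P}_{\bm{X}})$ is a singleton, there is a unique fixed-point SCM $\mathcal{S}_{\text{fp}}(P,\mathbb{P},H)$ with topological order $P$ that generates $\mathbb{P}_{\bm{X}}$, i.e. $p_1\#\gamma(P,\mathbb{P},H)=\mathbb{P}_{\bm{X}}$. First I would invoke the ``reciprocal'' direction of Proposition~\ref{prop:equivalence}: from this fixed-point SCM we obtain a standard SCM $\mathcal{S}(\mathcal{G},\mathbb{P},F)$ with the same noise distribution $\mathbb{P}$, such that $P$ is a valid topological ordering of $\mathcal{G}$ and the identity~\eqref{eq:unique-equi} holds. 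Because the two representations induce the same fixed-point problem on $\bm{X}$ (namely $\bm{X}=P^TH(P\bm{X},P\bm{N})$, which is exactly the reparametrized assignment~\eqref{eq:scm_fp} for $F^{\mathcal{G}}$), the observational distribution generated by $\mathcal{S}(\mathcal{G},\mathbb{P},F)$ equals $p_1\#\gamma(P,\mathbb{P},H)=\mathbb{P}_{\bm{X}}$. This gives existence.

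\medskip

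For uniqueness, suppose $\mathcal{S}(\mathcal{G}',\mathbb{P}',F')$ is any standard SCM generating $\mathbb{P}_{\bm{X}}$ for which $P$ is a valid topological ordering of $\mathcal{G}'$. Apply the forward direction of Proposition~\ref{prop:equivalence} with $\pi$ the permutation associated to $P$: this yields a fixed-point SCM $\mathcal{S}_{\text{fp}}(P,\mathbb{P}',H')$ satisfying~\eqref{eq:unique-equi}, hence inducing the same assignments and therefore the same observational distribution, which is $\mathbb{P}_{\bm{X}}$ by hypothesis. So $(P,\mathbb{P}',H')\in\mathcal{A}_P(\mathbb{P}_{\bm{X}})$. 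Since that set is a singleton, $(P,\mathbb{P}',H')=(P,\mathbb{P},H)$; in particular $\mathbb{P}'=\mathbb{P}$ and $H'=H$, so via~\eqref{eq:unique-equi} the structural functions of $\mathcal{S}(\mathcal{G}',\mathbb{P}',F')$ agree coordinatewise with those of $\mathcal{S}(\mathcal{G},\mathbb{P},F)$ on their respective parent sets. It then remains to argue that two standard SCMs whose reparametrized functions $P^TH(P\,\cdot\,,P\,\cdot\,)$ coincide must in fact have the same graph and the same $F$ up to the canonical identification; here structural minimality is the key hinge, since it forces $\mathcal{G}$ to be exactly the set of coordinates on which $P^TH(P\,\cdot\,,P\,\cdot\,)$ genuinely depends (cf. the causal graph of Definition~\ref{def:causal-graph} and the Remark following it), pinning down $\mathcal{G}=\mathcal{G}'$ and hence $F=F'$.

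\medskip

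I expect the main obstacle to be the bookkeeping in this last step: unwinding the definitions of $F^{\mathcal{G}}$, $\textbf{PA}^{(i)}_{\mathcal{G}}$, and the ordering conjugation $P^T(\cdot)(P\,\cdot\,,P\,\cdot\,)$ carefully enough to conclude that equality of the reparametrized vector fields, together with structural minimality on both sides, implies equality of the underlying DAGs (not merely that $P$ is a valid order for both). Once the graphs are identified, equality of $F$ and of $\mathbb{P}_{\bm{N}}$ is immediate from~\eqref{eq:unique-equi} and from the noise-preservation clause of Proposition~\ref{prop:equivalence}. Everything else is a direct application of the two directions of Proposition~\ref{prop:equivalence} plus the uniqueness hypothesis on $\mathcal{A}_P(\mathbb{P}_{\bm{X}})$, so no new analytic input beyond what is already in the excerpt should be needed.
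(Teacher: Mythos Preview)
Your proposal is correct and follows essentially the same approach as the paper: convert standard SCMs to fixed-point SCMs via Proposition~\ref{prop:equivalence}, use the singleton hypothesis on $\mathcal{A}_P(\mathbb{P}_{\bm{X}})$ to force $H_1=H_2$ and $\mathbb{P}_1=\mathbb{P}_2$, and then invoke structural minimality to conclude that the underlying DAGs and structural functions coincide. The paper's proof is more terse (it handles uniqueness directly by assuming two standard SCMs and leaves existence implicit), whereas you treat existence explicitly via the reciprocal direction of Proposition~\ref{prop:equivalence}; but the logical skeleton is identical, and the step you flag as the main obstacle---deducing $\mathcal{G}=\mathcal{G}'$ from equality of the reparametrized functions via minimality---is exactly the hinge the paper uses as well.
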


\begin{proof}
Let us assume that there exists two standard SCMs $\mathcal{S}(F_1,\mathbb{P}_{\bm{N}_1})$  and $\mathcal{S}(F_2,\mathbb{P}_{\bm{N}_2})$ generating $\mathbb{P}_{\bm{X}}$. As $P$ is a valid topological ordering for both SCMs, then thanks to proposition~\ref{prop:equivalence}, there exists $H_1,H_2\in\mathbb{\mathcal{F}_d}$ such that $\mathcal{S}_{\text{fp}}(P,\mathbb{P}_{\bm{N}_1}, H_1)$ $\mathcal{S}_{\text{fp}}(P,\mathbb{P}_{\bm{N}_2}, H_2)$ generate $P_{\bm{X}}$ and satisfy for $i\in\{1,2\}$ and $x,n\in\mathbb{R}^d$
\begin{align*}
    P^T H_i(P x,P n) = F_i(\textbf{PA}_i(x),n)\; .
\end{align*}
Then because $\mathcal{A}_P(\mathbb{P}_{\bm{X}})$ is a singleton, we must have $H_1 = H_2$ and $\mathbb{P}_{\bm{N}_1}=\mathbb{P}_{\bm{N}_2}$, from which follows that 
\begin{align*}
    F_1(\textbf{PA}_1(x),n)= F_2(\textbf{PA}_2(x),n)\; .
\end{align*}
and by minimality assumption~\ref{ass-minimality}, we obtain that $F_1=F_2$ from which the results follows.
\end{proof}

\subsection{Links with Normalizing Flows}

As a by-product of proposition~\ref{prop:equivalence}, our fixed-point formulation can also recover the normalizing flow induced by a standard SCM $\mathcal{S}(F,\mathbb{P}_{\bm{N}})$ as defined in section~\ref{sec:scm}. More formally, let $\mathcal{S}_{\text{fp}}(P,\mathbb{P}_{\bm{N}},H)$ an equivalent fixed-point SCM according to proposition~\ref{prop:equivalence} and let $T:n\in\mathbb{R}^d\to H(\cdot, n)^{\circ d}(0_d)\in\mathbb{R}^d$. Observe now that $T$ is a triangular map that pushes forward the ordered noise distribution $P\#\mathbb{P}_{\bm{N}}$ towards the ordered observational one $P\#\mathbb{P}_{\bm{X}}$ and therefore its inverse (if it exists), defines the normalizing flow of the SCM. The map $T$, describing the static generative process of the SCM, is not restricted to be a TMI, and therefore cannot be recovered in general by the framework of~\cite{NEURIPS2023_b8402301}.
\section{Zero-Shot TO Inference}

Here we detail the zero-shot TO inference obtained by our model $\mathcal{M}$ on a new test dataset $\mathcal{D}_{\text{test}}\in\mathbb{R}^{n_{\text{test}}\times d_{\text{test}}}$. Note that the TO inferred coincides exactly with the one obtained by Algorithm~\ref{alg:one-forward-pass} when all the predicted leaves $\hat{\ell}$ defined in \textbf{line 4} of Algorithm~\ref{alg:inference} (or \textbf{line 6} of Algorithm~\ref{alg:one-forward-pass}) are true (sequential) leaves of the associated graph $\mathcal{G}_{\text{test}}$.

\begin{algorithm}[!h]
   \caption{TO Inference of $\mathcal{M}$}
   \label{alg:inference}
\begin{algorithmic}[1]
\STATE {\bfseries Input:} $\mathcal{M}$, $\mathcal{D}_{\text{test}}$
\STATE Initialize $\text{TO} = [\text{ }]$.
   \FOR{$k=1$ {\bfseries to} $d$}
   \STATE  $\hat{\ell} \gets \argmax_{i} 
   [\mathcal{M}(\mathcal{D}_{\text{test}})]_i$,\quad $\text{TO}\text{.append}(\hat{\ell})$
   \STATE $\mathcal{D}_{\text{test}}\gets \mathcal{R}_1(\mathcal{D}_{\text{test}},\hat{\ell})$
   \ENDFOR
\STATE Return $\text{TO}$
\end{algorithmic}
\end{algorithm}

\textbf{Improving TO Inference.} Assume that $\mathcal{M}$ has been trained to predict TOs  of various datasets of the form $\mathcal{D}_{\text{train}}\in\mathbb{R}^{n_{\text{train}}\times d_{\text{train}}}$, where $n_{\text{train}}$ and $d_{\text{train}}$ are the number of samples and the dimension respectively of each training dataset. Then the trained model can in principle take as input a test dataset $\mathcal{D}_{\text{test}}\in\mathbb{R}^{n_{\text{test}}\times d_{\text{test}}}$ of any size, that is with any $n_{\text{test}}\geq 1$ and $d_{\text{test}}\geq 1$ and return a permutation $\hat{P}\in\Sigma_{d_{\text{test}}}$ of the variables that should (ideally) correspond to a TO of the nodes in $\mathcal{D}_{\text{test}}$. When at test time, we have access to more than $n_{\text{train}}$ samples, that is $n_{\text{test}}\geq n_{\text{train}}$, we propose an assembling strategy to improve the prediction of our inferred TO. More precisely, leveraging the parallelism of the model as well as all the operators involved in Algorithm~\ref{alg:inference} w.r.t the number of datasets, we propose to build from $\mathcal{D}_{\text{test}}$,  $B_{\text{test}}$ smaller datasets $\mathcal{D}_{\text{test}}^{(1)},\dots,\mathcal{D}_{\text{test}}^{(B_{\text{test}})}\in\mathbb{R}^{n_{\text{train}}\times d_{\text{test}}}$ where $B_{\text{test}}:= n_{\text{test}} \div n_{\text{train}} $ and $ \div$ refers to the Euclidean division. Then we propose the procedure presented in Algorithm~\ref{alg:inference-improved} that assembles the predictions of each smaller datasets at each step to predict the most likely leaf. We precise that the operator $\textbf{vote}$ introduced in \textbf{line 5} of Algorithm~\ref{alg:inference-improved}, simply counts the number of apparition of each unique index that are present in the current list $[\hat{\ell}^{(1)},\dots,\hat{\ell}^{(B_{\text{test}})}]$ and returns one that has the maximum count.

\begin{algorithm}[!h]
   \caption{Improved TO Inference of $\mathcal{M}$}
   \label{alg:inference-improved}
\begin{algorithmic}[1]
\STATE {\bfseries Input:} $\mathcal{M}$, $\mathcal{D}_{\text{test}}^{(1)},\dots,\mathcal{D}_{\text{test}}^{(B_{\text{test}})}$ 
\STATE Initialize $\text{TO} = [\text{ }]$.
   \FOR{$k=1$ {\bfseries to} $d$}
   \STATE  $[\hat{\ell}^{(1)},\dots,\hat{\ell}^{(B_{\text{test}})}] \gets [\argmax_{i_1} [\mathcal{M}(\mathcal{D}_{\text{test}}^{(1)})]_{i_1},\dots, \argmax_{i_{B_{\text{test}}}} [\mathcal{M}(\mathcal{D}_{\text{test}}^{(B_{\text{test}})})]_{i_{B_{\text{test}}}}]$
   \STATE $\hat{\ell}\gets \textbf{vote}([\hat{\ell}^{(1)},\dots,\hat{\ell}^{(B_{\text{test}})}])$
   \STATE  $\text{TO}\text{.append}(\hat{\ell})$
   \STATE $[\mathcal{D}_{\text{test}}^{(1)},\dots,\mathcal{D}_{\text{test}}^{(B_{\text{test}})}]\gets [\mathcal{R}_1(\mathcal{D}_{\text{test}}^{(1)},\hat{\ell}),\dots,\mathcal{R}_1(\mathcal{D}_{\text{test}}^{(B_{\text{test}})},\hat{\ell})]$
   \ENDFOR
\STATE Return $\text{TO}$
\end{algorithmic}
\end{algorithm}

\section{Detailed Parametrization of the Fixed-Point SCM}
\label{sec:arch}
\subsection{Proposed Architecture}
In this section, we present a more in-depth explanation of our architectural approach for learning fixed-point SCMs.

\textbf{Causal Embedding.} The purpose of this layer is to embed into a higher dimensional space the samples without modifying the causal structure. Recall first that for $(\bm{X},\bm{N})\sim \gamma(P,\mathbb{P}_{\bm{N}},H)$, we have
\begin{align}
\label{eq:sample-struct}
    P\bm{X} = H(P\bm{X},P\bm{N}).
\end{align}
Let us now introduce two embedding functions for respectively $\bm{X}$ and $\bm{N}$. Let $D\gg 1$, and for $k\in\{1,2\}$, $E_k:\mathbb{R}^d\to\mathbb{R}^{d\times D}$ a differentiable function. Let also assume that for $k\in\{1,2\}$, $E_k$ is bijective (on its image space) and its inverse is also differentiable. In the following Proposition, we show a sufficient condition on these embedding functions to preserve the causal structure.
\begin{proposition}
\label{prop:enc}
Let us denote $\bm{X}_{\text{emb}} :=E_1(P\bm{X})$, and 
$\bm{N}_{\text{emb}} :=E_2(P\bm{N})$ the embedded random variables where $(\bm{X},\bm{N})\sim \gamma(P,\mathbb{P}_{\bm{N}},H)$. If $E_{1}$ and $E_{2}$ satisfy for all $x,n\in\mathbb{R}^d$, $i,j\in\{1,\dots,d\}$ and $k\in\{1,\dots,D\}$
\begin{align}
\label{cond-emb}
\begin{aligned}
[Jac~E_{1}(x)]_{i,k,j} &= 0,\quad \text{if}\quad i\neq j,\quad \text{and} \\ 
[Jac~E_{2}(n)]_{i,k,j} &= 0,\quad \text{if}\quad i\neq j,
\end{aligned}
\end{align} 
Then there exists a differentiable function $F:\mathbb{R}^{d\times D}\times\mathbb{R}^{d\times D} \to \mathbb{R}^{d\times D}$ such that 
\begin{align}
\label{eq:embed-fp}
  \bm{X}_{\text{emb}} =  F(\bm{X}_{\text{emb}}, \bm{N}_{\text{emb}})
\end{align}
and satisfying for all $i,j\in\{1,\dots,d\}$, $k,l\in\{1,\dots,D\}$:
\begin{equation}
\label{eq-conditions}
\begin{aligned}
[Jac_1 F(\cdot,\cdot)]_{i,k,j,l} &= \bm{0},~\text{if}~ [Jac_1 H(\cdot,\cdot)]_{i,j} &= \bm{0} \\ 
[Jac_2 F(\cdot,\cdot)]_{i,k,j,l} &= \bm{0},~\text{if}~[Jac_2 H(\cdot,\cdot)]_{i,j} &= \bm{0}.
\end{aligned}
\end{equation}
\end{proposition}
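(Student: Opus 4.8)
The plan is to build $F$ explicitly by conjugating $H$ through the embeddings. Since each $E_k$ is a differentiable bijection onto its image with differentiable inverse $E_k^{-1}$, I would set
\[
F(u,v) := E_1\bigl(H(E_1^{-1}(u),\, E_2^{-1}(v))\bigr),
\]
defined for $u\in\mathrm{Im}(E_1)$ and $v\in\mathrm{Im}(E_2)$ (and extended off these sets in any differentiable way, e.g.\ by a constant, if one insists on a map on all of $\mathbb{R}^{d\times D}\times\mathbb{R}^{d\times D}$; only the values on the images are used below). Differentiability of $F$ is then immediate, as a composition of differentiable maps. To obtain~\eqref{eq:embed-fp}, I would apply $E_1$ to both sides of the structural identity~\eqref{eq:sample-struct}, $P\bm{X}=H(P\bm{X},P\bm{N})$, and substitute $P\bm{X}=E_1^{-1}(\bm{X}_{\text{emb}})$ and $P\bm{N}=E_2^{-1}(\bm{N}_{\text{emb}})$, which yields exactly $\bm{X}_{\text{emb}}=F(\bm{X}_{\text{emb}},\bm{N}_{\text{emb}})$.

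The core of the argument is the Jacobian bookkeeping, which I would carry out in two steps. First, condition~\eqref{cond-emb} says $\partial[E_1(x)]_{i,k}/\partial x_j=0$ for $i\neq j$ on the connected domain $\mathbb{R}^d$, hence each row of $E_1$ has the form $[E_1(x)]_i=\psi_i(x_i)$ for a differentiable $\psi_i:\mathbb{R}\to\mathbb{R}^D$; injectivity of $E_1$ forces each $\psi_i$ to be injective, so $[E_1^{-1}(u)]_i=\psi_i^{-1}(u_i)$ depends only on the $i$-th row of $u$, giving $[Jac\,E_1^{-1}(u)]_{i,j,l}=0$ for $i\neq j$, and likewise for $E_2,E_2^{-1}$. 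Second, writing $G(u,v):=H(E_1^{-1}(u),E_2^{-1}(v))$ and applying the chain rule, the block-diagonal structure of $Jac\,E_1^{-1}$ collapses the inner sum to $[Jac_1 G(u,v)]_{i,j,l}=[Jac_1 H(\cdot,\cdot)]_{i,j}\,[Jac\,E_1^{-1}(u)]_{j,j,l}$, while the row structure of $E_1$ collapses the outer sum, so that
\[
[Jac_1 F(u,v)]_{i,k,j,l}=[Jac\,E_1(G(u,v))]_{i,k,i}\;[Jac_1 H(\cdot,\cdot)]_{i,j}\;[Jac\,E_1^{-1}(u)]_{j,j,l},
\]
and analogously $[Jac_2 F(u,v)]_{i,k,j,l}=[Jac\,E_1(G(u,v))]_{i,k,i}\,[Jac_2 H(\cdot,\cdot)]_{i,j}\,[Jac\,E_2^{-1}(v)]_{j,j,l}$. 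Reading off these two identities gives~\eqref{eq-conditions}: if $[Jac_1 H(\cdot,\cdot)]_{i,j}\equiv\bm{0}$ then the corresponding block of $Jac_1 F$ vanishes, and if $[Jac_2 H(\cdot,\cdot)]_{i,j}\equiv\bm{0}$ then the corresponding block of $Jac_2 F$ vanishes.

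I expect the only genuine subtlety to be the first step above --- that the diagonal Jacobian condition on $E_k$ transfers to $E_k^{-1}$ --- which is exactly where the hypotheses ``$E_k$ bijective on its image with differentiable inverse'' are used (equivalently, one argues by the inverse function theorem applied coordinate-block by coordinate-block on $E_k$). The remaining manipulations are routine chain-rule computations. A secondary, cosmetic point is that $E_k^{-1}$ is only defined on $\mathrm{Im}(E_k)$, so $F$ as written naturally lives on $\mathrm{Im}(E_1)\times\mathrm{Im}(E_2)$; since the embedded variables are supported there, this suffices for~\eqref{eq:embed-fp} and~\eqref{eq-conditions}, and any differentiable extension off that set leaves the conclusion unchanged.
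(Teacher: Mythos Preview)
Your proposal is correct and follows essentially the same approach as the paper: define $F$ by conjugating $H$ through the embeddings, derive the fixed-point equation by applying $E_1$ to $P\bm{X}=H(P\bm{X},P\bm{N})$, and read off the Jacobian structure via the chain rule using the block-diagonal form of $E_k$ and $E_k^{-1}$. If anything, your write-up is more careful than the paper's about the two points you flag (why the diagonal structure passes to the inverses, and the domain-of-definition issue), which the paper handles in a single sentence.
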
 

\begin{proof}
Let $x,n\in\mathbb{R}^d$ such that $Px = H(Px,Pn)$, now observe that
\begin{align*}
E_1(Px) = E_1\circ H(Px, Pn) = E_1 \circ H (E_1^{-1} \circ E_1(Px), E_2^{-1}\circ E_2(Pn))
\end{align*}
Let us now define for all ($w,v\in\mathbb{R}^{d\times D}$, $F(w,v):= E_1 \circ H(E_1^{-1}(w), E_2^{-1}(v))$ and observe now that we have:
$$ E_1(Px) = F(E_1(Px),E_1(Pn))$$
Therefore for $(\bm{X},\bm{N})\sim \gamma(P,\mathbb{P}_{\bm{N}},H)$, we obtain that $(\bm{X}_{\text{emb}}:=E_1(P\bm{X}),\bm{N}_{\text{emb}}:=P\bm{N})$ is solving the following fixed-point problem:
$$ \bm{X}_{\text{emb}} = F(\bm{X}_{\text{emb}},\bm{N}_{\text{emb}})$$

Let us now show that this fixed-point problem defines a fixed-point SCM that satisfies the causal structure of the generating SCM. Now by definition of $F$, we obtain that
that for all $x,n\in E_1(\mathbb{R}^d)\times E_2(\mathbb{R}^d)$, 
\begin{align*}
Jac_1 (F)(x,n) &= Jac (E_1)(H(E_1^{-1}(x), E_2^{-1}(n))) Jac_1 (H)(E_1^{-1}(x), E_2^{-1}(n)) Jac (E_1^{-1})(x)\\
Jac_2 (F)(x,n) &= Jac (E_1)(H(E_1^{-1}(x), E_2^{-1}(n))) Jac_2 (H)(E_1^{-1}(x), E_2^{-1}(n)) Jac (E_2^{-1})(n)
\end{align*} 
where for a function $G$ and a point $z$ we denote $Jac(G)(z)$ the Jacobian of $G$ evaluated in $z$. Then, under condition~\ref{cond-emb}, as both $E_1, E_2$ and their restricted inverses $E_1^{-1}, E_2^{-1}$ are diagonal maps, we recover~\eqref{eq-conditions}.
\end{proof}

Therefore, as soon as~\eqref{cond-emb} is satisfied, the law of $(\bm{X}_{\text{emb}},\bm{N}_{\text{emb}})$ becomes the solution of a new fixed-point SCM induced by $F$ and~\eqref{eq-conditions} guarantees that its causal structure is the same as $H$. To satisfy~\eqref{cond-emb}, we propose simple embedding of the form:
\begin{align*}
    E_{i}(w):=[w_1 * \theta_{i,1},\dots,w_d *\theta_{i,d}]\in\mathbb{R}^{d\times D}
\end{align*}
where $w:=[w_1,\dots,w_d]\in\mathbb{R}^d$, $\bm{\theta}_i:=[\theta_{i,1},\dots,\theta_{i,d}]^T\in\mathbb{R}^{d\times D}$ with $\theta_{i,q}\in\mathbb{R}^{D}$  are learnable parameters. We also leverage the fact that the topological ordering is known to add a common positional encoding to these embedding. More formally we define 
\begin{align}
\label{eq:causal-emb-final}
    E_{i,\text{pos}}(w):=E_{\theta_i}(w) + \textbf{Pos}\in\mathbb{R}^{d\times D}
\end{align}
where $\textbf{Pos}\in\mathbb{R}^{d\times D}$ is a learnable parameter that encode the position. 
 
\textbf{Causal Attention.} Let us now introduce our new causal attention mechanism in order to model causal relationships. In classical attention,  given a key, and a query, denoted respectively $K,Q\in\mathbb{R}^{d\times D}$, where $d$ is the sequence length, and $D$ is the hidden dimension, the attention matrix is defined as:
\begin{align*}
 A(Q,K):=\text{softmax}(QK^{T}/\sqrt{D}),
\end{align*}
where for $M\in\mathbb{R}^{d\times d}$, $$[\text{softmax}(M)]_{i,j}:=\frac{\exp(M_{i,j})}{\sum_{k}\exp(M_{i,k})}.$$
In order to obtain a triangular mapping, it is common to add a causal masking to the attention weights. Generally the latter is obtained by defining a mask $M\in\{0,+\infty\}$ satisfying for all $i\leq j$, $M_{i,j}=0$ and for all $j>i$, $M_{i,j}=\infty$, and considering the following attention matrix:
\begin{align}
\label{eq:attn-classic}
 A_M(Q,K):=\text{softmax}((QK^{T} - M)/\sqrt{D}).
\end{align}
The main issue with the standard attention as defined in~\eqref{eq:attn-classic} is that the $\text{softmax}$ operator forces all the rows to sum to $1$, which means that all the nodes are forced to have at least one parent. In order to alleviate this issue and model correctly the root nodes, we propose to relax the definition of the attention layer, viewed as the solution of a specific (partial) optimal transport problem, in order to remove the constraints on the rows of the attention matrix. For that purpose let us denote $\Pi_{\mathbf{1}_d}:=\{W\in\mathbb{R}_{+}^{d\times d}\colon W\mathbf{1}_d=\mathbf{1}_d\}$ and $\Pi_{\mathbf{1}_d}^{\leq}:=\{W\in\mathbb{R}_{+}^{d\times d}\colon W\mathbf{1}_d\leq\mathbf{1}_d\}$, where $\mathbf{1}_d=[1,\dots,1]\in\mathbb{R}^d$. Let us now show that $A_M$ is the solution of a specific optimal transport problem.
\begin{proposition}
$A_M$ defined in~\eqref{eq:attn-classic} is the solution of the following (partial) and entropic optimal transport problem:
\begin{align}
\label{def:partial-eot}
\argmin_{W\in\Pi_{\mathbf{1}_d}}\langle W,C_M(Q,K)\rangle -\sqrt{D}H(W)
\end{align}
where $H(W):=-\sum_{i,j} W_{i,j}(\log(W_{i,j)}-1)$ is the generalized entropy and $C_M(Q,K):=-(QK^T - M)$.
\end{proposition}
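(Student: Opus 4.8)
The plan is to exploit the fact that the program~\eqref{def:partial-eot} is \emph{separable across the rows} of $W$, and then to solve each row-wise problem in closed form by Lagrangian duality, recovering the softmax. Write $C:=C_M(Q,K)$ for brevity. First I would record the structure of the objective: $W\mapsto \langle W,C\rangle-\sqrt{D}H(W)=\sum_{i,j}\bigl(W_{i,j}C_{i,j}+\sqrt{D}\,W_{i,j}(\log W_{i,j}-1)\bigr)$ is a sum of terms each depending on a single entry $W_{i,j}$, while the feasible set $\Pi_{\mathbf 1_d}=\{W\in\mathbb{R}_+^{d\times d}:\ W\mathbf 1_d=\mathbf 1_d\}$ is the product over rows $i$ of simplices $\{w\in\mathbb{R}_+^d:\ \sum_j w_j=1\}$. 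Hence~\eqref{def:partial-eot} splits into $d$ independent convex problems
\[
\min_{w\in\mathbb{R}_+^d,\ \sum_j w_j=1}\ \sum_{j=1}^d\Bigl(w_j C_{i,j}+\sqrt{D}\,w_j(\log w_j-1)\Bigr),\qquad i=1,\dots,d.
\]

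Next I would argue well-posedness and solve one such row problem. Each row objective is strictly convex (since $t\mapsto t\log t$ is strictly convex and $\sqrt D>0$) over a compact convex set, so it has a unique minimizer, and it suffices to exhibit a KKT point. Since $\tfrac{d}{dt}\bigl(t(\log t-1)\bigr)=\log t\to-\infty$ as $t\to0^+$, the minimizer is interior ($w_j>0$), so the nonnegativity multipliers vanish and stationarity w.r.t.\ the equality constraint (multiplier $\mu_i$) reads $C_{i,j}+\sqrt{D}\log w_j=\mu_i$, i.e.\ $w_j=\exp\!\bigl((\mu_i-C_{i,j})/\sqrt D\bigr)$. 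Imposing $\sum_j w_j=1$ fixes $\exp(\mu_i/\sqrt D)=\bigl(\sum_k\exp(-C_{i,k}/\sqrt D)\bigr)^{-1}$, so $w_j=\exp(-C_{i,j}/\sqrt D)\big/\sum_k\exp(-C_{i,k}/\sqrt D)$.

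Finally I would substitute the cost: with $C=C_M(Q,K)=-(QK^T-M)$ we get $w_j=\exp\bigl([QK^T-M]_{i,j}/\sqrt D\bigr)\big/\sum_k\exp\bigl([QK^T-M]_{i,k}/\sqrt D\bigr)=[\text{softmax}((QK^T-M)/\sqrt D)]_{i,j}$, and assembling rows gives exactly $A_M(Q,K)$ of~\eqref{eq:attn-classic}; by strict convexity this KKT point is the global minimizer. I would also note in passing that the $-1$ shift in the generalized entropy is harmless, since $\sum_{i,j}W_{i,j}=d$ is constant on $\Pi_{\mathbf 1_d}$, so it merely shifts the objective by a constant and drops out of the derivative. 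The only point requiring a touch of care—rather than a real obstacle—is justifying that the nonnegativity constraints are inactive so they may be omitted from the stationarity conditions; this follows from the blow-up of $\log t$ at $0$, which simultaneously guarantees the closed-form solution is automatically a strictly positive, admissible transport plan.
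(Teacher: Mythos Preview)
Your proof is correct and follows essentially the same approach as the paper: both derive the softmax form from the first-order (KKT/Lagrangian) optimality conditions and then normalize via the row-sum constraint. Your version is simply more detailed, making explicit the row separability, strict convexity, and interiority of the minimizer that the paper's brief argument leaves implicit.
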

\begin{proof}
This result is a direct consequence of the first order condition. Indeed at optimality, there exists $\bm{\lambda}\in\mathbb{R}^d$ such that
\begin{align*}
C_M(Q,K) + \sqrt{D}\log(W) + \bm{\lambda}\mathbf{1}_d=\bm{0}
\end{align*}
From which follows that $W= \exp((-C_M(Q,K) -\bm{\lambda}\mathbf{1}_d)/\sqrt{D})$ and as $W$ must satisfies the constraint the result follows.
\end{proof}
Let us now consider another masking $M\in\mathbb{R}^{d\times d}$, that is the matrix satisfying for all $i<j$, $[M]_{i,j}=0$ and for $j\geq i$, $[M]_{i,j}=\infty$. Note that the only difference with the traditional masking, is that here we also mask the diagonal in order to remove the edges from a node to itself. We are now ready to present our new causal attention mechanism.
\begin{definition}[Causal Attention]
For $Q,K\in\mathbb{R}^{d\times D}$, we define the causal attention matrix $\text{CA}_{M}(Q,K)$
as the solution of the following relaxed, (partial) and entropic optimal transport problem:
\begin{align}
\label{def:relaxed-partial-eot}
\argmin_{W\in\Pi_{\mathbf{1}_d}^{\leq}}\langle W,C_{M}(Q,K)\rangle -\sqrt{D}H(W).
\end{align}
where $C_{M}(Q,K):=-(QK^T - M_1)$.
\end{definition}
It happens that the solution of~\eqref{def:relaxed-partial-eot} is unique and can be derived in closed form:
\begin{align*}
    \text{CA}_{M}(Q,K)=\frac{\exp((QK^T - M)/\sqrt{D})}{\mathcal{V}(\exp((QK^T - M)/\sqrt{D})\mathbf{1}_d)}
\end{align*}
where for $v:=[v_1,\dots,v_d]\in\mathbb{R}^d_{+}$ and $i\in\{1,\dots,d\}$,

$$
[\mathcal{V}(v)]_i = \begin{cases}
    v_i,& \text{if } v_i\geq 1\\
    1,              & \text{otherwise.}
\end{cases}
$$
By relaxing the constraint of the optimal transport problem, we obtain a new attention mechanism that handles the existence of roots in a causal graph, which cannot be captured by the standard attention.

\textbf{Causal Encoder.} Let us now present the main building block of our proposed architecture. Here, we aim at parametrizing the function $F$ introduced in~\eqref{eq:embed-fp}. To do so, let us consider $(\bm{X}_{\text{emb}}, \bm{N}_{\text{emb}})\in\mathbb{R}^{d\times D}\times \mathbb{R}^{d\times D}$ some inputs, $W_Q, W_K, W_V\in\mathbb{R}^{D\times D}$, three learnable parameters and let us define:
\begin{align*}
&Q(\bm{N}_{\text{emb}}):=\bm{N}_{\text{emb}}W_Q,~K(\bm{X}_{\text{emb}}):=\bm{X}_{\text{emb}}W_K,\\
&V(\bm{X}_{\text{emb}}):=\bm{X}_{\text{emb}}W_V.
\end{align*} 
Let us also denote $h:\mathbb{R}^{D}\to\mathbb{R}^{D}$a parametric function. The proposed encoder layer $\mathcal{C}$ is defined as the following operator:
\begin{equation}
\label{def:DL}
\begin{gathered}
\mathcal{C}(\bm{X}_{\text{emb}}, \bm{N}_{\text{emb}}):=\\
h(\text{CA}_{M}(Q(\bm{N}_{\text{emb}}),K(\bm{X}_{\text{emb}})) V(\bm{X}_{\text{emb}}) + \bm{N}_{\text{emb}})
\end{gathered}
\end{equation}
where we define for $W:=[W_1,\dots,W_d]^T\in\mathbb{R}^{d\times D}$ with $W_i\in\mathbb{R}^{D}$, $h(W):=[h(W_1),\dots,h(W_d)]^T\in\mathbb{R}^{d\times D}$. We omit the dependence of $\mathcal{C}$ with the parameters to simplify the notation.
As in a classical transformer, we consider $h$ the composition of a layer norm operator ($\text{LN}$) and a multi-layer perceptron ($\text{MLP}$):
\begin{align*}
    h(x)=\text{LN}\circ (\text{I}_{D} + \text{MLP})\circ \text{LN}(x).
\end{align*}
Let us now show that the proposed layer satisfies the constraints of a fixed-point SCM.
\begin{proposition}
\label{prop:single-enc}
Let $\mathcal{C}$ as defined in~\eqref{def:DL}. Then for all $x,n\in\mathbb{R}^{d\times D}$, $i,j\in\{1,\dots,d\}$ and $k,l\in\{1,\dots,D\}$, we have
\begin{equation}
\label{eq:causal-enc-1-layer}
\begin{aligned}
[Jac_1 \mathcal{C}(x,n)]_{i,k,j,l} &= 0,\quad \text{if}\quad j\geq i,\quad \text{and} \\ 
[Jac_2 \mathcal{C}(x,n)]_{i,k,j,l} &= 0,\quad \text{if}\quad i\neq j.
\end{aligned}
\end{equation}
\end{proposition}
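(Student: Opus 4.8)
The plan is to write $\mathcal{C} = h\circ G$, where, with the notation of~\eqref{def:DL}, $G(x,n) := \text{CA}_{M}(nW_Q,\,xW_K)\,(xW_V) + n$ is the pre-activation map and $h$ is applied point-wise on each row, so that $[h(W)]_{i,\cdot}$ depends only on $W_{i,\cdot}$. Because $h$ acts row-wise, $[\mathcal{C}(x,n)]_{i,k}$ is a function of $G(x,n)_{i,\cdot}$ alone, and the chain rule gives $[\mathrm{Jac}_1\mathcal{C}(x,n)]_{i,k,j,l} = \sum_{k'=1}^{D} [\mathrm{Jac}\,h(G(x,n)_{i,\cdot})]_{k,k'}\,[\mathrm{Jac}_1 G(x,n)]_{i,k',j,l}$, and likewise with $\mathrm{Jac}_2$ on both sides. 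It therefore suffices to prove the two vanishing statements of~\eqref{eq:causal-enc-1-layer} with $G$ in place of $\mathcal{C}$; I will in fact establish the sharper facts that $G(x,n)_{i,\cdot}$ depends on $x$ only through the rows $x_{1,\cdot},\dots,x_{i-1,\cdot}$ and on $n$ only through $n_{i,\cdot}$.

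For the $x$-part, set $A := \text{CA}_{M}(nW_Q,\,xW_K)$, so with $Q=nW_Q$, $K=xW_K$ one has $A_{i,j} = \exp\big((Q_i\cdot K_j - M_{i,j})/\sqrt{D}\big)\big/\,\mathcal{V}\!\big(\textstyle\sum_{j'}\exp((Q_i\cdot K_{j'}-M_{i,j'})/\sqrt{D})\big)$. The mask $M$ has $M_{i,j}=+\infty$ precisely when $j\geq i$ (this is what encodes Cond.~\ref{cond:structure} on $\mathrm{Jac}_1 H$), hence $A_{i,j}\equiv 0$ for all $j\geq i$; moreover the denominator of row $i$ sums $\exp((Q_i\cdot K_{j'}-M_{i,j'})/\sqrt{D})$ only over the finite-mask columns $j'<i$, so it is a function of the $x$-rows $<i$ only. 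Consequently every nonzero weight $A_{i,j}$ (necessarily with $j<i$) depends on $x$ only through rows $<i$, and in $G(x,n)_{i,\cdot} = \sum_{j<i} A_{i,j}\,(xW_V)_{j,\cdot} + n_{i,\cdot}$ each factor $(xW_V)_{j,\cdot}$ with $j<i$ depends on $x$ only through row $j$, while the residual contributes no $x$-dependence. This gives $[\mathrm{Jac}_1 G(x,n)]_{i,k',j,l}=0$ for $j\geq i$, and the chain-rule identity transports it to $\mathcal{C}$, i.e. the first line of~\eqref{eq:causal-enc-1-layer}.

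For the $n$-part, note that $G$ reads $n$ only through the query $Q=nW_Q$ and the additive residual $n$, both of which are row-wise: $Q_i=(nW_Q)_i$ is a function of $n_{i,\cdot}$ alone, so — using again that the denominator $\mathcal{V}(\cdot)$ for row $i$ involves only $Q_i$ — the entire row $A_{i,\cdot}$, hence $[A(xW_V)]_{i,\cdot}$, depends on $n$ only through $n_{i,\cdot}$, and adding the residual row $n_{i,\cdot}$ keeps this true. Thus $[\mathrm{Jac}_2 G(x,n)]_{i,k',j,l}=0$ for $j\neq i$, which is the second line of~\eqref{eq:causal-enc-1-layer}. The step that really requires care is the denominator of $\text{CA}_{M}$: one must check that both the clipping $\mathcal{V}=\max(\,\cdot\,,1)$ and the sum it acts on range only over the unmasked columns, so that the strict lower-triangularity of the mask (and the row-wise $n$-dependence of $Q$) is not destroyed by the normalization — this is precisely where $\text{CA}_{M}$, unlike the standard softmax attention, makes the argument work. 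A minor technicality is that $\mathcal{V}$ and the LayerNorm layers inside $h$ are non-differentiable on a Lebesgue-null set; the chain-rule computation and the structural zeros above are valid wherever the relevant derivatives exist, which suffices for~\eqref{eq:causal-enc-1-layer}.
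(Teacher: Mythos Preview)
Your proof is correct and follows essentially the same approach as the paper's: both factor $\mathcal{C}=h\circ G$ (the paper calls your $G$ by $g$), use that $h$ acts row-wise to reduce to $G$, and then read off the dependency structure of $G(x,n)_{i,\cdot}$ from the strictly lower-triangular mask and the row-wise nature of $Q=nW_Q$. Your version is in fact more explicit than the paper's in tracking the denominator of $\text{CA}_M$ and in flagging the measure-zero non-differentiability of $\mathcal{V}$ and LayerNorm, points the paper leaves implicit.
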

\begin{proof}
First observe that $h$ is applied coordinate-wise, therefore when viewed as an operator from $\mathbb{R}^{d\times D}\to \mathbb{R}^{d\times D}$, its Jacobian is diagonal w.r.t to the first dimension. Now let us define for $x,n\in\mathbb{R}^{d\times D}$
$$g(x,n):=\text{CA}_{M}(Q(n),K(x)) V(x) + n$$
and observe that the $i$-th row of $\text{CA}_{M}(Q(n),K(x))$ only depends on the $i$-th row $Q(n)$ and the $i-1$ first rows of $K(x)$. In addition, because $\text{CA}_{M}(Q(n),K(x))$ is strictly lower-triangular, $\text{CA}_{M}(Q(n),K(x)) V(x)$ has exactly the same dependencies. Then we deduce directly that for all $x,n\in\mathbb{R}^{d\times D}$, $i,j\in\{1,\dots,d\}$ and $k,l\in\{1,\dots,D\}$,
\begin{equation*}
\begin{aligned}
[Jac_1 g(x,n)]_{i,k,j,l} &= 0,\quad \text{if}\quad j\geq i,\quad \text{and} \\ 
[Jac_2 g(x,n)]_{i,k,j,l} &= 0,\quad \text{if}\quad i\neq j.
\end{aligned}
\end{equation*}
from which the result follows.
\end{proof}
Therefore the proposed causal encoder layer can parameterize a whole family of fixed-point SCM in a latent space where the nodes are represented by vector of dimension $D$. In order to further increase its complexity, we propose now to compose them. Starting with an embedded sample $\bm{X}_{\text{emb}}:=E_{1,\textbf{Pos}}(P\bm{X})\in\mathbb{R}^{d\times D}$ and an embedding of noise, that is  $\bm{N}^{(0)}_{\text{emb}}:=E_{2,\textbf{Pos}}(P\bm{N})$, we compute for $k\in\{1,\dots,L-1\}$:
\begin{align*}
    \bm{N}^{(k+1)}_{\text{emb}}:=\mathcal{C}( \bm{X}_{\text{emb}}, \bm{N}^{(k)}_{\text{emb}}).
\end{align*}
Let us now show that this composition is still a valid fixed-point SCM.
\begin{proposition}
\label{prop-causal-enc}
Let $\mathcal{C}$ as defined in~\eqref{def:DL}, $x,n\in\mathbb{R}^{d\times D}$ and let us define $\mathcal{C}_L(x,n):=\mathcal{C}(x,\cdot)^{\circ L}(n)$. Then we have for all $i,j\in\{1,\dots,d\}$ and $k,l\in\{1,\dots,D\}$:
\begin{equation*}
\begin{aligned}
[Jac_1 \mathcal{C}_L(x,n)]_{i,k,j,l} &= 0,\quad \text{if}\quad j\geq i,\quad \text{and} \\ 
[Jac_2 \mathcal{C}_L(x,n)]_{i,k,j,l} &= 0,\quad \text{if}\quad i\neq j.
\end{aligned}
\end{equation*}
\end{proposition}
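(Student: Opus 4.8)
The plan is to argue by induction on the number $L$ of composed layers, using Proposition~\ref{prop:single-enc} (the single-layer case) as the base case and the multivariate chain rule for the inductive step. Write $\Phi_L(x,n):=\mathcal{C}(x,\cdot)^{\circ L}(n)$, so that $\Phi_0(x,n)=n$ and $\Phi_{L+1}(x,n)=\mathcal{C}(x,\Phi_L(x,n))$; the claim to prove is that for every $L\geq 1$ the map $Jac_1\Phi_L$ is strictly block-lower-triangular (its $(i,j)$ block vanishes whenever $j\geq i$) and $Jac_2\Phi_L$ is block-diagonal (its $(i,j)$ block vanishes whenever $i\neq j$), where ``$(i,j)$ block'' means the $D\times D$ slice indexed by the node indices $i,j$ in the $4$-index Jacobians of \eqref{eq:causal-enc-1-layer}.

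The only structural inputs beyond Proposition~\ref{prop:single-enc} are three elementary facts about $d\times d$ arrays of $D\times D$ blocks, all immediate from the composition formula $(AB)_{i,k,j,l}=\sum_{m,p}A_{i,k,m,p}B_{m,p,j,l}$: (i) the product of two block-diagonal maps is block-diagonal; (ii) the product of a block-diagonal map on the left with a strictly block-lower-triangular map is strictly block-lower-triangular; (iii) the class of block-diagonal maps and the class of strictly block-lower-triangular maps are each closed under addition. For the base case $L=1$ there is nothing to do: it is exactly Proposition~\ref{prop:single-enc} (one may also start from the trivial case $L=0$, where $Jac_1\Phi_0=0$ and $Jac_2\Phi_0=I$, which is block-diagonal).

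For the inductive step, fix $x$ and differentiate $\Phi_{L+1}(x,n)=\mathcal{C}(x,\Phi_L(x,n))$. Since $x$ enters both the first argument of $\mathcal{C}$ and, through $\Phi_L$, its second argument, the chain rule gives
\[
Jac_1\Phi_{L+1}(x,n) = Jac_1\mathcal{C}\big(x,\Phi_L(x,n)\big) + Jac_2\mathcal{C}\big(x,\Phi_L(x,n)\big)\,Jac_1\Phi_L(x,n),
\]
\[
Jac_2\Phi_{L+1}(x,n) = Jac_2\mathcal{C}\big(x,\Phi_L(x,n)\big)\,Jac_2\Phi_L(x,n).
\]
In the first identity, the first term is strictly block-lower-triangular by Proposition~\ref{prop:single-enc}, and the second term is the product of a block-diagonal map ($Jac_2\mathcal{C}$, by Proposition~\ref{prop:single-enc}) with a strictly block-lower-triangular map ($Jac_1\Phi_L$, by the induction hypothesis), hence strictly block-lower-triangular by (ii); by (iii) the sum is again strictly block-lower-triangular. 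In the second identity, the product of two block-diagonal maps is block-diagonal by (i). This closes the induction, and specializing $L$ to the number of encoder layers yields the proposition; the argument is unchanged if each composition uses a freshly instantiated layer, since every such layer satisfies Proposition~\ref{prop:single-enc}.

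No serious obstacle is expected here: the real content is already packaged in Proposition~\ref{prop:single-enc}, and the rest is bookkeeping with the four-index Jacobians. The one point needing care is the chain rule when $x$ appears in both slots of $\mathcal{C}$ — the cross term $Jac_2\mathcal{C}\cdot Jac_1\Phi_L$ must not be dropped — together with a brief remark that $\Phi_L$ is differentiable, which follows from differentiability of $\mathcal{C}$ (inherited from $h$ and from smoothness of the causal-attention map on its domain). One could equivalently phrase the whole proof as a dependency statement, namely that row $i$ of $\mathcal{C}(x,n)$ depends on $x$ only through rows $1,\dots,i-1$ and on $n$ only through row $i$, a property preserved under composition by the same induction; since $\mathbb{R}^{d\times D}$ is connected the two formulations coincide, but the Jacobian/chain-rule version is the most direct given the statement of Proposition~\ref{prop:single-enc}.
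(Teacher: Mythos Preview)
Your proof is correct and follows essentially the same approach as the paper: induction on $L$ with Proposition~\ref{prop:single-enc} as the base case, the chain-rule identities $Jac_1\mathcal{C}_{L+1}=Jac_1\mathcal{C}+Jac_2\mathcal{C}\cdot Jac_1\mathcal{C}_L$ and $Jac_2\mathcal{C}_{L+1}=Jac_2\mathcal{C}\cdot Jac_2\mathcal{C}_L$ for the inductive step, and the observation that block-diagonal times strictly block-lower-triangular (resp.\ block-diagonal) preserves the structure. Your write-up is in fact a bit more explicit than the paper's about the block-algebra facts and about not dropping the cross term, which is fine.
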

\begin{proof}
This results is a direct consequence of proposition~\ref{prop:single-enc}. Indeed because $Jac_2 [\mathcal{C}(x,n)]_{i,k,j,l} = 0,\quad \text{if}\quad i\neq j$, then composing w.r.t the second variable does not modify the structure of the Jacobian. More formally let $\mathcal{C}:\mathbb{R}^{d\times D}\times \mathbb{R}^{d\times D}\to \mathbb{R}^{d\times D}$ a function satisfying~\eqref{eq:causal-enc-1-layer}, then we can show by recursion that~\eqref{eq:causal-enc-1-layer} still holds when composing them w.r.t the second variable. Indeed assume it is the case after $k$ composition then we have for $x,n\in\mathbb{R}^{d\times D}\times\mathbb{R}^{d\times D}$:
\begin{align*}
    \mathcal{C}_{k+1}(x,n) = \mathcal{C}(x,\mathcal{C}_k(x,n))
\end{align*}
Then taking the jacobian we obtain:
\begin{align*}
Jac_1 (\mathcal{C}_{k+1})(x,n) = Jac_1(\mathcal{C})(x,\mathcal{C}_k(x,n)) + Jac_2(\mathcal{C})(x,\mathcal{C}_k(x,n)) Jac_1(\mathcal{C}_k)(x,n)
\end{align*}
however as $Jac_1(\mathcal{C})$ and $ Jac_1(\mathcal{C}_k)$ have the same structure, and $Jac_2(\mathcal{C})$ is diagonal, this structure is preserved. Similarly, we have that
\begin{align*}
Jac_2 (\mathcal{C}_{k+1})(x,n) = Jac_2(\mathcal{C})(x,\mathcal{C}_k(x,n)) Jac_2(\mathcal{C}_k)(x,n))
\end{align*}
which is still diagonal  w.r.t the first dimension as both $Jac_2(\mathcal{C})$ and $Jac_2(\mathcal{C}_k)$ are diagonals. Then the result follows.
\end{proof}

\textbf{Causal Decoder.} Let us now present our causal decoder that aims at bringing back the output of the causal encoder into the original space $\mathbb{R}^d$ without affecting the causal structure. For that purpose we aim at designing a function $\mathcal{J}:\mathbb{R}^{d\times D}\to\mathbb{R}^d$ that conserves the structure of $\mathcal{C}_L$. More formally $\mathcal{J}$ has to satisfy the following constraints for $i,j\in\{1,\dots,d\}$ and $l\in\{1,\dots,D\}$:
\begin{equation}
\label{dec:cond}
\begin{aligned}
[Jac_1 \mathcal{J}\circ\mathcal{C}_L(\cdot,\cdot)]_{i,j,l} &= \bm{0},~\text{if}~ \forall k,q, ~[Jac_1 \mathcal{C}_L(\cdot,\cdot)]_{i,k,j,q} = \bm{0} \\ 
[Jac_2 \mathcal{J}\circ\mathcal{C}_L(\cdot,\cdot)]_{i,j,l} &= \bm{0},~\text{if}~\forall k,q,~[Jac_2 \mathcal{C}_L(\cdot,\cdot)]_{i,k,j,q} = \bm{0}.
\end{aligned}
\end{equation}
Let us now present a sufficient condition to satisfy the above conditions.
\begin{proposition}
\label{prop:cond-dec}
If $\mathcal{J}$ satisfies for all $x\in\mathbb{R}^{d\times D}$, $i,j\in\{1,\dots,d\}$, and $l\in\{1,\dots,D\}$:
\begin{equation*}
    [Jac \mathcal{J}(x)]_{i,j,l}=0,~\text{if}~i\neq j
\end{equation*}
then $\mathcal{J}$ satisfies~\eqref{dec:cond} and preserves the structure of $\mathcal{C}_L$ .
\end{proposition}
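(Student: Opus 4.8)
The plan is to prove both halves of the statement by a single application of the chain rule, using that the hypothesis on $\mathcal{J}$ forces its Jacobian to couple the $i$-th output coordinate only to the $i$-th input row. First I would write $\mathcal{J}\circ\mathcal{C}_L\colon\mathbb{R}^{d\times D}\times\mathbb{R}^{d\times D}\to\mathbb{R}^d$ as a composition and differentiate in the first and second variables. For $k\in\{1,2\}$, the chain rule gives, for all $i,j\in\{1,\dots,d\}$ and $l\in\{1,\dots,D\}$,
\begin{equation*}
[Jac_k(\mathcal{J}\circ\mathcal{C}_L)(x,n)]_{i,j,l}=\sum_{m=1}^{d}\sum_{p=1}^{D}[Jac\,\mathcal{J}(\mathcal{C}_L(x,n))]_{i,m,p}\,[Jac_k\mathcal{C}_L(x,n)]_{m,p,j,l}.
\end{equation*}
By hypothesis, $[Jac\,\mathcal{J}(\cdot)]_{i,m,p}=0$ whenever $m\neq i$, so the $m$-sum collapses to the single term $m=i$:
\begin{equation*}
[Jac_k(\mathcal{J}\circ\mathcal{C}_L)(x,n)]_{i,j,l}=\sum_{p=1}^{D}[Jac\,\mathcal{J}(\mathcal{C}_L(x,n))]_{i,i,p}\,[Jac_k\mathcal{C}_L(x,n)]_{i,p,j,l}.
\end{equation*}

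From this identity, condition~\eqref{dec:cond} is immediate: if for all $k',q'$ we have $[Jac_1\mathcal{C}_L(\cdot,\cdot)]_{i,k',j,q'}=0$, then in particular every factor $[Jac_1\mathcal{C}_L(x,n)]_{i,p,j,l}$ appearing in the above sum vanishes, hence $[Jac_1(\mathcal{J}\circ\mathcal{C}_L)]_{i,j,l}=0$; the same argument applied to $Jac_2$ yields the second line of~\eqref{dec:cond}. I would then combine this with Proposition~\ref{prop-causal-enc}, which says $[Jac_1\mathcal{C}_L(x,n)]_{i,k,j,l}=0$ for $j\geq i$ and $[Jac_2\mathcal{C}_L(x,n)]_{i,k,j,l}=0$ for $i\neq j$: plugging these vanishing conditions into the two displays above shows $[Jac_1(\mathcal{J}\circ\mathcal{C}_L)(x,n)]_{i,j,l}=0$ for $j\geq i$ and $[Jac_2(\mathcal{J}\circ\mathcal{C}_L)(x,n)]_{i,j,l}=0$ for $i\neq j$, i.e. $\mathcal{J}\circ\mathcal{C}_L$ inherits the strictly lower-triangular / diagonal structure of $\mathcal{C}_L$, which is exactly what ``preserves the structure of $\mathcal{C}_L$'' means.

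A cleaner, essentially equivalent route I might use instead is to avoid tensor bookkeeping entirely: the hypothesis says $[\mathcal{J}(x)]_i$ depends only on the row $x_i\in\mathbb{R}^D$, so $[\mathcal{J}(\mathcal{C}_L(x,n))]_i$ depends on $(x,n)$ only through the row $[\mathcal{C}_L(x,n)]_i$, and the dependency pattern of that row on $(x,n)$ (triangular in $x$, diagonal in $n$, from Proposition~\ref{prop-causal-enc}) is transferred verbatim. There is no real obstacle here; the only point requiring care is making sure the contraction in the chain rule runs over the full $\mathbb{R}^{d\times D}$ index $(m,p)$ and that the three-index Jacobian of $\mathcal{J}$ and the four-index Jacobians of $\mathcal{C}_L$ are indexed consistently with the conventions fixed in the statements of Propositions~\ref{prop:single-enc} and~\ref{prop-causal-enc}.
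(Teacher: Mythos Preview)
Your proof is correct and follows essentially the same approach as the paper: apply the chain rule to $\mathcal{J}\circ\mathcal{C}_L$ and use that the diagonal hypothesis on $Jac\,\mathcal{J}$ collapses the contraction so the sparsity pattern of $Jac_k\mathcal{C}_L$ is inherited. The paper's proof is just the terse version of what you wrote, stating the Jacobian composition and noting ``as $\mathcal{J}$ is diagonal, the result follows.''
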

\begin{proof}
This simply follows the composition of Jacobians. Indeed, under the assumption, we obtain for all $x,n\in\mathbb{R}^{d\times D}$,
\begin{align*}
Jac_1 \mathcal{J}\circ\mathcal{C}_L(x,n)= Jac (\mathcal{J})(\mathcal{C}_L(x,n))Jac_1 (\mathcal{C}_L)(x,n)\\
Jac_2 \mathcal{J}\circ\mathcal{C}_L(x,n)= Jac (\mathcal{J})(\mathcal{C}_L(x,n))Jac_2 (\mathcal{C}_L)(x,n)
\end{align*}
but as $\mathcal{J}$ is diagonal, the result follows.
\end{proof}

In order to satisfy the condition obtained in proposition~\ref{prop:cond-dec}, we propose a simple decoder layer defined for $x:=[x_1,\dots,x_d]^T\in\mathbb{R}^{d\times D}$ as:
\begin{align}
\label{def:dec}
\mathcal{J}(x):=[\langle x_1,w_1\rangle,\dots,\langle x_d,w_d\rangle]\in\mathbb{R}^d
\end{align}
where $w_i\in\mathbb{R}^D$ are learnable parameters.

\textbf{Final Parameterization.} The final proposed architecture $\mathcal{T}$ obtained can is defined for $x,n\in\mathbb{R}^d$ as:
 \begin{align*}
     \mathcal{T}(x,n):=\mathcal{J}\circ\mathcal{C}_L(E_{1,\textbf{Pos}}(x),E_{2,\textbf{Pos}}(n))\in\mathbb{R}^d.
 \end{align*}
Using the proposed parameterization for $\mathcal{D}, \mathcal{C}_L$, and $E_{i}$ we have the following corollary showing that $\mathcal{T}$ satisfies the structural constraints.
\begin{corollary}
For all $x,n\in\mathbb{R}^d$ and $i,j\in\{1,\dots,d\}$, we have
\begin{equation*}
\begin{aligned}
[Jac_1 \mathcal{T}(x,n)]_{i,j} &= 0,\quad \text{if}\quad j\geq i,\quad \text{and} \\ 
[Jac_2 \mathcal{T}(x,n)]_{i,j} &= 0,\quad \text{if}\quad i\neq j\; ,
\end{aligned}
\end{equation*}
therefore $\mathcal{T}\in\mathcal{F}_d$.
\end{corollary}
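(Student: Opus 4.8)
The plan is to prove the corollary by a three-level application of the chain rule, chaining together the structural results already established for the three building blocks of $\mathcal{T}$: the embedding maps $E_{1,\textbf{Pos}},E_{2,\textbf{Pos}}$, the stacked causal encoder $\mathcal{C}_L$, and the decoder $\mathcal{J}$. The key point is that each block has a Jacobian with a known ``block'' sparsity pattern in the node indices, and that these patterns are stable under composition.

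First I would record the Jacobians of the building blocks. For the embeddings of~\eqref{eq:causal-emb-final}, since $[E_{i,\textbf{Pos}}(w)]_a = w_a\,\theta_{i,a}+\textbf{Pos}_a$, one gets $[Jac\,E_{i,\textbf{Pos}}(w)]_{a,k,j}=[\theta_{i,a}]_k\,\delta_{a,j}$, which vanishes unless $a=j$; hence condition~\eqref{cond-emb} of Proposition~\ref{prop:enc} holds, i.e. the embeddings are ``node-diagonal''. For the decoder of~\eqref{def:dec}, $[Jac\,\mathcal{J}(x)]_{i,j,l}=[w_i]_l\,\delta_{i,j}$, which is exactly the diagonal condition required by Proposition~\ref{prop:cond-dec}. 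And for the stacked encoder, Proposition~\ref{prop-causal-enc} already gives $[Jac_1\mathcal{C}_L(\cdot,\cdot)]_{i,k,j,l}=0$ when $j\geq i$ and $[Jac_2\mathcal{C}_L(\cdot,\cdot)]_{i,k,j,l}=0$ when $i\neq j$.

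Next I would combine the last two: applying Proposition~\ref{prop:cond-dec} to $G:=\mathcal{J}\circ\mathcal{C}_L\colon\mathbb{R}^{d\times D}\times\mathbb{R}^{d\times D}\to\mathbb{R}^d$ shows that $G$ inherits the sparsity of $\mathcal{C}_L$, i.e. $[Jac_1 G]_{i,j,k}=0$ for $j\geq i$ and $[Jac_2 G]_{i,j,k}=0$ for $i\neq j$, for every embedding index $k$. Finally, writing $\mathcal{T}(x,n)=G(E_{1,\textbf{Pos}}(x),E_{2,\textbf{Pos}}(n))$ and noting that the $x$-dependence passes only through $E_{1,\textbf{Pos}}$ and the $n$-dependence only through $E_{2,\textbf{Pos}}$, the chain rule yields
\begin{align*}
[Jac_1\mathcal{T}(x,n)]_{i,j} &= \sum_{a=1}^{d}\sum_{k=1}^{D}[Jac_1 G]_{i,a,k}\,[Jac\,E_{1,\textbf{Pos}}(x)]_{a,k,j},\\
[Jac_2\mathcal{T}(x,n)]_{i,j} &= \sum_{a=1}^{d}\sum_{k=1}^{D}[Jac_2 G]_{i,a,k}\,[Jac\,E_{2,\textbf{Pos}}(n)]_{a,k,j}.
\end{align*}
For the first line, node-diagonality of $E_{1,\textbf{Pos}}$ forces $a=j$, so the sum reduces to $\sum_k[Jac_1 G]_{i,j,k}[\theta_{1,j}]_k$, which vanishes whenever $j\geq i$ (strictly-lower-triangular-in-nodes times node-diagonal stays strictly lower triangular); for the second line, the two diagonal constraints force $a=i$ and $a=j$, so a surviving term requires $i=j$, giving $[Jac_2\mathcal{T}(x,n)]_{i,j}=0$ for $i\neq j$. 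Thus $\mathcal{T}$ satisfies Cond.~\ref{cond:structure}, and since it is a composition of differentiable maps it is differentiable, so $\mathcal{T}\in\mathcal{F}_d$.

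I expect the only real obstacle to be the index bookkeeping: the latent Jacobians carry two node indices and two embedding indices, and one must check that the node-index sparsity survives each composition even though every contraction runs over both a node index and an embedding index of the intermediate variable. This is immediate once the vanishing patterns of $Jac\,E_{i,\textbf{Pos}}$, $Jac\,\mathcal{C}_L$ and $Jac\,\mathcal{J}$ are written out explicitly as above, but it is the step where a careless argument could slip.
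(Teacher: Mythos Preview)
Your proposal is correct and follows the same approach the paper intends: the corollary is stated without proof precisely because it is meant to follow immediately from Propositions~\ref{prop-causal-enc} and~\ref{prop:cond-dec} together with the node-diagonality of the embeddings, exactly the chain-rule composition you spell out. Your version is simply a more explicit rendering of that intended argument.
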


To summarize, our architecture allows to embed into a higher dimensional space an SCM while conserving its structure using the causal embedding, parameterize the set of valid SCM in the latent space using the causal encoder, and then bring back the encoded SCM into the original space without modifying its structure.

 \subsection{Training, Generation and Inference}
 
\textbf{Training.} Recall that in this work, we only focus in a restricted case of ANM, and we consider models of the form $\mathcal{T}_{\text{ANM}}(x,n):=\mathcal{T}(x,0_d) + n$.
Our goal now is to learn $\mathcal{T}_{\text{ANM}}$ such that we recover a generating fixed-point SCM of $\mathbb{P}_{\bm{X}}$ given $n$ samples $\bm{X}_i$ and the topological ordering $P$. To do so, we propose to minimize the mean squared error (MSE), that is:
\begin{align}
\label{eq:MSE}
    \mathbb{E}_{x\sim \mathbb{P}_{\bm{X}}}\Vert x - P^T\mathcal{T}_{\text{ANM}}(Px,0_d)\Vert_2^2.
\end{align} 
We show that if the generating fixed-point SCM is an ANM, then we can recover it uniquely by minimizing~\eqref{eq:MSE}.
\begin{proposition}
\label{prop:anm-id-train}
Let $\mathbb{P}_{\bm{X}}\in\mathcal{P}_2(\mathbb{R}^d)$ and $P\in\Sigma_d$. Assume that there exists $(P,\mathbb{P},H)\in\mathcal{A}_P^{\text{ANM}}(\mathbb{P}_{\bm{X}})$ such that $\mathbb{P}\in\mathcal{P}_2(\mathbb{R}^d)$. Let us also denote $\mathcal{H}_d:=\{h:\mathbb{R}^d\to\mathbb{R}^d\colon
\text{$h$ is differentiable and } [Jac~h(x)]_{i,j}=0~\text{if}~j\geq i\}$.
Then the problem 
\begin{align}
\label{eq:MSE-obj-gene}
    \min_{h\in\mathcal{H}_d}  \mathbb{E}_{x\sim \mathbb{P}_{\bm{X}}}\Vert x - P^Th(Px)\Vert_2^2
\end{align}

admits $\mathbb{P}_{P\bm{X}}$ a.s. a unique solution $h^{*}$ and by denoting $\mathbb{P}:=P^T\circ(\text{I}_d - h^{*})\#\mathbb{P}_{P\bm{X}}$ and $H(x,n)=h^{*}(x)+n$, we have that $(P,\mathbb{P}, H)$ is $\mathbb{P}_{P\bm{X}}$ a.s. the unique element of $\mathcal{A}_P^{\text{ANM}}(\mathbb{P}_{\bm{X}})$.
\end{proposition}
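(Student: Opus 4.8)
The plan is to pass to the ordered coordinates $Y\eqdef P\bm{X}\sim\mathbb{P}_{P\bm{X}}$ and exploit that, since $P$ is orthogonal, $\|x-P^Th(Px)\|_2^2=\|Px-h(Px)\|_2^2$; hence~\eqref{eq:MSE-obj-gene} is equivalent to minimizing $\mathbb{E}_{Y\sim\mathbb{P}_{P\bm{X}}}\|Y-h(Y)\|_2^2=\sum_{i=1}^d\mathbb{E}_Y\bigl[(Y_i-h_i(Y))^2\bigr]$ over $h\in\mathcal{H}_d$. Every $h\in\mathcal{H}_d$ has strictly lower-triangular Jacobian, so $h_i$ depends only on $Y_{<i}\eqdef(Y_1,\dots,Y_{i-1})$, the $d$ summands decouple, and for each $i$ the unique minimizer of $\mathbb{E}[(Y_i-g(Y_{<i}))^2]$ over all square-integrable $g$ is the conditional expectation $m_i(\cdot)\eqdef\mathbb{E}[Y_i\mid Y_{<i}=\cdot]$ --- well defined because $\mathbb{P}_{P\bm{X}}\in\mathcal{P}_2(\R^d)$, which follows from $\mathbb{P}_{\bm{X}}\in\mathcal{P}_2(\R^d)$ and the orthogonality of $P$.

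I would then show the hypothesized $(P,\mathbb{P},H)\in\mathcal{A}_P^{\text{ANM}}(\mathbb{P}_{\bm{X}})$ already realizes this minimizer inside $\mathcal{H}_d$. Write $H(x,n)=h(x)+n$ with $h\in\mathcal{H}_d$ and let $(\bm{X},\bm{N})\sim\gamma(P,\mathbb{P},H)$, $Y=P\bm{X}$, $\tilde{\bm{N}}=P\bm{N}$, so $Y=h(Y)+\tilde{\bm{N}}$. The vector $\tilde{\bm{N}}$ is jointly independent (a coordinate permutation of the jointly independent $\bm{N}$), has mean $0_d$, and lies in $\mathcal{P}_2$ since $\mathbb{P}\in\mathcal{P}_2(\R^d)$; this also gives $h(Y)=Y-\tilde{\bm{N}}\in L^2$, so the objective is finite at $h$. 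Unrolling the triangular recursion $Y_i=h_i(Y_{<i})+\tilde N_i$ shows inductively that $Y_{<i}$ is a measurable function of $\tilde N_{<i}\eqdef(\tilde N_1,\dots,\tilde N_{i-1})$; since $\tilde N_i\perp\tilde N_{<i}$ we obtain $\tilde N_i\perp Y_{<i}$, whence $m_i(Y_{<i})=h_i(Y_{<i})+\mathbb{E}[\tilde N_i\mid Y_{<i}]=h_i(Y_{<i})+\mathbb{E}[\tilde N_i]=h_i(Y_{<i})$ $\mathbb{P}_{P\bm{X}}$-a.s. Thus $m_i$ has the differentiable representative $h_i\in\mathcal{H}_d$, so $h$ attains the global $L^2$-minimum; any other minimizer $\tilde h\in\mathcal{H}_d$ satisfies $\tilde h_i=m_i=h_i$ in $L^2$ of the law of $Y_{<i}$ for every $i$, i.e. $\tilde h(Y)=h(Y)$ $\mathbb{P}_{P\bm{X}}$-a.s. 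This establishes the $\mathbb{P}_{P\bm{X}}$-a.s. unique solution $h^*\eqdef h$ of~\eqref{eq:MSE-obj-gene}.

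For the second claim I would reconstruct the SCM and appeal to Proposition~\ref{prop:anm}. With $h^*=h$, the distribution $\mathbb{P}=P^T\circ(\mathrm{I}_d-h^*)\#\mathbb{P}_{P\bm{X}}$ is the law of $P^T(Y-h(Y))=P^T\tilde{\bm{N}}=\bm{N}$, i.e. exactly the original exogenous distribution: jointly independent, in $\mathcal{P}_1(\R)^{\otimes d}$, with zero mean. Moreover $H(x,n)=h^*(x)+n$ satisfies Cond.~\ref{cond:structure} (as $h^*\in\mathcal{H}_d$), hence $H\in\mathcal{F}_d^{ANM}$, and $p_1\#\gamma(P,\mathbb{P},H)=\mathbb{P}_{\bm{X}}$ because $(P,\mathbb{P},H)$ coincides with the hypothesized element. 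Thus $(P,\mathbb{P},H)\in\mathcal{A}_P^{\text{ANM}}(\mathbb{P}_{\bm{X}})$, so this set is non-empty, and by Proposition~\ref{prop:anm} it contains at most one element $\mathbb{P}_{P\bm{X}}$-a.s.; hence $(P,\mathbb{P},H)$ is its unique element $\mathbb{P}_{P\bm{X}}$-a.s.

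The main obstacle will be the middle paragraph: making rigorous that $\mathbb{E}[Y_i\mid Y_{<i}]=h_i(Y_{<i})$ a.s. --- this needs the measurability of $Y_{<i}$ with respect to $\tilde N_{<i}$ obtained from unrolling the fixed-point equation, the preservation of joint independence under the permutation $P$, and the zero-mean assumption on $\bm{N}$ --- together with the (mild but essential) remark that this conditional expectation admits a differentiable representative, which is precisely what lets us restrict the regression to $\mathcal{H}_d$ without losing the optimizer.
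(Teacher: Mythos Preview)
Your proposal is correct and follows essentially the same route as the paper: reduce to the ordered variables, identify the coordinate-wise MSE minimizer as the conditional expectation $\mathbb{E}[Y_i\mid Y_{<i}]$, and invoke Proposition~\ref{prop:anm} for uniqueness of the resulting ANM. You are in fact more careful than the paper's own proof, which is a terse one-paragraph sketch --- in particular, your explicit verification that the hypothesized $h$ realizes the conditional expectation (via $\tilde N_i\perp Y_{<i}$) and your remark that this furnishes a differentiable representative inside $\mathcal{H}_d$ are details the paper leaves implicit.
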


\begin{proof}
This results follows directly from the resolution of the MSE minimization. We can assume without loss of generality that $P=I_d$. Then for all $i\in\{1,\dots,d\}$, by conditioning on the previous $\bm{X}_{<i}:=[X_1,\dots,X_{i-1}]$ and taking the expectation, a simple calculation gives that the optimal solution is unique $\mathbb{P}_{\bm{X}}$ a.s. and satisfy for all $i$ $h_i(x):=\mathbb{E}(X_i|\bm{X}_{<i}=x_{<i})$ where $x_{<i}:=[x_1,\dots,x_{i-1}]$ and $h:=[h_1,\dots,h_d]$. Now because $\mathbb{P}_{\bm{X}}$ is generated by a fixed-point SCM with the same topological ordering, and thanks to Proposition~\ref{prop:anm}, we deduce directly that $H(x,n)=h(x)+n$. Then, the exogenous distribution follows directly.
\end{proof}

\begin{remark}
It it worth noting that in Proposition~\ref{prop:anm-id-train}, we show that, given the observations of any generative SCM (ANM or not) and a topological ordering, the optimization problem defined in~\eqref{eq:MSE-obj-gene} admits a unique solution that is the conditional expectancies w.r.t the parents for each nodes. More formally, the minimizer of our optimization problem is always the function $h^*:=[h_1^*,\dots,h_d^*]$ where $h_i^*(x):=E_{\bm{X}\sim P_{\bm{X}}}(X_i|Pa(X_i)=Pa(x_i))$. Therefore, whatever the generative SCM is (ANM or not), our procedure is always able to recover the ground truth graph under the assumption that the $h_i^*$’s satisfy the structural minimality assumption. In addition, if the generative SCM happens to be an ANM, then we show that solving~\eqref{eq:MSE-obj-gene}  recovers it uniquely, that is $(x,n)\to h^*(x)+n$ is the only possible choice for the generative fixed-point SCM.
\end{remark}

By construction, $x\to\mathcal{T}_{\text{ANM}}(x,0_d)$ is an element of $\mathcal{H}_d$ and therefore can be used to recover the causal ANM. In practice we would rather minimize  $\mathbb{E}_{x\sim \hat{\mathbb{P}}_{\bm{X}}}\Vert Px - \mathcal{T}(Px,0_d)\Vert_2^2$ where $\hat{\mathbb{P}}_{\bm{X}}:=1/n\sum_{i=1}^n\delta_{\bm{X}_i}$ is the empirical distribution. 

Once $\mathcal{T}_{\text{ANM}}$ is trained by minimizing~\eqref{eq:MSE}, we can define the exogenous distribution of the model as $\mathbb{P}_{\bm{N}}:=P^{T}\circ (\text{I}_d - \mathcal{T}_{\text{ANM}}(\cdot,0_d))\#\mathbb{P}_{P\bm{X}}$. However as we only have access to samples from $\mathbb{P}_{\bm{X}}$, we can only generates the associated samples of $\mathbb{P}_{\bm{N}}$. In order to build a complete generative model, we need also to be able to sample according to $\mathbb{P}_{\bm{N}}$. To do so, we propose to estimate simple functions using the associated samples of the exogenous distribution, that are the $P\bm{N}_i:=P\bm{X}_i - \mathcal{T}(P\bm{X}_i,0_d)$. First recall that $\mathbb{P}_{\bm{N}}$ is a jointly independent distribution and let us denote it $\mathbb{P}_{\bm{N}}=\otimes_{i=1}^d\mathbb{P}_{N_i}$. In order to learn a generating process to obtain new samples from this distribution, we propose to solve these following 1-d problems:
\begin{align*}
\text{find } g_i:\mathbb{R}\to\mathbb{R}\colon g_i\#\mathbb{U}=\mathbb{P}_{N_i}~\forall~i\in\{1,\dots,d\},
\end{align*}
where $\mathbb{U}$ is the 1-d uniform distribution on [0,1]. To do so we propose to minimize the optimal transport distance~\cite{villani2009optimal} and we show that it allows us to recover the exogenous distribution.
\begin{proposition}
\label{prop:generation-noise}
Let $\mathbb{P}_{\bm{N}}\in\mathcal{P}(\mathbb{R})^{\otimes d}$ and assume it is continuous. Then
\begin{align}
\label{eq:simple-1-d-appendix}
\min_{g_k:\mathbb{R}\to\mathbb{R}} \text{OT}( g_k\#\mathbb{U},\mathbb{P}_{N_k}),\quad \forall~k\in\{1,\dots,d\},
\end{align}
admits a solution and by defining for $x\in\mathbb{R}^d$, $g(x):=[g_1(x_1),\dots,g_d(x_d)]$, we have
$$g\#\otimes_{i=1}^d \mathbb{U}=\mathbb{P}_{\bm{N}}.$$
\end{proposition}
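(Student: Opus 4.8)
The plan is to exhibit an explicit minimizer of each one‑dimensional problem, namely the quantile function of $\mathbb{P}_{N_k}$, verify that it makes the objective vanish, and then check that the coordinatewise product of these minimizers transports the product of uniforms onto $\mathbb{P}_{\bm{N}}$. First I would note that, since $\text{OT}$ is a nonnegative functional that vanishes exactly when its two arguments coincide, the infimum in~\eqref{eq:simple-1-d-appendix} is bounded below by $0$; hence to show it is attained it suffices to produce, for each $k$, a measurable $g_k:\mathbb{R}\to\mathbb{R}$ with $g_k\#\mathbb{U}=\mathbb{P}_{N_k}$. For this I would take $g_k:=F_{N_k}^{-1}$, the generalized inverse of the cumulative distribution function $F_{N_k}$ of $\mathbb{P}_{N_k}$, defined by $F_{N_k}^{-1}(u):=\inf\{t\in\mathbb{R}\colon F_{N_k}(t)\geq u\}$. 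Being nondecreasing, $F_{N_k}^{-1}$ is Borel measurable, and the classical inverse‑transform identity gives $F_{N_k}^{-1}\#\mathbb{U}=\mathbb{P}_{N_k}$; the assumed continuity of $\mathbb{P}_{N_k}$ moreover removes the jumps of $F_{N_k}$, so that $F_{N_k}^{-1}$ is a genuine (continuous) inverse on the support. Consequently $\text{OT}(g_k\#\mathbb{U},\mathbb{P}_{N_k})=\text{OT}(\mathbb{P}_{N_k},\mathbb{P}_{N_k})=0$, so $g_k$ solves the $k$‑th problem.

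Next I would assemble the one‑dimensional solutions into a map on $[0,1]^d$. Writing $\mu:=\otimes_{i=1}^d\mathbb{U}$ and $g(x):=[g_1(x_1),\dots,g_d(x_d)]$, the map $g$ acts coordinatewise, so for any measurable rectangle $A_1\times\cdots\times A_d\subset\mathbb{R}^d$ one has $g^{-1}(A_1\times\cdots\times A_d)=g_1^{-1}(A_1)\times\cdots\times g_d^{-1}(A_d)$, whence $\mu\bigl(g^{-1}(A_1\times\cdots\times A_d)\bigr)=\prod_{i=1}^d (g_i\#\mathbb{U})(A_i)=\prod_{i=1}^d\mathbb{P}_{N_i}(A_i)=\mathbb{P}_{\bm{N}}(A_1\times\cdots\times A_d)$, using the previous step together with the fact that $\mathbb{P}_{\bm{N}}\in\mathcal{P}(\mathbb{R})^{\otimes d}$ is by definition the product of its marginals $\mathbb{P}_{N_i}$. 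Since measurable rectangles form a $\pi$‑system generating the Borel $\sigma$‑algebra of $\mathbb{R}^d$ and both $g\#\mu$ and $\mathbb{P}_{\bm{N}}$ are probability measures agreeing on it, the $\pi$–$\lambda$ theorem yields $g\#\mu=\mathbb{P}_{\bm{N}}$, which is the claimed identity.

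As for the main obstacle: there is essentially no deep difficulty, and the whole argument is a packaging of two standard facts—inverse‑transform sampling in one dimension, and ``the pushforward of a product measure by a product map is the product of the pushforwards''. The only points demanding a little care are (i) that the generalized inverse $F_{N_k}^{-1}$ is well defined and Borel measurable with no regularity beyond $F_{N_k}$ being a distribution function, and (ii) the passage from agreement on rectangles to agreement on all Borel sets via a $\pi$–$\lambda$ argument; both are routine. I would also remark that if one insists the $\text{OT}$ cost be, say, a $p$‑th moment transport cost, no integrability hypothesis is needed for the statement as phrased, since $g_k\#\mathbb{U}=\mathbb{P}_{N_k}$ makes the transport cost literally zero (attained by the identity coupling).
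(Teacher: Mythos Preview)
Your argument is correct and follows the same two-step structure as the paper (existence of minimizers, then the product identity via independence), but your existence argument is more constructive: you exhibit the quantile function $F_{N_k}^{-1}$ explicitly and observe that it drives the transport cost to zero, whereas the paper simply invokes \cite{villani2009optimal} for existence based on continuity of source and target. Your route is more elementary and self-contained, and it has the bonus of identifying the minimizer concretely (which matches the paper's earlier remark that~\eqref{eq:simple-1-d} ``can be efficiently solved by estimating the quantile functions''); the paper's route is terser but relies on an external reference. For the second step, the paper just says ``using the independence of $\mathbb{P}_{\bm{N}}$, the second equality follows directly'', while you spell out the $\pi$--$\lambda$ argument on rectangles; again, same idea, more detail on your side.
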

\begin{proof}
The existence of the solution follows directly from~\cite{villani2009optimal} thanks to the continuity of both the source and the target probability measures. Then using the independence of $\mathbb{P}_{\bm{N}}$, the second equality follows directly.
\end{proof}
The resolution~\eqref{eq:simple-1-d-appendix} is a well-studied problem and can be solved for example by estimating the quantile functions of each $\mathbb{P}_{N_i}$. Then once the $g_i$ are estimated, we can use them to obtain new sample of $\mathbb{P}_{\bm{N}}$ by drawing $d$ i.i.d samples from $\mathbb{U}$.

\textbf{Causal Inference.} Once the model is trained by minimizing~\eqref{eq:MSE}, that is $\mathcal{T}_{\text{ANM}}$ is learned, one has now access to the full SCM and can perform any causal operations. More precisely, for any $\mathcal{T}\in\mathcal{F}_d^{\text{ANM}}$, we can define  $\text{NS}(\mathcal{T}):\mathbb{R}^d\to\mathbb{R}^d$ that transforms a noise sample to the associated data sample and defined as
\begin{align*}
    \text{NS}(\mathcal{T})(n)=x_{\mathcal{T}}(n)=\mathcal{T}(\cdot,n)^{\circ d}
\end{align*}
and $\text{SN}(\mathcal{T}):\mathbb{R}^d\to\mathbb{R}^d$ that transforms a data sample to the noise associated and defined as:
\begin{align*}
\text{SN}(\mathcal{T})(x)=x_{\mathcal{T}}^{-1}(x):=x - \mathcal{T}(x,0_d).
\end{align*}
Given these two operators, we can now generate new samples
and perform counterfactual computations. More formally, $\text{NS}(\mathcal{T}_{\text{ANM}})$ allows us to generate a sample from the observational distribution given a sample from the exogenous one. In addition, by modifying directly $\mathcal{T}_{\text{ANM}}$, one can define a new function $\mathcal{T}_{\text{ANM}}^{\text{do}}$ that also induces a fixed-point SCM living in $\mathcal{F}_d^{\text{ANM}}$, and compute $\text{NS} (\mathcal{T}_{\text{ANM}}^{\text{do}})\circ \text{SN}(\mathcal{T}_{\text{ANM}})(x)$ on a data point $x$ in order to obtain the counterfactual sample of $x$ associated to the operation $\text{do}$.

\subsection{Discussion on the Fixed-Points of FiP} 

Observe that the model is designed to ensure that the observations are the fixed-points of the model and so for all the iterations of the training (even at initialization). Therefore, the fixed-point error of our model is $0$. In fact, it is rather the functional relationship $\mathcal{T}_{\text{ANM}}$ as well as the noise distribution induced by $\mathcal{T}_{\text{ANM}}$ that improves over time during training.

More formally, assume for simplicity that $P=I_d$. Then at each iteration of the algorithm, we can define the noise distribution induced by our current model $\mathcal{T}_{\text{ANM}}$ as the law $\tilde{\bm{N}} := \bm{X} - \mathcal{T}_{\text{ANM}}(\bm{X},0)$. During training, we are minimizing (w.r.t the parameters of $\mathcal{T}_{\text{ANM}}$) the second moment of this distribution, that is~\ref{eq:MSE-training}, which forces it to get closer to the true noise distribution under additive assumption. However, for any network $\mathcal{T}_{\text{ANM}}$ and associated noise distribution of the form $\tilde{\bm{N}} = \bm{X} - \mathcal{T}_{\text{ANM}}(\bm{X},0)$ where $\bm{X}$ follows the observational distribution, we always have that $\bm{X} = \mathcal{T}(\bm{X},0)+ \tilde{\bm{N}}$, in addition $\bm{X}$ is the only fixed-point of $x\to \mathcal{T}_{\text{ANM}}(x,0)+ \tilde{\bm{N}}$. This is ensured thanks to our inductive bias that forces $\mathcal{T}_{\text{ANM}}$ to be strictly lower-triangular w.r.t $\bm{X}$, and therefore guarantees that the fixed-point problem  on $z$, $z=\mathcal{T}(z,0) + \tilde{\bm{N}}$, admits a unique solution that must be $\bm{X}$.

\subsection{Discussion on the Approximation Power of FiP}

In~\cite{yun2019transformers}, the authors show that transformers can universally approximate arbitrary continuous sequence-to-sequence functions on a compact domain. In our setting, by viewing an observable sample $\bm{X}=[X_1,\dots,X_d]$ as a finite sequence of size $d$ ranked in the topological order, an SCM can be seen as a conditional sequence-to-sequence map that given the noise $\bm{N}$, maps the sequence $\bm{X}$ to itself. While the extension of their results to our transformer-based network is out of scope of this paper, we demonstrate experimentally that our architecture is still able to efficiently approximate SCMs on various problems.

\section{Additional Information on Experiments}
\label{sec:add-experiment}

\textbf{Synthetically Generated Datasets.} To obtain the two SCM distribution $\mathbb{P}_{\text{IN}}$ and $\mathbb{P}_{\text{OUT}}$  described in~\ref{sec:eval-amortization}, we reproduce exactly the setting of~\cite{lorch2022amortized} in Appendix A Table 3, with the difference that we do not consider Cauchy distributions for the exogenous variables, as they are not integrable and therefore the MSE minimization problem is not well defined, as well as the geometric random graphs distributions, as they tend to produce graphs without any edges when setting them with a small radius.

\paragraph{Optimization of $\mathcal{M}$.} Recall that $\mathcal{M}$ uses the exact same encoder as the one proposed in~\cite{lorch2022amortized}, on the top of which we add a simple linear layer to classify the encoded nodes whether they are leaves or not. The description of their encoder can be found in Appendix C.2 of~\cite{lorch2022amortized}. To train the model, we use the Adam implementation of Pytorch~\cite{paszke2017automatic} with a learning rate of $1e-4$ with a weight decay of $5e-9$. We run the training for 2000 epochs, where each epochs contains 96 newly generated datasets from $\mathbb{P}_{\text{IN}}$. More precisely, for each configuration of distributions of the training $\mathbb{P}_{\text{IN}}$, we sample 16 SCMs, from which we obtain 16 pairs $(\mathcal{D}_{\text{tr}},\mathcal{G}_{\text{train}})$. Each dataset is of size $200\times 100$, where $n_{\text{train}}=200$ and $d_{\text{train}}=100$ denote the sample size and the dimension (or the number of nodes) respectively. We use 4 A100 GPUs with a total of 320 GiB of memory and 85 CPUs to train our architecture $\mathcal{M}$. The total batch size (cross GPUs) used is 8.

\textbf{Optimization of $\mathcal{T}_{\text{ANM}}$}. As explained in section~\ref{sec:eval-fp-scm}, we consider a small architecture with only 2 layers $L=2$, $d_{\text{head}}=32$, with $8$ heads, an a latent dimension of $D=128$. Also recall that our causal encoder uses a MLP, that is as in the classical transformer a fully connected network with two layers and a ReLU activation where the hidden dimension is set to $d_{\text{hidden}}:=128$. Given a dataset $\mathcal{D}\in\mathbb{R}^{n_{\text{tot}}
\times d}$ where $n_{\text{tot}}$ is the total number of samples, we split it into three datasets w.r.t the sample size, with ratio $0.8$, $0.1$, $0.1$ for training, validation and testing respectively. We consider a batch size of $n_{\text{batch}}:=\min(1024, 0.8 * n_{\text{tot}})$ to train $\mathcal{T}_{\text{ANM}}$. Finally, we consider the same optimizer as the one used to train $\mathcal{M}$.

\textbf{Additional Datasets.} In addition of our test metadatasets defined in~\ref{sec:eval-amortization}, we consider C-Suite~\cite{geffner2022deep}, SynTReN~\cite{van2006syntren} and the real-world dataset of protein measurements from~\cite{sachs2005causal}. C-suite consists of various discrete, mixed and continuous datasets but we only consider the continuous ones that are: lingauss, linexp, nonlingauss, nonlin simpson, symprod simpson, large backdoor, and weak arrows. These datasets admit different size of variables ranging from $d=2$ to $d=9$ nodes and test specific structures to assert the performance of models. For these datasets we generate $n_{\text{tot}}=10000$ samples. SynTReN creates
synthetic transcriptional regulatory networks and produces simulated gene expression data that mimics experimental data. We use the datasets generated by~\cite{lachapelle2019gradient} that consists of 5 datasets of $n_{\text{tot}}=500$ samples with $d=20$ nodes. Finally, Proteins cells consists of one true world dataset of $n_{\text{tot}}=853$ samples with $d=11$. Following~\cite{geffner2022deep}, we generate multiple of them by randomly sub-sampling $800$ samples and create 5 datasets from it.

\textbf{Baselines.} In this work, we compare our methods with the following baselines:
\begin{itemize}
    \item AVICI~\cite{lorch2022amortized} using the trained models available at \hyperlink{}{https://github.com/larslorch/avici/tree/main}. When apply on LIN $\textbf{OUT}$, we use their model trained only on linear functional relationships, while on RFF $\textbf{OUT}$, we use the specific model trained for RFF functions.
    \item GES~\cite{chickering2002optimal}, GOLEM~\cite{ng2020role}, DAG-GNN~\cite{yu2019dag}, GraN-DAG~\cite{lachapelle2019gradient} using the implementations of~\cite{zhang2021gcastle} available at 
    \hyperlink{}{https://github.com/huawei-noah/trustworthyAI/tree/master/gcastle}.
    \item   DP-DAG~\cite{charpentier2022differentiable}, using their implementation available at 
        ~\hyperlink{}{https://github.com/sharpenb/Differentiable-DAG-Sampling}.
    \item DECI~\cite{geffner2022deep}, using their implementation available at
        ~\hyperlink{}{https://github.com/microsoft/causica}.
    \item Dowhy~\cite{dowhy_gcm}, using their implementation available at 
        ~\hyperlink{}{https://github.com/py-why/dowhy}.
\end{itemize}
While some of these methods should in principle be able to compute counterfactual samples, the only implementations that offer such computations are DECI and DoWhy and we therefore only compare to them for counterfactual predictions.

\textbf{Computation of the Causal Graph.} To obtain a binary graph from $\mathcal{T}_{\text{ANM}}$, we first estimate a continuous graph defined as the mean of its  absolute value Jacobian over samples, i.e. $\hat{\mathcal{G}_c}:=\mathbb{E}_{\bm{X}\sim \hat{P}_{\bm{X}}}|\text{Jac}_1 P^T \mathcal{T}_{\text{ANM}}(P\bm{X},0_d)|$ where $\hat{P}_{\bm{X}}$ are the train samples. Then, we obtain the binary graph by applying a naive uniform thresholding $\tau=0.1$, i.e. $\hat{\mathcal{G}}(\tau):=(\hat{\mathcal{G}_c}>\tau)$, thus discarding values smaller than $\tau$.

\textbf{Counterfactual Generation.} To evaluate the causal inference of our model, we propose to predict counterfactual samples. To measure the quality of these predicted samples, we consider only settings where we have access to the simulators in order to generate new counterfactual samples. Therefore we only consider the test datasets introduced in~\ref{sec:eval-amortization}, that are LIN \textbf{IN}, LIN \textbf{OUT}, RFF \textbf{IN}, RFF \textbf{OUT} and C-suite. For datasets of size $\mathcal{D}^{n\times d}$, we repeat $d$ times the following procedure: (1) we select randomly a node $k\in\{1,\dots,d\}$ among the $d$ nodes, (2) then we randomly sample a value in the range of this variable by drawing a sample from the uniform distribution of $[\min(X_k),\max(X_k)]$ where the $\min$ ans $\max$ are taken w.r.t the available samples. (3) Finally we generate 100 new samples according to this interventions using new observations sampled according the true generative SCM.

\section{Additional Results}

\subsection{Graph Discovery}

\textbf{More details on the Causal Discovery Experiments.} In tables~\ref{benchmark-table-f1-rff-out}, \ref{benchmark-table-f1-lin-out}, we show more detailed versions of the results presented in table~\ref{table:f1-pred-baseline}, where we report the mean and the standard deviation of the oriented F1 scores obtained by our method and other baselines on the out-of-distributions metadatasets RFF \textbf{OUT} and LIN \textbf{OUT} respectively and so for each dimension $d\in\{10, 20, 50\}$ of the problems considered. Additionally, in tables~\ref{benchmark-table-f1-rff-in}, \ref{benchmark-table-f1-lin-in}, we also compare the oriented F1 score obtained by our method with other baselines on the in-distribution metadatasets RFF \textbf{IN} and LIN \textbf{IN}. Note that in all these tables, we add the results obtained by FiP when either the true topological ordering $P$ or the true graph $\mathcal{G}$ is given.

\textbf{A More Precise Evaluation of FiP for Causal Discovery.} While our naive threshold rule, that is $\hat{\mathcal{G}}(\tau):=(\hat{\mathcal{G}_c}>\tau)$ with $\tau=0.1$, to obtain the causal graph allows to recover most of the edges, we observe in tables~\ref{benchmark-table-f1-rff-out}, \ref{benchmark-table-f1-lin-out},\ref{benchmark-table-f1-rff-in}, \ref{benchmark-table-f1-lin-in} and in figure~\ref{fig:f1-score-eval} that when the true graph $\mathcal{G}$ is given to FiP, we do not obtain a perfect F1 score. This is because some of the true edges $(i,j)$ in the continous graph $\hat{\mathcal{G}_c}$ have a score lower than the prescribed threshold $\tau =0.1$ and therefore are discarded by our naive rule. With the aim of achieving greater precision in the evaluation of FiP's performance, we report the directed scores to the
ground-truth binary causal graphs by using the area under the receiver operating characteristic curve (AUC-ROC) and the area under the curve of precision-recall-gain (AUC-PRG)~\cite{flach2015precision}. Note that the latter is more adapted to measure the F1 score performance rather than the more usual AUC-PR, as AUC-PRG respects the harmonic scale of the F1 score~\cite{flach2015precision} and therefore is a better measure to assert the performances of FiP in term of F1 scores. Both AUC-ROC and AUC-PRG have the
important advantage to be independent of a threshold choice $\tau$~\cite{vowels2022d} and therefore offer a better understanding on the performances of our method. In table~\ref{eval-auc-roc-fip-rff-out}, \ref{eval-auc-roc-fip-lin-out}, \ref{eval-auc-roc-fip-rff-in}, \ref{eval-auc-roc-fip-lin-in}, we report the directed AU-ROC scores obtained by our method on the metadatasets presented in section~\ref{sec:eval-amortization}, and in table~\ref{eval-auc-pr-fip-rff-out}, \ref{eval-auc-pr-fip-lin-out}, \ref{eval-auc-pr-fip-rff-in}, \ref{eval-auc-pr-fip-lin-in}, we report the directed AUC-PRG obtained by our method on the same metadatasets. We show that FiP is able to obtain almost consistently very high accuracy both in term of AUC-ROC and AUC-PRG. Note that when the true graph $\mathcal{G}$ is provided to FiP, we obtain, as expected, a perfect score in term of AUC-PRG, therefore the model still learns the fixed-point SCM using all the available (true) edges.

\begin{table*}[!t]
\caption{We compare the directed F1 scores obtained by our model against various baselines on the out-of-distribution test metadataset RFF \textbf{OUT} introduced in section~\ref{sec:eval-amortization}. For each dimension $d\in\{10, 20, 50\}$, we report the mean and the standard deviation of the F1 scores obtained over all the datasets.}
\label{benchmark-table-f1-rff-out}
\begin{center}
\begin{small}
\begin{sc}
\begin{tabular}{lcccc}
\toprule
\multirow{2}{*}{Methods} & \multicolumn{4}{c}{RFF \textbf{OUT}}\\
 & $d=10$ & $d=20$ & $d=50$ &  Ave.  \\
\midrule
PC  & 0.38$\pm$ 0.16& 0.48$\pm$ 0.069& 0.32$\pm$ 0.041& 0.40$\pm$ 0.12\\
GES  & 0.41$\pm$ 0.06& 0.37$\pm$ 0.04& 0.32$\pm$ 0.05& 0.37$\pm$ 0.06 \\
GOLEM  & 0.36$\pm$ 0.14& 0.34$\pm$ 0.12& 0.24$\pm$ 0.084& 0.31$\pm$ 0.13 \\
DECI  & 0.68$\pm$ 0.16& 0.70$\pm$ 0.14& \textbf{0.84$\pm$ 0.039}& 0.74$\pm$ 0.14 \\
GraN-DAG  & 0.34$\pm$ 0.32& 0.55$\pm$ 0.21& 0.62$\pm$ 0.080& 0.50$\pm$ 0.26 \\
DAG-GNN  & 0.50$\pm$ 0.12& 0.50$\pm$ 0.08& 0.34$\pm$ 0.17& 0.44$\pm$ 0.15 \\
DP-DAG  & 0.19$\pm$ 0.087& 0.17$\pm$ 0.054& 0.12$\pm$ 0.039& 0.16$\pm$ 0.067 \\
AVICI  & 0.61$\pm$ 0.21& \textbf{0.88$\pm$ 0.062}& 0.72$\pm$ 0.068& 0.74$\pm$ 0.17 \\
FiP  &\textbf{0.80$\pm$ 0.22}& 0.88$\pm$ 0.085& 0.78$\pm$ 0.054& \textbf{0.81$\pm$ 0.15} \\
\midrule
FiP w. $P$  &0.97$\pm$ 0.038& 0.94$\pm$ 0.044& 0.86$\pm$ 0.058& 0.92$\pm$ 0.068 \\
FiP w. $\mathcal{G}$ &0.97$\pm$ 0.037& 0.98$\pm$ 0.014& 0.98$\pm$ 0.012& 0.98 $\pm$ 0.026 \\
\bottomrule
\end{tabular}
\end{sc}
\end{small}
\end{center}
\vskip -0.1in
\end{table*}

\begin{table*}[!t]
\caption{We compare the directed F1 scores obtained by our model against various baselines on the out-of-distribution test metadataset LIN \textbf{OUT} introduced in section~\ref{sec:eval-amortization}. For each dimension $d\in\{10, 20, 50\}$, we report the mean and the standard deviation of the F1 scores obtained over all the datasets.}
\label{benchmark-table-f1-lin-out}
\begin{center}
\begin{small}
\begin{sc}
\begin{tabular}{lcccc}
\toprule
\multirow{2}{*}{Methods} & \multicolumn{4}{c}{LIN \textbf{OUT}}\\
 & $d=10$ & $d=20$ & $d=50$ &  Ave.\\
\midrule
PC  & 0.53$\pm$ 0.16& 0.53$\pm$ 0.05& 0.36$\pm$ 0.12& 0.47$\pm$ 0.14\\
GES  & 0.51$\pm$ 0.098& 0.56$\pm$ 0.12& 0.60$\pm$ 0.13& 0.56$\pm$ 0.12 \\
GOLEM  & 0.77$\pm$ 0.16& \textbf{0.91$\pm$ 0.054}& 0.49$\pm$ 0.36& 0.73$\pm$ 0.29 \\
DECI  & 0.41$\pm$ 0.15& 0.37$\pm$ 0.074& 0.29$\pm$ 0.12& 0.36$\pm$ 0.13 \\
GraN-DAG  & 0.18$\pm$ 0.13& 0.46$\pm$ 0.13& 0.22$\pm$ 0.17& 0.29$\pm$ 0.19 \\
DAG-GNN  & 0.71$\pm$ 0.16& 0.66$\pm$ 0.069& 0.44$\pm$ 0.19& 0.61$\pm$ 0.19 \\
DP-DAG  & 0.22$\pm$ 0.081& 0.20$\pm$ 0.043& 0.098$\pm$ 0.029& 0.17$\pm$ 0.074 \\
AVICI  & 0.77$\pm$ 0.16& 0.78$\pm$ 0.098& \textbf{0.64$\pm$ 0.16}& 0.73$\pm$ 0.16 \\
FiP &\textbf{0.95$\pm$ 0.076}& 0.77$\pm$ 0.084& 0.56$\pm$ 0.18& \textbf{0.76$\pm$ 0.20}\\
\midrule
FiP w. $P$  &0.96$\pm$ 0.048& 0.78$\pm$ 0.11& 0.58$\pm$ 0.20& 0.77$\pm$ 0.21 \\
FiP w. $\mathcal{G}$ &0.98$\pm$ 0.036& 0.91$\pm$ 0.059& 0.82$\pm$ 0.11& 0.92 $\pm$ 0.090 \\
\bottomrule
\end{tabular}
\end{sc}
\end{small}
\end{center}
\vskip -0.1in
\end{table*}

\begin{table*}[!t]
\caption{We compare the directed F1 scores obtained by our model against various baselines on the in-distribution test metadataset RFF \textbf{IN} introduced in section~\ref{sec:eval-amortization}. For each dimension $d\in\{10, 20, 50\}$, we report the mean and the standard deviation of the F1 scores obtained over all the datasets.}
\label{benchmark-table-f1-rff-in}
\begin{center}
\begin{small}
\begin{sc}
\begin{tabular}{lcccc}
\toprule
\multirow{2}{*}{Methods} & \multicolumn{4}{c}{RFF \textbf{IN}}\\
 & $d=10$ & $d=20$ & $d=50$ &  Ave. \\
\midrule
PC  & 0.48$\pm$ 0.14& 0.41$\pm$ 0.18& 0.30$\pm$ 0.15& 0.39$\pm$ 0.17\\
GES  & 0.38$\pm$ 0.16& 0.48$\pm$ 0.17& 0.47$\pm$ 0.11& 0.44$\pm$ 0.16 \\
GOLEM  & 0.42$\pm$ 0.10& 0.24$\pm$ 0.087& 0.21$\pm$ 0.066& 0.29$\pm$ 0.12 \\
DECI  & 0.59$\pm$ 0.094& 0.64$\pm$ 0.044& 0.73$\pm$ 0.087   & 0.65$\pm$ 0.097 \\
GraN-DAG  & 0.55$\pm$ 0.19& 0.46$\pm$ 0.12& 0.49$\pm$ 0.14& 0.50$\pm$ 0.16 \\
DAG-GNN  & 0.48$\pm$ 0.16& 0.53$\pm$ 0.12& 0.35$\pm$ 0.14& 0.45$\pm$ 0.16 \\
DP-DAG  & 0.30$\pm$ 0.14& 0.20$\pm$ 0.087& 0.10$\pm$ 0.038& 0.20$\pm$ 0.13 \\
AVICI  & \textbf{0.92$\pm$ 0.097}& \textbf{0.89$\pm$ 0.079}& \textbf{0.80$\pm$ 0.14
}& \textbf{0.87$\pm$ 0.12} \\
FiP &0.88$\pm$ 0.10& 0.83$\pm$ 0.089& 0.75$\pm$ 0.10& 0.82$\pm$ 0.11\\
\midrule
FiP w. $P$  &0.94$\pm$ 0.060& 0.87$\pm$ 0.072& 0.86$\pm$ 0.10& 0.90$\pm$ 0.084 \\
FiP w. $\mathcal{G}$ &0.97$\pm$ 0.042& 0.96$\pm$ 0.047& 0.91$\pm$ 0.11& 0.94 $\pm$ 0.077 \\
\bottomrule
\end{tabular}
\end{sc}
\end{small}
\end{center}
\vskip -0.1in
\end{table*}

\begin{table*}[!t]
\caption{We compare the directed F1 scores obtained by our model against various baselines on the in-distribution test metadataset LIN \textbf{IN} introduced in section~\ref{sec:eval-amortization}. For each dimension $d\in\{10, 20, 50\}$, we report the mean and the standard deviation of the F1 scores obtained over all the datasets.}
\label{benchmark-table-f1-lin-in}
\begin{center}
\begin{small}
\begin{sc}
\begin{tabular}{lcccc}
\toprule
\multirow{2}{*}{Methods} & \multicolumn{4}{c}{LIN \textbf{IN}}\\
 & $d=10$ & $d=20$ & $d=50$ &  Ave. \\
\midrule
PC  & 0.56$\pm$ 0.23& 0.48$\pm$ 0.22& 0.42$\pm$ 0.21& 0.48$\pm$ 0.22\\
GES  & 0.54$\pm$ 0.23& 0.57$\pm$ 0.18& 0.51$\pm$ 0.25& 0.54$\pm$ 0.22 \\
GOLEM  & \textbf{0.88$\pm$ 0.049}& 0.86$\pm$ 0.15& \textbf{0.83$\pm$ 0.14}& \textbf{0.86$\pm$ 0.12} \\
DECI  & 0.28$\pm$ 0.18& 0.32$\pm$ 0.16& 0.37$\pm$ 0.20& 0.32$\pm$ 0.18 \\
GraN-DAG  & 0.43$\pm$ 0.23& 0.37$\pm$ 0.19& 0.30$\pm$ 0.16& 0.37$\pm$ 0.21 \\
DAG-GNN  & 0.74$\pm$ 0.11& 0.60$\pm$ 0.14& 0.52$\pm$ 0.21& 0.62$\pm$ 0.18 \\
DP-DAG  & 0.34$\pm$ 0.16& 0.19$\pm$ 0.097& 0.095$\pm$ 0.035& 0.21$\pm$ 0.081 \\
AVICI  & 0.84$\pm$ 0.19& \textbf{0.92$\pm$ 0.10}& 0.77$\pm$ 0.17& 0.84$\pm$ 0.16 \\
FiP & 0.87$\pm$ 0.12& 0.77$\pm$ 0.19& 0.71$\pm$ 0.18& 0.78$\pm$ 0.17\\
\midrule
FiP w. $P$  &0.89$\pm$ 0.10& 0.77$\pm$ 0.19& 0.71$\pm$ 0.17& 0.79$\pm$ 0.17 \\
FiP w. $\mathcal{G}$ &0.91$\pm$ 0.096& 0.81$\pm$ 0.14& 0.83$\pm$ 0.12& 0.85 $\pm$ 0.13\\
\bottomrule
\end{tabular}
\end{sc}
\end{small}
\end{center}
\vskip -0.1in
\end{table*}

\begin{table*}[!t]
\caption{We show the directed AUC-ROC scores obtained by our model on the out-of-distribution test metadataset RFF \textbf{OUT} introduced in section~\ref{sec:eval-amortization}. For each dimension $d\in\{10, 20, 50\}$, we report the mean and the standard deviation of the scores obtained over all the datasets. Note that this metric cannot be computed when we are given the true graph as the False Positive Rate (FPR) is always 0.}
\label{eval-auc-roc-fip-rff-out}
\begin{center}
\begin{small}
\begin{sc}
\begin{tabular}{lcccc}
\toprule
\multirow{2}{*}{Methods} & \multicolumn{4}{c}{RFF \textbf{OUT}}\\
 & $d=10$ & $d=20$ & $d=50$ &  Ave.  \\
\midrule
FiP  &0.91$\pm$ 0.052& 0.97$\pm$ 0.040& 0.94$\pm$ 0.022& 0.94$\pm$ 0.045 \\
FiP w. $P$  &0.98$\pm$ 0.0044& 0.99$\pm$ 0.0065& 0.99$\pm$ 0.0031& 0.99$\pm$ 0.0063 \\
\bottomrule
\end{tabular}
\end{sc}
\end{small}
\end{center}
\vskip -0.1in
\end{table*}

\begin{table*}[!t]
\caption{We show the directed AUC-ROC scores obtained by our model on the out-of-distribution test metadataset LIN \textbf{OUT} introduced in section~\ref{sec:eval-amortization}. For each dimension $d\in\{10, 20, 50\}$, we report the mean and the standard deviation of the scores obtained over all the datasets. Note that this metric cannot be computed when we are given the true graph as the False Positive Rate (FPR) is always 0.}
\label{eval-auc-roc-fip-lin-out}
\begin{center}
\begin{small}
\begin{sc}
\begin{tabular}{lcccc}
\toprule
\multirow{2}{*}{Methods} & \multicolumn{4}{c}{LIN \textbf{OUT}}\\
 & $d=10$ & $d=20$ & $d=50$ &  Ave.  \\
\midrule
FiP  &0.97$\pm$ 0.028& 0.90$\pm$ 0.038& 0.73$\pm$ 0.11& 0.87$\pm$ 0.11 \\
FiP w. $P$  &0.98$\pm$ 0.011& 0.92$\pm$ 0.065& 0.82$\pm$ 0.12& 0.91$\pm$ 0.10 \\
\bottomrule
\end{tabular}
\end{sc}
\end{small}
\end{center}
\vskip -0.1in
\end{table*}

\begin{table*}[!t]
\caption{We show the directed AUC-ROC scores obtained by our model on the in-distribution test metadataset RFF \textbf{IN} introduced in section~\ref{sec:eval-amortization}. For each dimension $d\in\{10, 20, 50\}$, we report the mean and the standard deviation of the scores obtained over all the datasets. Note that this metric cannot be computed when we are given the true graph as the False Positive Rate (FPR) is always 0.}
\label{eval-auc-roc-fip-rff-in}
\begin{center}
\begin{small}
\begin{sc}
\begin{tabular}{lcccc}
\toprule
\multirow{2}{*}{Methods} & \multicolumn{4}{c}{RFF \textbf{IN}}\\
 & $d=10$ & $d=20$ & $d=50$ &  Ave.  \\
\midrule
FiP  &0.93$\pm$ 0.053& 0.93$\pm$ 0.055& 0.93$\pm$ 0.052& 0.93$\pm$ 0.053 \\
FiP w. $P$  &0.97$\pm$ 0.017& 0.98$\pm$ 0.030& 0.96$\pm$ 0.052& 0.97$\pm$ 0.033 \\
\bottomrule
\end{tabular}
\end{sc}
\end{small}
\end{center}
\vskip -0.1in
\end{table*}

\begin{table*}[!t]
\caption{We show the directed AUC-ROC scores obtained by our model on the in-distribution test metadataset LIN \textbf{IN} introduced in section~\ref{sec:eval-amortization}. For each dimension $d\in\{10, 20, 50\}$, we report the mean and the standard deviation of the scores obtained over all the datasets.  Note that this metric cannot be computed when we are given the true graph as the False Positive Rate (FPR) is always 0.}
\label{eval-auc-roc-fip-lin-in}
\begin{center}
\begin{small}
\begin{sc}
\begin{tabular}{lcccc}
\toprule
\multirow{2}{*}{Methods} & \multicolumn{4}{c}{LIN \textbf{IN}}\\
 & $d=10$ & $d=20$ & $d=50$ &  Ave.  \\
\midrule
FiP  &0.93$\pm$ 0.062& 0.88$\pm$ 0.13& 0.90$\pm$ 0.094& 0.91$\pm$ 0.098 \\
FiP w. $P$  &0.95$\pm$ 0.039& 0.89$\pm$ 0.12& 0.91$\pm$ 0.097& 0.92$\pm$ 0.096 \\
\bottomrule
\end{tabular}
\end{sc}
\end{small}
\end{center}
\vskip -0.1in
\end{table*}

\begin{table*}[!t]
\caption{We show the directed AUC-PRG scores obtained by our model on the out-of-distribution test metadataset RFF \textbf{OUT} introduced in section~\ref{sec:eval-amortization}. For each dimension $d\in\{10, 20, 50\}$, we report the mean and the standard deviation of the scores obtained over all the datasets.}
\label{eval-auc-pr-fip-rff-out}
\begin{center}
\begin{small}
\begin{sc}
\begin{tabular}{lcccc}
\toprule
\multirow{2}{*}{Methods} & \multicolumn{4}{c}{RFF \textbf{OUT}}\\
 & $d=10$ & $d=20$ & $d=50$ &  Ave.  \\
\midrule
FiP  &0.99$\pm$ 0.0069& 1.0$\pm$ 0.0040& 1.0$\pm$ 0.0022& 0.99$\pm$ 0.0056 \\
FiP w. $P$  &1.0$\pm$ 0.00078& 1.0$\pm$ 0.00034& 1.0 $\pm$ 0.00014& 1.0$\pm$ 0.0012 \\
FiP w. $\mathcal{G}$  &1.0$\pm$ 0.0& 1.0$\pm$ 0.0& 1.0$\pm$ 0.0& 1.0 $\pm$ 0.0 \\
\bottomrule
\end{tabular}
\end{sc}
\end{small}
\end{center}
\vskip -0.1in
\end{table*}

\begin{table*}[!t]
\caption{We show the directed AUC-PRG scores obtained by our model on the out-of-distribution test metadataset LIN \textbf{OUT} introduced in section~\ref{sec:eval-amortization}. For each dimension $d\in\{10, 20, 50\}$, we report the mean and the standard deviation of the scores obtained over all the datasets.}
\label{eval-auc-pr-fip-lin-out}
\begin{center}
\begin{small}
\begin{sc}
\begin{tabular}{lcccc}
\toprule
\multirow{2}{*}{Methods} & \multicolumn{4}{c}{LIN \textbf{OUT}}\\
 & $d=10$ & $d=20$ & $d=50$ &  Ave.  \\
\midrule
FiP  &1.0$\pm$ 0.0026& 0.99$\pm$ 0.013& 0.96$\pm$ 0.034& 0.87$\pm$ 0.11 \\
FiP w. $P$  &1.0$\pm$ 0.00073& 0.99$\pm$ 0.0099& 0.98$\pm$ 0.016& 0.99$\pm$ 0.012 \\
FiP w. $\mathcal{G}$  &1.0$\pm$ 0.0& 1.0$\pm$ 0.0& 1.0$\pm$ 0.0& 1.0 $\pm$ 0.0 \\
\bottomrule
\end{tabular}
\end{sc}
\end{small}
\end{center}
\vskip -0.1in
\end{table*}

\begin{table*}[!t]
\caption{We show the directed AUC-PRG scores obtained by our modelon the in-distribution test metadataset RFF \textbf{IN} introduced in section~\ref{sec:eval-amortization}. For each dimension $d\in\{10, 20, 50\}$, we report the mean and the standard deviation of the scores obtained over all the datasets.}
\label{eval-auc-pr-fip-rff-in}
\begin{center}
\begin{small}
\begin{sc}
\begin{tabular}{lcccc}
\toprule
\multirow{2}{*}{Methods} & \multicolumn{4}{c}{RFF \textbf{IN}}\\
 & $d=10$ & $d=20$ & $d=50$ &  Ave.  \\
\midrule
FiP  &0.98$\pm$ 0.034& 0.99$\pm$ 0.0066& 1.0$\pm$ 0.0038& 0.99$\pm$ 0.023 \\
FiP w. $P$  &1.0$\pm$ 0.0023& 1.0$\pm$ 0.0044& 1.0$\pm$ 0.0024& 1.0$\pm$ 0.0033 \\
FiP w. $\mathcal{G}$  &1.0$\pm$ 0.0& 1.0$\pm$ 0.0& 1.0$\pm$ 0.0& 1.0 $\pm$ 0.0 \\
\bottomrule
\end{tabular}
\end{sc}
\end{small}
\end{center}
\vskip -0.1in
\end{table*}

\begin{table*}[!t]
\caption{We show the directed AUC-PRG scores obtained by our model on the in-distribution test metadataset LIN \textbf{IN} introduced in section~\ref{sec:eval-amortization}. For each dimension $d\in\{10, 20, 50\}$, we report the mean and the standard deviation of the scores obtained over all the datasets.}
\label{eval-auc-pr-fip-lin-in}
\begin{center}
\begin{small}
\begin{sc}
\begin{tabular}{lcccc}
\toprule
\multirow{2}{*}{Methods} & \multicolumn{4}{c}{LIN \textbf{IN}}\\
 & $d=10$ & $d=20$ & $d=50$ &  Ave.  \\
\midrule
FiP  &0.99$\pm$ 0.012& 0.98$\pm$ 0.031& 0.99$\pm$ 0.0083& 0.99$\pm$ 0.020 \\
FiP w. $P$  &0.99$\pm$ 0.0046& 0.98$\pm$ 0.029& 0.99$\pm$ 0.0082& 0.99$\pm$ 0.019 \\
FiP w. $\mathcal{G}$  &1.0$\pm$ 0.0& 1.0$\pm$ 0.0& 1.0$\pm$ 0.0& 1.0 $\pm$ 0.0 \\
\bottomrule
\end{tabular}
\end{sc}
\end{small}
\end{center}
\vskip -0.1in
\end{table*}

\subsection{Counterfactual Predictions}

\textbf{More details on the Counterfactual Prediction Experiments.} In tables~\ref{benchmark-table-cf-rff-out}, \ref{benchmark-table-cf-lin-out}, we show more detailed versions of the results presented in table~\ref{table:cf-pred-benchmark}, where we report the median, the mean and the standard deviation of the re-scaled $\ell_2$ errors to the ground truth obtained by our method and other baselines on the out-of-distributions metadatasets RFF \textbf{OUT} and LIN \textbf{OUT} respectively for each dimension $d\in\{10, 20, 50\}$ of the problems considered. Additionally, in tables~\ref{benchmark-table-cf-rff-in}, \ref{benchmark-table-cf-lin-in}, we also compare the re-scaled $\ell_2$ errors to the ground truth obtained by our method with other baselines on the in-distribution metadatasets RFF \textbf{IN} and LIN \textbf{IN}. 

\textbf{Additional Benchmark for Counterfactual Predictions.} Here, we reproduce the experiment presented in Table 2 of~\cite{NEURIPS2023_b8402301} to compare FiP with Causal NF~\cite{NEURIPS2023_b8402301}, CAREFL~\cite{khemakhem2021causal} and VACA~\cite{sanchez2022vaca}. The datasets considered are: i) TRIANGLE~\cite{sanchez2022vaca}, a 3-node SCM with a dense causal graph, ii) LARGE BD~\cite{geffner2022deep}, a 9-node SCM with non-Gaussian noise and made out of two chains with common initial and final nodes, and
iii) SIMPSON~\cite{geffner2022deep}, a 4-node SCM simulating a Simpson’s paradox~\cite{simpson1951interpretation}, where the relation between
two variables changes if the SCM is not properly approximated. Note that the two latter datasets are also used in the C-Suite dataset~\cite{geffner2022deep}. For each dataset, FiP, Causal NF and CAREFL are learned with the knowledge of the true topological ordering $P$, while VACA is learned knowing the true graph $\mathcal{G}$. In table~\ref{benchmark-table-cf-causal-nf}, we compare the $\ell_2$ distance to the ground truth counterfactual samples, defined as $\ell_2(x,\hat{x}):=\sqrt{\sum_{i=1}^d \left(x_i - \hat{x}_i\right)^2}$. The only difference with the re-scaled $\ell_2$ distance considered in tables~\ref{table:cf-pred-self}, \ref{table:cf-pred-benchmark} is that here we do not re-scale the distance by the standard deviations of the observable variables $X_i$. However for these particular datasets, the variances of the random variables $X_i$'s are very close (or exactly equal) to 1, and therefore this metric is very similar to the one considered in the main paper.  We observe that FiP and Causal NF manages to obtain similar performances when it comes to predict counterfactual samples. Note also that for these datasets, both FiP and Causal NF manages to recover with high accuracy the counterfactual samples.

\begin{table*}[!t]
\caption{We compare 
the counterfactual predictions obtained by our model against other baselines on the O.O.D metadataset RFF \textbf{OUT}. We measure the re-scaled $\ell_2$ distance between the predicted counterfactual samples and the ground truth ones. For each dimension $d\in\{10, 20, 50\}$, the results are presented in the form $x/y~(z)$ where $x$ is the median, $y$ the mean and $z$ the standard deviation (std) w.r.t the number of datasets of the averaged errors.}
\label{benchmark-table-cf-rff-out}
\begin{center}
\begin{small}
\begin{sc}
\begin{tabular}{lcccc}
\toprule
\multirow{2}{*}{Methods} & \multicolumn{4}{c}{RFF \textbf{OUT}}\\
 & $d=10$ & $d=20$ & $d=50$ &  Ave.  \\
\midrule
DECI & 0.10 / 0.17  (0.17)& 0.22 / 0.20  (0.093)& 0.16 / 0.15 (0.085)& 0.16 / 0.18 (0.12) \\
DoWhy - AVICI & 0.066 / 0.094  (0.068)& 0.16 / 0.16  (0.094)& \textbf{0.20 / 0.20 (0.095)}& 0.16 / 0.16 (0.096) \\
FiP  &\textbf{0.032 / 0.044  (0.032)}& \textbf{0.16 / 0.14  (0.073)}& 0.21 / 0.21 (0.082)& \textbf{0.11 / 0.13 (0.096)} \\
\midrule
FiP w. $\mathcal{G}$ & 0.0078 / 0.0087  (0.0070)& 0.041 / 0.056  (0.040)& 0.080 / 0.065 (0.037)& 0.033 / 0.042 (0.040) \\
DoWhy w. $\mathcal{G}$ & 0.018 / 0.028  (0.030)& 0.10 / 0.11  (0.054)& 0.14 / 0.13 (0.075)& 0.08 / 0.088 (0.072) \\
\bottomrule
\end{tabular}
\end{sc}
\end{small}
\end{center}
\vskip -0.1in
\end{table*}

\begin{table*}[!t]
\caption{We compare 
the counterfactual predictions obtained by our model against other baselines on the O.O.D metadataset LIN \textbf{OUT}. We measure the re-scaled $\ell_2$ distance between the predicted counterfactual samples and the ground truth ones. For each dimension $d\in\{10, 20, 50\}$, the results are presented in the form $x/y~(z)$ where $x$ is the median, $y$ the mean and $z$ the standard deviation (std) w.r.t the number of datasets of the averaged errors.}
\label{benchmark-table-cf-lin-out}
\begin{center}
\begin{small}
\begin{sc}
\begin{tabular}{lcccc}
\toprule
\multirow{2}{*}{Methods} & \multicolumn{4}{c}{LIN \textbf{OUT}}\\
 & $d=10$ & $d=20$ & $d=50$ &  Ave.\\
\midrule
DECI & 0.39 / 0.42  (0.42)& 0.44 / 0.45  (0.22)& 0.26 / 0.27 (0.12)& 0.34 / 0.38 (0.29) \\
DoWhy - AVICI & 0.12 / 0.19  (0.22)& 0.13 / 0.20 (0.17)&\textbf{ 0.15 / 0.21 (0.14)}& 0.13 / 0.20 (0.18) \\
FiP  &\textbf{0.030 / 0.048  (0.045)}& \textbf{0.15 / 0.14  (0.065)}& 0.23 / 0.22 (0.081)& \textbf{0.12 / 0.13 (0.10)} \\
\midrule
FiP w. $\mathcal{G}$ & 0.0084 / 0.025  (0.041)& 0.021 / 0.043  (0.060)& 0.030 / 0.034 (0.023)& 0.018 / 0.034 (0.048) \\
DoWhy w. $\mathcal{G}$ & 26 / 29  (23) $\times$ 1e-4 & 11 / 14  (7.5)  $\times$ 1e-4 & 7.1 / 7.2 (2.7) $\times$ 1e-4& 0.0010  / 0.0017 (0.0017) \\
\bottomrule
\end{tabular}
\end{sc}
\end{small}
\end{center}
\vskip -0.1in
\end{table*}

\begin{table*}[!t]
\caption{We compare 
the counterfactual predictions obtained by our model against other baselines on the I.D metadataset RFF \textbf{IN}. We measure the re-scaled $\ell_2$ distance between the predicted counterfactual samples and the ground truth ones. For each dimension $d\in\{10, 20, 50\}$, the results are presented in the form $x/y~(z)$ where $x$ is the median, $y$ the mean and $z$ the standard deviation (std) w.r.t the number of datasets of the averaged errors.}
\label{benchmark-table-cf-rff-in}
\begin{center}
\begin{small}
\begin{sc}
\begin{tabular}{lcccc}
\toprule
\multirow{2}{*}{Methods} & \multicolumn{4}{c}{RFF \textbf{IN}}\\
 & $d=10$ & $d=20$ & $d=50$ &  Ave. \\
\midrule
DECI & 0.34 / 0.31  (0.14)& 0.13 / 0.21  (0.14)& 0.12 / 0.13 (0.057)& 0.16 / 0.21 (0.14) \\
DoWhy - AVICI & 0.091 / 0.13  (0.11)& 0.099 / 0.14  (0.14)& \textbf{0.087 / 0.10 (0.069)}& \textbf{0.091 / 0.12 (0.11)} \\
FiP  &\textbf{0.096 / 0.12  (0.072)}& \textbf{0.073 / 0.13  (0.13)}& 0.13 / 0.12 (0.060)& 0.12 / 0.13 (0.093) \\
\midrule
FiP w. $\mathcal{G}$ & 0.038 / 0.051  (0.046)& 0.030 / 0.077  (0.11)& 0.030 / 0.025 (0.051)& 0.033 / 0.059 (0.075) \\
DoWhy w. $\mathcal{G}$ & 0.082 / 0.094  (0.078)& 0.096 / 0.12  (0.13)& 0.068 / 0.073 (0.059)& 0.071 / 0.097 (0.096) \\
\bottomrule
\end{tabular}
\end{sc}
\end{small}
\end{center}
\vskip -0.1in
\end{table*}

\begin{table*}[!t]
\caption{We compare 
the counterfactual predictions obtained by our model against other baselines on the I.D metadataset LIN \textbf{IN}. We measure the re-scaled $\ell_2$ distance between the predicted counterfactual samples and the ground truth ones. For each dimension $d\in\{10, 20, 50\}$, the results are presented in the form $x/y~(z)$ where $x$ is the median, $y$ the mean and $z$ the standard deviation (std) w.r.t the number of datasets of the averaged errors.}
\label{benchmark-table-cf-lin-in}
\begin{center}
\begin{small}
\begin{sc}
\begin{tabular}{lcccc}
\toprule
\multirow{2}{*}{Methods} & \multicolumn{4}{c}{LIN \textbf{IN}}\\
 & $d=10$ & $d=20$ & $d=50$ &  Ave. \\
\midrule
DECI & 0.66 / 0.61  (0.21)& 0.27 / 0.34  (0.18)& 0.22 / 0.26 (0.15)& 0.36 / 0.41 (0.24) \\
DoWhy - AVICI & 0.033 / 0.23  (0.27)& \textbf{0.024 / 0.043 (0.058)}& 0.053 / 0.11 (0.11)& 0.035/ 0.13 (0.19) \\
FiP  &\textbf{0.024 / 0.082  (0.078)}&0.076 / 0.081  (0.068)& \textbf{0.036 / 0.090 (0.010)}& \textbf{0.037 / 0.083 (0.084)} \\
\midrule
FiP w. $\mathcal{G}$ & 0.012 / 0.049  (0.066)& 0.012 / 0.024  (0.029)& 0.0075 / 0.039 (0.070)& 0.012 / 0.039 (0.060) \\
DoWhy w. $\mathcal{G}$ & 17 / 24  (28) $\times$ 1e-4& 8.2 / 9.5  (5.4) $\times$ 1e-4 & 5.8 / 7.7 (5.6) $\times$ 1e-4 & 8.0 / 14 (18) $\times$ 1e-4 \\
\bottomrule
\end{tabular}
\end{sc}
\end{small}
\end{center}
\vskip -0.1in
\end{table*}

\begin{table*}[!t]
\caption{We reproduce the exact same experiment as the one proposed in Table 2 of~\cite{NEURIPS2023_b8402301} to compare FiP with other baselines, namely  Causal NF~\cite{NEURIPS2023_b8402301}, CAREFL~\cite{khemakhem2021causal} and VACA~\cite{sanchez2022vaca} on counterfactual predictions. We measure the $\ell_2$ distance between the predicted counterfactual samples and the ground truth ones. We average the results obtained over 5 runs and they are presented in the form $y~(z)$ where $y$ is the mean and $z$ the standard deviation (std) w.r.t the number of runs of the averaged errors. For other methods, we report the same numbers as the one obtainen in Table 2 of~\cite{NEURIPS2023_b8402301}.}
\label{benchmark-table-cf-causal-nf}
\begin{center}
\begin{small}
\begin{sc}
\begin{tabular}{llc}
\toprule
Dataset & Model & CF Error (RMSE)\\
\midrule
 \multirow{4}{*}{TRIANGLE} &  Causal NF & 0.13 (0.02) \\
 &  CAREFL & 0.17 (0.03) \\
& VACA &  4.19 (0.04) \\
& FiP & $\bm{0.094 (0.021)}$ \\
\midrule
\multirow{4}{*}{LARGE BD} & Causal NF & $\bm{0.01 (0.00)}$ \\
 &  CAREFL & 0.08 (0.01)  \\
& VACA &  0.82 (0.02) \\
& FiP &  0.024 (0.0019) \\
\midrule
\multirow{4}{*}{SIMPSON} & Causal NF &  $\bm{0.12 (0.02)}$ \\ 
 &  CAREFL & 0.17 (0.04)  \\
& VACA & 1.50 (0.04) \\
& FiP &  $\bm{0.12 (0.0089)}$ \\
\bottomrule
\end{tabular}
\end{sc}
\end{small}
\end{center}
\vskip -0.1in
\end{table*}

\subsection{On the Noise Reconstruction} 

In this experiment, we evaluate the noise prediction of FiP on the four test metadatasets presented in section~\ref{sec:eval-amortization}, that are  RFF \textbf{OUT}, LIN \textbf{OUT},  RFF \textbf{IN}, and LIN \textbf{IN}. More formally, we evaluate the error between the predicted noise $\tilde{\bm{N}}:=\bm{X} - \mathcal{T}_{\text{ANM}}(P\bm{X},0_d)$ and the true noise $\bm{N}$ generating $\bm{X}$. As we learn $\mathcal{T}_{\text{ANM}}$ on the standardized data, we report the error on this space, that is we report the re-scaled $\ell_2$ distance to the ground truth defined as $r-\ell_2(x,\hat{x}):=\sqrt{\frac{1}{d}\sum_{i=1}^d \left(\frac{n_i - \hat{n}_i}{\sigma_i}\right)^2}$ where $\sigma_i$ are the standard deviations of the observable data $X_i$. Note that we divide the metric by $\sqrt{d}$ as we compare the results across various dimension choices. In table~\ref{table:cf-pred-self}, we present the results obtained, and we show that our method is able to recover more than $90\%$ (in the worst case) of the noise signal in the re-scaled space when trained with the predicted topological ordering obtained by $\mathcal{M}$, while it can reach $96\%$ at worst when trained with the true causal graph.

\begin{table}[!h]
\caption{We report the noise prediction errors of $\mathcal{T}_{\text{ANM}}$ when trained with the predicted topological ordering from $\mathcal{M}$, the true TO, or the true graph on the four test metadatasets introduced in section~\ref{sec:eval-amortization}. We measure the re-scaled $\ell_2$ distance between the predicted noise samples and the ground truth ones. The results presented are of the form $x / y~(z)$ where $x$ is the median, $y$ is the mean and $z$ is the standard deviation (std) w.r.t the number of datasets of the averaged errors.}
\label{table:cf-pred-self}
\begin{center}
\begin{small}
\begin{sc}
\begin{tabular}{lccc}
\toprule
Datasets & FiP & FiP w. True $P$ & FiP w. True $\mathcal{G}$  \\
\midrule
LIN \textbf{IN} &  0.030 / 0.032 (0.018) &  0.029 / 0.030 (0.015) & 0.013 / 0.013 (0.0027)\\
LIN \textbf{OUT} &   0.034 / 0.089 (0.11) &  0.033 / 0.034 (0.0090) & 0.016 / 0.015 (0.0030)\\
RFF \textbf{IN} &  0.083 / 0.11 (0.074) &  0.045 / 0.063 (0.047) & 0.023 / 0.036 (0.034)\\
RFF \textbf{OUT} & 0.13/ 0.14 (0.078)
&  0.072/ 0.094 (0.062) & 0.027 / 0.037 (0.020)\\
\bottomrule
\end{tabular}
\end{sc}
\end{small}
\end{center}
\vskip -0.1in
\end{table}

\subsection{Analysis of the Complexity}

\paragraph{On the number of parameters.} Given observations living in a $d$-dimensional space, an embedding dimension of $D$, a number of heads in the attention $n_h$, a dimension per head $d_{\text{head}}$, a hidden dimension $d_{\text{hidden}}$ for the MLP, and a number of layer $L$, the number of parameters of our transformer architecture is  of order 
$$\mathcal{O}(d D + L (D^2 +  D (n_h * d_{\text{head}}) + D d_h))$$ 
In practice we use $L=2$, and $D=d_{\text{hidden}}=128$, $n_h=8$, and $d_{\text{head}}=32$ for all experiments, giving an architecture with an order of 100k learnable parameters (when $d=50$), and therefore requires an order of 1MB of memory using a float32 precision (which corresponds to 4 bytes per parameter).

\textbf{On the forward pass FLOPs.} The number of floating-point operations involved in the forward pass is the sum of the following complexities: 

\begin{itemize}
    \item Causal Embeddings: $2 d \times D$
    \item Causal Attention: \begin{itemize}
        \item Key, query, and value projection: $3 d D^2$
        \item Key @ Query:  $n_h d_{\text{head}}d^2$
        \item Causal normalization of attention: $n_h d^2$
        \item Final Projection:  $d n_h d_{\text{head}} D$
    \end{itemize}
 \item Function $h$ in the Causal Encoder layer: $d \times (D d_{\text{hidden}} + D d_{\text{hidden}})$
 \item Causal Decoder: $d D$
\end{itemize}

Overall, the total forward pass FLOPs is: 
$$3 dD + L \times (3 dD^2 + n_h (d_{\text{head}}+1)d^2+n_h d_{\text{head}}dD+ 2 Dd_{\text{hidden}})$$
When replacing the variables with the actual values used in the experiments, we obtain a total forward pass FLOPs of around $1e7$ when $d=50$, and therefore the forward pass can be computed on any CPU/GPU with more than a 10 MFLOPs, which is most likely the case as standard CPUs and GPUs can now perform an order of hundreds of GFLOPs per second. The backward pass has (around) twice the FLOPs of the forward pass, which is still of an order of 10 MFLOPs.

\textbf{Experimental Evaluation of the Complexities.} In this experiment, we present the time and memory complexities obtained in practice for FiP. More precisely, we consider the architecture used in all the experiments, that is the transformer-based model $\mathcal{T}_{\text{ANM}}$ with $L=2$ layers, $d_{\text{head}}=32$, $8$ heads, a latent dimension of $D=128$, and a the hidden dimension used in the MLP of $d_{\text{hidden}}:=128$, and we measure its time and memory complexities when varying the dimension $d$ of the problem (i.e. the number of nodes) and the number of samples $n$. More precisely we vary $d\in\{10,50,100\}$ and $n\in\{1k,10k\}$. We also extrapolate the results obtained for the case when $n=1$ using the linear relationships between the number of samples and the time and memory complexities in order to obtain the complexities of FiP per sample. Indeed, we cannot simply evaluate accurately the complexities of the model on a sample basis because our architecture is too small, and the overhead caused by pytorch lightning~\cite{falcon2019pytorch} is greater than the time and memory that needs our model to process one sample. In table~\ref{table:num-param-fip}, we report the total number of parameters of FiP when varying $d$, and in tables~\ref{table:time-fip},~\ref{table:mem-fip}, we report the computational and memory complexities respectively. We observe that our model  requires less than 2 MiB of memory and less than 50 MFLOPs per forward call to be trained and and tested.

\begin{table}[!h]
\caption{We report the number of parameters of FiP when varying $d\in\{10,50,100\}$.} 
\label{table:num-param-fip}
\begin{center}
\begin{small}
\begin{sc}
\begin{tabular}{lccc}
\toprule
$d$ & $d=10$ & $d=50$ & $d=100$   \\
\midrule
Num. Parameters & 340k & 375k & 420k\\
\bottomrule
\end{tabular}
\end{sc}
\end{small}
\end{center}
\vskip -0.1in
\end{table}

\begin{table}[!h]
\caption{We report the computational time per forward call of FiP during both training and testing phases, when varying both the dimension $d$ of the problem (i.e. the number of nodes) and the number of samples $n$. We measure the time complexity of FiP using the same architecture as the one considered for all the experiments, that is the transformer-based model $\mathcal{T}_{\text{ANM}}$ with $L=2$ layers, $d_{\text{head}}=32$, $8$ heads, a latent dimension of $D=128$, and a the hidden dimension used in the MLP of $d_{\text{hidden}}:=128$. We report the mean time (over the batches) in second when varying $d\in\{10,50,100\}$ and $n\in\{1k,10k\}$. We also report the time that FiP would obtain if $n=1$ when varying $d$ by exploiting the linear relationship between $n$ and the computational time. Note that the case where $n=1$ cannot be estimated directly because our architecture is too small, and the overhead caused by pytorch lightning~\cite{falcon2019pytorch} is greater than the time that needs our model to process one sample. Finally, we also show (a rough estimate of) the number of FLOPs needed per forward call, by multiplying the time obtained by the number of FLOPs per second that the machine can perform. Here all the runs have been realized on a A100 with 32-bit float precision, which can process up to 19.5 TFLOPs per second. The results are presented in the form $x / y$ where $x$ represents the time in second and $y$ the number of GFLOPs.} 
\label{table:time-fip}
\begin{center}
\begin{small}
\begin{sc}
\begin{tabular}{c| c c| c c|c c}
\toprule
$n/d$ & \multicolumn{2}{|c}{$n=1k$} & \multicolumn{2}{|c}{$n=10k$} & \multicolumn{2}{|c}{$n=1$}  \\
  & Train & Test & Train & Test & Train & Test \\
\midrule
$d=10$  & 0.0094 / 184 & 0.0037 / 72 & 0.026 / 507  & 0.013 / 254 & $1.8 \times 1e-6$ / 0.035   &  $1.03 \times 1e-6$ / 0.020 \\
$d=50$ & 0.012 / 234 & 0.0047 / 92 &  0.027 / 526 & 0.016 / 312 & $1.6 \times 1e-6$ / 0.031   &  $1.25 \times 1e-6$ / 0.024 \\
$d=100$ &  0.012 / 234 & 0.0048 / 94 & 0.035 / 683 & 0.025 / 488 & $2.6 \times 1e-6$ / 0.051   &  $2.24 \times 1e-6$ / 0.044 \\
\bottomrule
\end{tabular}
\end{sc}
\end{small}
\end{center}
\vskip -0.1in
\end{table}

\begin{table}[!h]
\caption{We report the memory usage of FiP at training time, when varying both the dimension $d$ of the problem (i.e. the number of nodes) and the number of samples $n$. We measure the memory complexity of FiP using the same architecture as the one considered for all the experiments, that is the transformer-based model $\mathcal{T}_{\text{ANM}}$ with $L=2$ layers, $d_{\text{head}}=32$, $8$ heads, a latent dimension of $D=128$, and a the hidden dimension used in the MLP of $d_{\text{hidden}}:=128$. We report the memory used in GiB when varying $d\in\{10,50,100\}$ and $n\in\{1k,10k\}$. We also report the memory usage that FiP would use at training time if $n=1$ when varying $d$ by exploiting the linear relationship between $n$ and the memory usage. Indeed, the case where $n=1$ cannot be estimated directly because our architecture is too small, and the overhead caused by pytorch lightning~\cite{falcon2019pytorch} is much greater than the memory that needs our model to process one sample.} 
\label{table:mem-fip}
\begin{center}
\begin{small}
\begin{sc}
\begin{tabular}{c| c| c|c}
\toprule
$n/d$ & $n=1k$ & $n=10k$ & $n=1$  \\
\midrule
$d=10$  & 1.4 & 3.4 & 0.00022 \\
$d=50$ & 2.6 &  15 & 0.0014\\
$d=100$ &  5.1 & 41 & 0.0040 \\
\bottomrule
\end{tabular}
\end{sc}
\end{small}
\end{center}
\vskip -0.1in
\end{table}

\subsubsection{Scalability of FiP}

In all experiments considered, we only apply FiP to problems with at most $d=50$ nodes when training our fixed-point SCM learner. For such problems, we observe that a 1- or 2-layer(s) network is enough to obtain already high accuracy in recovering the generative SCMs from observations. 

When it comes to studying problems involving thousands of nodes, or even hundreds of thousands of nodes, scaling the model adequately can require (much) higher computational and memory costs, which can represent a significant limitation. However, the scalability of transformers to large sequences with more than 1 million tokens, corresponding to a problem with 1 million nodes in our setting, has been largely studied in the literature\cite{dao2022flashattention, liu2023ring}. Therefore, as our model is an attention-based one, we could exploit similar techniques to scale our transformer-based network for these large-scale problems. However, this extension to large scale problems is out of the scope of this paper, and we leave this question for future work.


\clearpage
\newpage
\section{Appendix: proofs}
\label{sec:proofs}
\subsection{Proof of Proposition \ref{prop:unique-gamma}}

\begin{proof}
Let us define $T:n\to H(\cdot,n)^{\circ d}(0_d)$,  $\mathbb{P}_{\bm{X}}:=(P^T\circ T\circ P)\#\mathbb{P}$ and let us denote $F:(x,n)\to P^T H(Px,Pn)$. Then thanks to the structure of $H$, we obtain that for any $n\in\mathbb{R}^d$, $F(P^T\circ T\circ P (n), n) = P^T\circ T\circ P (n)$  from which follows that $( P^T\circ T\circ P, \text{I}_d)\#\mathbb{P}\in\Pi_{2,\mathbb{P}}$ solves~\eqref{eq:dfp_scm}. In addition, if $\gamma$ solves ~\eqref{eq:dfp_scm}, then for $(\bm{X},\bm{N})\sim\gamma$, we obtain that $$\bm{X} = P^T T(P\bm{N})\; ,$$ from which follows that $\gamma=( P^T\circ T\circ P, \text{I}_d)\#\mathbb{P}\in\Pi_{2,\mathbb{P}}$ which conclude the proof.
\end{proof}

\subsection{Proof of Proposition~\ref{prop:equivalence}}

\begin{proof}
Let $\mathcal{S}(F,\mathbb{P}_{\bm{N}})$ a standard SCM and $P_{\pi}$ a topological ordering associated. Let now $H_1$ and $H_2$ satisfying~\eqref{prop:equivalence}. Then we obtain for all $i$ and $x,n$ that 
\begin{align*}
[P_\pi^T H_1(P_\pi x,P_\pi n)]_i = [P_\pi^T H_2(P_\pi x,P_\pi n)]_i\; .
\end{align*}
from which follows directly that $H_1 = H_2$. To show existence, let us first define $\tilde{F}:=[\tilde{F}_1,\dots,\tilde{F}_d]$ the extended version of $F$ such that for all $i$, $\tilde{F}_i:\mathbb{R}^d\times \mathbb{R}^d\to\mathbb{R}$ satisfies for all $x,n\in\mathbb{R}^d$ $\tilde{F}_i(x,n)=F_i(\textbf{PA}(x_i),n_i)$. Then we can simply define $H:=[H_1,\dots,H_d]$ as $H_i(x,n)=P\tilde{F}_i(P^Tx,P^Tn)$. Reciprocally, let $\mathcal{S}_{\text{fp}}(P,\mathbb{P},H)$ a fixed-point SCM and let us denote $\mathcal{G}$ its graph associated as defined in~\ref{def:causal-graph}. Let now $\mathcal{S}(F^{(1)},\mathbb{P})$ and $\mathcal{S}(F^{(2)},\mathbb{P})$ two standard SCM with DAG associated $\mathcal{G}_1$ and $\mathcal{G}_2$ respectively such that they satisfy~\eqref{eq:unique-equi} and $P$ is a topological order of both. Then, we obtain that for all $i$ and $x, n$
\begin{align*}
F_i^{(1)}(\textbf{PA}_1(x_i),n_i) = F_i^{(2)}(\textbf{PA}_2(x_i),n_i) \; .
\end{align*}
Now using the minimality assumption~\ref{ass-minimality}, we deduce that the set of parents are the necessarily the same, and from which follows that $F_i^{(1)}=F_i^{(2)}$. The existence follows the construction obtained in section~\ref{sec:random-var-fp}.
\end{proof}

\subsection{Proof of Proposition~\ref{prop:anm}}
Here we prove a more general result than the one presented in the main text, where we only need to assume additionally that the endegenous and exogenous distribution are squared integrable and the exogenous distribution admits a variance of $1_d$. The proof of the result presented in Proposition~\ref{prop:anm} can be directly deduced from the proof below by assuming that $g=1$ and without the need to assume that both the endogenous and exogenous distributions are squared integrable and that the variance of the exogenous distribution is $1_d$. This is because the result below requires "extra" assumptions, that we made the choice to present a simpler version in the main text.

\begin{proposition}
\label{prop:anm_gene}
Let $P\in\Sigma_d$ and $\mathbb{P}_{\bm{X}}\in\mathcal{P}_2(\mathbb{R}^d)$. Let us also denote $\mathcal{F}_d^{ANM}:=\{H\in\mathcal{F}_d\colon H(x,n)=h(x)+ g(x) \odot n,~g> 0\}$  and $\mathcal{A}_P^{\text{ANM}}(\mathbb{P}_{\bm{X}}):=\{(P,\mathbb{P},H)\in \mathcal{A}_P(\mathbb{P}_{\bm{X}})\colon \mathbb{P}\in\mathcal{P}_2(\mathbb{R})^{\otimes d},~ H\in\mathcal{F}_d^{ANM},~\mathbb{E}_{\bm{N}\sim \mathbb{P}}(\bm{N})=0_d, \mathbb{E}_{\bm{N}\sim \mathbb{P}}(\bm{N}^2)=1_d\}$. Then $\mathcal{A}_P^{\text{ANM}}(\mathbb{P}_{\bm{X}})$ admits at most 1 element $\mathbb{P}_{P\bm{X}}$ a.s.
\end{proposition}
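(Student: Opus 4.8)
The plan is to reduce to the case $P = I_d$ (by relabelling via the permutation matrix, which only reshuffles coordinates and preserves the strictly-lower-triangular / diagonal Jacobian structure of $\mathcal{F}_d$), and then to prove uniqueness of the pair $(h,g)$ coordinate by coordinate, by induction on the index $i \in \{1,\dots,d\}$, working directly with the fixed-point characterization. Recall from Proposition~\ref{prop:unique-gamma} that a fixed-point SCM $\mathcal{S}_{\text{fp}}(I_d, \mathbb{P}, H)$ with $H \in \mathcal{F}_d^{ANM}$, $H(x,n) = h(x) + g(x)\odot n$, generates the unique coupling obtained by unrolling the triangular system $X_i = h_i(X_1,\dots,X_{i-1}) + g_i(X_1,\dots,X_{i-1})\, N_i$; here $h_i, g_i$ depend only on $x_{<i} := (x_1,\dots,x_{i-1})$ because $\operatorname{Jac}_1 H$ is strictly lower-triangular and $\operatorname{Jac}_2 H$ is diagonal. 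Suppose $(I_d,\mathbb{P}^{(1)},H^{(1)})$ and $(I_d,\mathbb{P}^{(2)},H^{(2)})$ both lie in $\mathcal{A}_{I_d}^{\text{ANM}}(\mathbb{P}_{\bm{X}})$, with $H^{(k)}(x,n)=h^{(k)}(x)+g^{(k)}(x)\odot n$. Since both generate the \emph{same} observational law $\mathbb{P}_{\bm{X}}$, and since in an ANM the noise components are independent of the parents with mean $0$ and variance $1$, I can take conditional moments of $X_i$ given $X_{<i} = x_{<i}$: conditionally, $X_i = h_i^{(k)}(x_{<i}) + g_i^{(k)}(x_{<i}) N_i^{(k)}$ with $N_i^{(k)} \perp X_{<i}$, hence
\begin{align*}
\mathbb{E}[X_i \mid X_{<i} = x_{<i}] &= h_i^{(k)}(x_{<i}), \\
\operatorname{Var}(X_i \mid X_{<i} = x_{<i}) &= g_i^{(k)}(x_{<i})^2 .
\end{align*}
The left-hand sides are functionals of $\mathbb{P}_{\bm{X}}$ alone, so for $\mathbb{P}_{\bm{X}}$-a.e.\ $x_{<i}$ we get $h_i^{(1)} = h_i^{(2)}$ and, using $g^{(k)} > 0$, $g_i^{(1)} = g_i^{(2)}$.

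The subtlety is that these equalities hold only $\mathbb{P}_{X_{<i}}$-almost surely, which is why the conclusion is stated as ``$\mathbb{P}_{P\bm{X}}$ a.s.'' So the induction should be set up carefully: at step $i$, conditioning is on the first $i-1$ coordinates, whose joint law under both models is identical (it equals the marginal of $\mathbb{P}_{\bm{X}}$ on the first $i-1$ coordinates, a fact that itself follows from steps $1,\dots,i-1$ of the induction, since the generating maps for $X_1,\dots,X_{i-1}$ agree a.s.\ and the noise laws $\mathbb{P}^{(1)}_{N_j}, \mathbb{P}^{(2)}_{N_j}$ for $j<i$ must also coincide --- see next paragraph). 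Once $h_i^{(1)} = h_i^{(2)}$ and $g_i^{(1)} = g_i^{(2)}$ a.s., the residual $N_i = (X_i - h_i(X_{<i}))/g_i(X_{<i})$ is the same measurable function of $\bm{X}$ under both models, so its law $\mathbb{P}^{(1)}_{N_i} = \mathbb{P}^{(2)}_{N_i}$; combined with independence this pins down the $i$-th coordinate of both the function and the noise marginal, advancing the induction. Finally, the noise mean-zero / variance-one normalization is what makes the decomposition $X_i = h_i + g_i N_i$ identifiable (otherwise one could absorb a location/scale of $N_i$ into $h_i$ and $g_i$); this is exactly the role of the extra assumptions in this more general statement, and for the additive case $g \equiv 1$ in Proposition~\ref{prop:anm} only mean-zero is needed, and square-integrability can be dropped since $\mathbb{P}_{\bm{X}} \in \mathcal{P}_1$ and $\mathbb{E}[X_i \mid X_{<i}]$ suffices.

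I expect the main obstacle to be the careful bookkeeping of the ``almost everywhere'' qualifiers across the induction: one must ensure that conditioning events have full measure under \emph{both} models simultaneously, which requires establishing at each step that the joint law of $(X_1,\dots,X_i)$ is the same under $\mathbb{P}^{(1)}$ and $\mathbb{P}^{(2)}$ before proceeding to step $i+1$, and handling the possibility that $\mathbb{P}_{X_{<i}}$ is not absolutely continuous (so one genuinely works with regular conditional distributions rather than densities). A clean way to organize this is to prove the stronger inductive statement: \emph{for each $i$, the joint law of $(X_1,\dots,X_i,N_1,\dots,N_i)$ coincides under the two models}, from which both the function-uniqueness (a.s.) and the noise-marginal-uniqueness follow, and the base case $i=1$ is immediate since $X_1 = h_1 + g_1 N_1$ with $h_1, g_1$ constants determined by $\mathbb{E}[X_1]$ and $\operatorname{Var}(X_1)$.
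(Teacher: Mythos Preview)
Your proposal is correct and follows essentially the same approach as the paper: identify $h_i$ via the conditional mean $\mathbb{E}[X_i\mid X_{<i}]$, identify $g_i^2$ via the conditional variance (using the unit-variance normalization on $N_i$), use $g>0$ to recover $g_i$, and then read off the noise law from the residual $(X_i-h_i)/g_i$. One remark: your inductive bookkeeping about the joint law of $X_{<i}$ coinciding under both models is more than you need, since by hypothesis both triples lie in $\mathcal{A}_P(\mathbb{P}_{\bm{X}})$ and therefore generate the \emph{same} $\mathbb{P}_{\bm{X}}$, so every marginal and every conditional of $\bm{X}$ is automatically identical across the two models without any induction; the paper's proof exploits this directly and computes the conditional moments in one pass.
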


\begin{proof}
Let $(P,\mathbb{P},H)\in\mathcal{A}_P^{\text{ANM}}(\mathbb{P}_{\bm{X}})$. Then we have that for $(\bm{X},\bm{N})\sim\gamma(P,\mathbb{P},H)$
\begin{align*}
    P\bm{X} &= H(P\bm{X},P\bm{N})\\
        & = h(P\bm{X}) + g(P\bm{X}) \odot P\bm{N}.
\end{align*}
Now let us denote $\bm{Y}=P\bm{X}$. Because $H$ has to satisfies~\eqref{eq-struc} and the $N_i$-s are independent and have $0$ mean, we deduce by taking the conditional expectancy that
\begin{align*}
    h_i(y) = \mathbb{E}(Y_i|(Y_1,\dots,Y_{i-1})=(y_1,\dots,y_{i-1})),~\mathbb{P}_{\bm{Y}}~\text{a.s.}
\end{align*}
and so for all $i$ where $h=[h_1,\dots,h_d]$, therefore the $h_i$'s are $\mathbb{P}_{\bm{Y}}$ almost surely unique. Now by considering the conditional second moment of the residual, we obtain for all $i$ that:
\begin{align*}
    \mathbb{E}((\bm{Y}_i - h_i(\bm{Y}))^2|(Y_1,\dots,Y_{i-1})=(y_1,\dots,y_{i-1})) = g_i(y)^2,~\mathbb{P}_{\bm{Y}}~\text{a.s.}
\end{align*}
as the variances of $N_i$ are 1. Therefore $y\to g_i(y)^2$ are $\mathbb{P}_{\bm{Y}}$ almost surely unique and thanks to the positivity of $g$, we deduce that $y\to g_i(y)$ are $\mathbb{P}_{\bm{Y}}$ a.s. unique, from which follows that
$$P_{P\bm{N}} = \frac{I_d - h}{g}\#\mathbb{P}_{\bm{Y}}$$ is uniquely defined (as $g>0$) and that concludes the proof.
\end{proof}

\subsection{Proof of Theorem~\ref{thm:id-noise}}
Before proving theorem~\ref{thm:id-noise}, let us first show the following important lemma.

\begin{lemma}
\label{lem:identification}
Let $H_1,H_2\in\mathcal{F}_d$ and let $\mathbb{P}_{\bm{N}}\in\mathcal{P}(\mathbb{R})^{\otimes d}$ a jointly independent distribution. Then if 
\begin{align}
\label{eq:equality-h}
(H^{(1)}(\cdot,n))^{\circ d} = (H^{(2)}(\cdot,n))^{\circ d},\quad \mathbb{P}_{\bm{N}}~\text{a.s.}
\end{align}
and by denoting $\mathbb{P}_{\bm{X}} = (H^{(1)}(\cdot,n))^{\circ d}\# \mathbb{P}_{\bm{N}}$, then we have $H_1(x,n)=H_2(x,n)~~\mathbb{P}_{\bm{X}} \otimes\mathbb{P}_{\bm{N}}~\text{a.s.}.$
\end{lemma}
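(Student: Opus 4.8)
The plan is to establish the pointwise equality one coordinate at a time, exploiting the triangular structure of $\mathcal{F}_d$ together with the fact that $\mathbb{P}_{\bm N}$ is a product measure. First I would record two structural consequences of Condition~\ref{cond:structure}. Since $H_1,H_2$ are differentiable on all of $\mathbb{R}^d\times\mathbb{R}^d$ and $[\mathrm{Jac}_1 H_k]_{i,j}=0$ for $j\ge i$ while $[\mathrm{Jac}_2 H_k]_{i,j}=0$ for $i\ne j$, the coordinate $[H_k(x,n)]_i$ depends only on $(x_1,\dots,x_{i-1})$ and on $n_i$; write it $\phi^{(k)}_i(x_{<i},n_i)$. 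Second, for any fixed $n$ the map $x\mapsto H_k(x,n)$ is strictly lower-triangular, so the fixed-point iteration stabilises coordinate by coordinate and $(H_k(\cdot,n))^{\circ d}$ is constant, equal to the unique solution $x^{(k)}(n)$ of $x=H_k(x,n)$; moreover $x^{(k)}(n)_i$ is determined by $(n_1,\dots,n_i)$ alone.

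Next I would unpack the hypotheses. The assumption \eqref{eq:equality-h}, $(H_1(\cdot,n))^{\circ d}=(H_2(\cdot,n))^{\circ d}$ for $\mathbb{P}_{\bm N}$-a.e.\ $n$, means $x^{(1)}(n)=x^{(2)}(n)=:x(n)$ for $\mathbb{P}_{\bm N}$-a.e.\ $n$, and $\mathbb{P}_{\bm X}=(n\mapsto x(n))\#\mathbb{P}_{\bm N}$, i.e.\ $\bm X=x(\bm N)$ for $\bm N\sim\mathbb{P}_{\bm N}$. For each $i$, the second structural fact gives a map $\Psi_{<i}$ with $x(n)_{<i}=\Psi_{<i}(n_1,\dots,n_{i-1})$, and then $\Psi_{<i}\#(\mathbb{P}_{N_1}\otimes\cdots\otimes\mathbb{P}_{N_{i-1}})$ is exactly $\mathbb{P}_{\bm X_{<i}}$, the joint law of $(X_1,\dots,X_{i-1})$.

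The core step is then, for each fixed $i$: reading the fixed-point identity in coordinate $i$ gives $\phi^{(1)}_i(\Psi_{<i}(n_{<i}),n_i)=x(n)_i=\phi^{(2)}_i(\Psi_{<i}(n_{<i}),n_i)$ for $\mathbb{P}_{\bm N}$-a.e.\ $n$. Since $\mathbb{P}_{\bm N}$ is a product, this holds for $(\mathbb{P}_{N_1}\otimes\cdots\otimes\mathbb{P}_{N_{i-1}})\otimes\mathbb{P}_{N_i}$-a.e.\ $(n_{<i},n_i)$; pushing forward under $\Psi_{<i}\otimes\mathrm{id}$ yields $\phi^{(1)}_i(x_{<i},n_i)=\phi^{(2)}_i(x_{<i},n_i)$ for $\mathbb{P}_{\bm X_{<i}}\otimes\mathbb{P}_{N_i}$-a.e.\ $(x_{<i},n_i)$. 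Under $\mathbb{P}_{\bm X}\otimes\mathbb{P}_{\bm N}$ the pair $(x_{<i},n_i)$ has marginal $\mathbb{P}_{\bm X_{<i}}\otimes\mathbb{P}_{N_i}$ (because $x$ and $n$ are independent there), so $[H_1(x,n)]_i=[H_2(x,n)]_i$ for $\mathbb{P}_{\bm X}\otimes\mathbb{P}_{\bm N}$-a.e.\ $(x,n)$. Intersecting the $d$ resulting full-measure sets gives $H_1=H_2$ $\mathbb{P}_{\bm X}\otimes\mathbb{P}_{\bm N}$-a.e.

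I expect the only genuine difficulty — the main obstacle — to be the passage from ``$H_1$ and $H_2$ agree on the diagonal graph $\{(x(n),n):n\}$'' to ``$H_1$ and $H_2$ agree $\mathbb{P}_{\bm X}\otimes\mathbb{P}_{\bm N}$-a.e.''; this is precisely where the product structure of $\mathbb{P}_{\bm N}$ is essential, since it decouples $n_i$ from $n_{<i}$ and hence from $x_{<i}=\Psi_{<i}(n_{<i})$, which is what makes the pushforward argument above valid. The remaining ingredients — that a differentiable map with vanishing partials on $\mathbb{R}^d$ is independent of those variables, that the triangular fixed-point iteration converges in $d$ steps, and that pushforwards preserve almost-everywhere identities — are routine and I would only sketch them.
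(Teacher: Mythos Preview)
Your proposal is correct and follows essentially the same route as the paper's own proof: both exploit the strictly lower-triangular structure to reduce to coordinate-wise identities $\phi^{(1)}_i(\Psi_{<i}(n_{<i}),n_i)=\phi^{(2)}_i(\Psi_{<i}(n_{<i}),n_i)$ along the fixed-point solution, then use the product form of $\mathbb{P}_{\bm N}$ to decouple $n_i$ from $n_{<i}$ and push forward via $\Psi_{<i}$ (the paper's $\tilde h_{1,k-1}$) to obtain equality $\mathbb{P}_{\bm X_{<i}}\otimes\mathbb{P}_{N_i}$-a.e. Your identification of the decoupling step as the crux, and your handling of it, match the paper exactly; if anything your write-up is slightly more explicit about why the pushforward argument is legitimate.
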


\begin{proof}
Let us assume that
\begin{align}
\label{eq:equality}
(H^{(1)}(\cdot,n))^{\circ d} = (H^{(2)}(\cdot,n))^{\circ d} = h(n),\quad \mathbb{P}_{\bm{N}}~\text{a.s.}
\end{align}
Now, using the structure of $H^{(i)}$ induced by~\eqref{eq-struc}, observe that we have for all $x,n\in\mathbb{R}^d$ and $k\in\{1,\dots,d\}$:
\begin{align*}
    H_k^{(i)}(x,n)=H_k^{(i)}([x_1,\dots,x_{k-1},0,\dots,0],[0,\dots,n_k,\dots,0])
\end{align*}
where $H^{(i)}(x,n)=[H_1^{(i)}(x,n),\dots,H_d^{(i)}(x,n)]$, $x=[x_1,\dots,x_d]$, and $n=[n_1,\dots,n_d]$. In the following we denote for all $k\geq 1$, $x,n\in\mathbb{R}^d$, $(H^{(i)}(\cdot,n))^{\circ k}(x) = [[(H^{(i)}(\cdot,n))^{\circ k}(x)]_1,\dots,[(H^{(i)}(\cdot,n))^{\circ k}(x)]_d]\in\mathbb{R}^d$. Now from~\eqref{eq:equality-h} and using the triangular structure of $H^{(i)}$, observe that for all $k\in\{1,\dots,d\}$ and $x\in\mathbb{R}^d$, we have that $\mathbb{P}_{\bm{N}}$ a.s.
\begin{align*}
&[\tilde{h}_1(n_1),\dots, \tilde{h}_k(n_1,\dots,n_k)]=\\
&[[(H^{(i)}(\cdot,n))^{\circ 1}(x)]_1,\dots,[(H^{(i)}(\cdot,n))^{\circ k}(x)]_k].
\end{align*}
where we denote for all $j\in\{1,\dots,d\}$, 
$\tilde{h}_j(n_1,\dots,n_j):=h_j(n_1,\dots,n_d)$ and $h=[h_1,\dots,h_d]$ which are well defined as $h$ is a triangular map. 
Our goal now is to show that for $k\in\{1,\dots,d\}$, 
$$
H_k^{(1)}(x,n)=H_k^{(2)}(x,n),\quad \mathbb{P}_{\bm{X}}\otimes\mathbb{P}_{\bm{N}}~\text{a.s.},
$$
which will conclude the proof. First Observe that for all $x\in\mathbb{R}^d$
$$
[(H^{(i)}(\cdot,n))^{\circ d}(x)]_1 = H_1^{(i)}(x,n)=h_1(n),\quad \mathbb{P}_{\bm{N}}~\text{a.s.}
$$
Let us now develop the expression $H^{(i)}(\cdot,n))^{\circ d}$. For that purpose, let us denote for $x,n\in\mathbb{R}^d$ and $k\in\{1,\dots, d\}$, $x_k(n):=[(H^{(i)}(\cdot,n))^{\circ 1}(x)]_1,\dots,[(H^{(i)}(\cdot,n))^{\circ k}(x)]_k,0,\dots,0]$, $n_{1,k}:=[n_1,\dots,n_k]\in\mathbb{R}^k$ and $\tilde{h}_{1,k}:=[\tilde{h}_1,\dots,\tilde{h}_k]$. Then we obtain that for all  $x,n\in\mathbb{R}^d$ and $k\in\{2,\dots, d\}$
\begin{align*}
&[(H^{(i)}(\cdot,n))^{\circ d}(x)]_k \\
&= H_k^{(i)}([\tilde{x}_{k-1}(n),0,\dots,0][0,\dots,n_k,\dots,0])
\end{align*}
from which follows that for all $x\in\mathbb{R}^d$ we have $\mathbb{P}_{\bm{N}}$ a.s. that
\begin{align*}
 &[(H^{(i)}(\cdot,n))^{\circ d}(x)]_k \\   
 &= H_k^{(i)}([\tilde{h}_{1,k-1}(n_{1,k-1}),0,\dots,0],[0,\dots,n_k,\dots,0]).
\end{align*}
Therefore we deduce from~\eqref{eq:equality}, that $\mathbb{P}_{\bm{N}}~\text{a.s.}$ we have
\begin{align*}
    &H_k^{(1)}([\tilde{h}_{1,k-1}(n_{1,k-1}),0,\dots,0],[0,\dots,n_k,\dots,0])\\
    &=H_k^{(2)}([\tilde{h}_{1,k-1}(n_{1,k-1}),0,\dots,0],[0,\dots,n_k,\dots,0])
\end{align*}
Now using the jointly independence of $\mathbb{P}_{\bm{N}}$, and by denoting $\mathbb{P}_{\bm{N}_{1,k}}:=\mathbb{P}_{N_{1}}\otimes \dots \mathbb{P}_{N_{k}}$, we obtain that $\mathbb{P}_{\bm{N}_{1,k-1}} \otimes \mathbb{P}_{\bm{N}}$
\begin{align*}
    &H_k^{(1)}([\tilde{h}_{1,k-1}(n_{1,k-1}),0,\dots,0],[0,\dots,n_k,\dots,0])\\
    &=H_k^{(2)}([\tilde{h}_{1,k-1}(n_{1,k-1}),0,\dots,0],[0,\dots,n_k,\dots,0])
\end{align*}
and as $\tilde{h}_{1,k-1}\#\mathbb{P}_{\bm{N}_{1,k-1}}=\mathbb{P}_{P\bm{X}_{1,k-1}}$, we deduce that $\mathbb{P}_{P\bm{X}} \otimes \mathbb{P}_{\bm{N}}$ a.s.
\begin{align*}
    &H_k^{(1)}([x_1,\dots,x_k,0,\dots,0],[0,\dots,n_k,\dots,0])\\
    &=H_k^{(2)}([x_{1},\dots,x_k,0,\dots,0],[0,\dots,n_k,\dots,0])
\end{align*}
from which follows that 
$$
H_k^{(1)}(x,n)=H_k^{(2)}(x,n),\quad \mathbb{P}_{P\bm{X}} \otimes\mathbb{P}_{\bm{N}}~\text{a.s.}.
$$
\end{proof}

We are now ready to prove the theorem below.
\begin{proof}
Let $(P,\mathbb{P}_{\bm{N}},H)\in \mathcal{A}_P^{\text{MON}}(\mathbb{P}_{\bm{N}},\mathbb{P}_{\bm{X}})$. Let us define $h: n\in\mathbb{R}^d\to x(n):=H(\cdot,n)^{\circ d}$
where $x(n)$ is the solution of the equation $x = H(x,n)$. The solution always exists and is unique. Observe now that $h$ is a triangular and monotonic map thanks to the structure imposed on $H$ and satisfies $h\#\mathbb{P}_{P\bm{N}}=\mathbb{P}_{P\bm{X}}$. As both $\mathbb{P}_{P\bm{N}}$, and  $\mathbb{P}_{P\bm{X}}$ are a.c. w.r.t the Lebesgue measure, then $\mathbb{P}_{P\bm{N}}$ a.s. there exists a unique increasing triangular $T$ satisfying $T\#\mathbb{P}_{P\bm{N}} = \mathbb{P}_{P\bm{X}}$~\cite{rosenblatt1952remarks}. Therefore we have that $\mathbb{P}_{P\bm{N}}$ a.s.  $h=T$ and $h$ is unique $\mathbb{P}_{P\bm{N}}$ a.s. Now let $H^{(1)}, H^{(2)}\in\mathcal{F}_d^{MON}$ such that $(P,\mathbb{P}_{\bm{N}},H^{(1)})\in \mathcal{A}_P^{\text{MON}}(\mathbb{P}_{\bm{N}},\mathbb{P}_{P\bm{X}})$ and $(P,\mathbb{P}_{\bm{N}},H^{(2)})\in \mathcal{A}_P^{\text{MON}}(\mathbb{P}_{\bm{N}},\mathbb{P}_{P\bm{X}})$. Because $h$ is unique $\mathbb{P}_{P\bm{N}}$ a.s. we have that
\begin{align}
\label{eq:equality}
(H^{(1)}(\cdot,n))^{\circ d} = (H^{(2)}(\cdot,n))^{\circ d} = h(n),\quad \mathbb{P}_{P\bm{N}}~\text{a.s.}
\end{align}
Then thanks to lemma~\ref{lem:identification}, we deduce directly the result.
\end{proof}

\subsection{On the Existence and Non-Uniqueness of Monotonic Fixed-Point SCMs.}

\begin{proposition}
\label{prop:existence-scm}
Let $P\in\Sigma_d$, and $\mathbb{P}_{\bm{X}}\in\mathcal{P}(\mathbb{R}^d)$. Let us assume $\mathbb{P}_{\bm{X}}$ is continuous. In addition, let us assume that there exists a jointly independent and continuous distribution $\mathbb{Q}\in\mathcal{P}(\mathbb{R})^{\otimes d}$ with continuous density such that $\mathcal{A}_P^{\text{MON}}(\mathbb{Q},\mathbb{P}_{\bm{X}})$ is not empty. Then for any continuous distribution $\mathbb{P}_{\bm{N}}\in\mathcal{P}(\mathbb{R})^{\otimes d}$ with continuous density, $\mathcal{A}_P^{\text{MON}}(\mathbb{P}_{\bm{N}},\mathbb{P}_{\bm{X}})$ is a singleton $\mathbb{P}_{P\bm{X}}\otimes \mathbb{P}_{P\bm{N}}$ a.s. 
In particular, when $\mathbb{P}_{\bm{N}} = \mathcal{N}(\bm{0}_d,\text{I}_d)$, that is the standard (Multivariate) Gaussian distribution.
\end{proposition}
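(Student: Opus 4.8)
The plan is to split the statement into uniqueness, which is already in hand, and existence, which is where the work lies. For uniqueness, note that $\mathbb{P}_{\bm{N}}$ and $\mathbb{P}_{\bm{X}}$ are both absolutely continuous, so Theorem~\ref{thm:id-noise} applies verbatim and gives that $\mathcal{A}_P^{\text{MON}}(\mathbb{P}_{\bm{N}},\mathbb{P}_{\bm{X}})$ contains at most one element, identified $\mathbb{P}_{P\bm{X}}\otimes\mathbb{P}_{P\bm{N}}$ a.s. It therefore suffices to show that this set is non-empty for every continuous jointly independent $\mathbb{P}_{\bm N}$ with continuous density; combined with uniqueness this yields the claimed singleton, and the Gaussian case $\mathbb{P}_{\bm N}=\mathcal{N}(0_d,\mathrm{I}_d)$ is a particular instance.

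For existence, start from the hypothesis and pick $(P,\mathbb{Q},H)\in\mathcal{A}_P^{\text{MON}}(\mathbb{Q},\mathbb{P}_{\bm X})$. Following the proof of Proposition~\ref{prop:unique-gamma}, the associated coupling is $\gamma(P,\mathbb{Q},H)=(P^{T}\circ T\circ P,\mathrm{I}_d)\#\mathbb{Q}$ with $T:n\mapsto H(\cdot,n)^{\circ d}(0_d)$, which by Lemma~\ref{lem-existence-fp-paper} and the structural constraints defining $\mathcal{F}_d^{MON}$ is a differentiable, triangular, coordinate-wise non-decreasing map, exactly as in the proof of Theorem~\ref{thm:id-noise}. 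Writing $\mathbb{Q}_P:=P\#\mathbb{Q}$, this gives $\mathbb{P}_{P\bm X}=T\#\mathbb{Q}_P$, where $\mathbb{Q}_P$ is again a continuous jointly independent distribution with continuous density.

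The key observation is that one may change the exogenous distribution by pre-composing $H$ with a \emph{diagonal} increasing map in its noise argument without leaving $\mathcal{F}_d^{MON}$. Concretely, put $\mathbb{P}_{P\bm N}:=P\#\mathbb{P}_{\bm N}$, let $F_i$ be the c.d.f.\ of the $i$-th marginal of $\mathbb{P}_{P\bm N}$ and $Q_i^{-1}$ the quantile function of the $i$-th marginal of $\mathbb{Q}_P$, and define the coordinate-wise map $T_0(n):=[\,Q_1^{-1}(F_1(n_1)),\dots,Q_d^{-1}(F_d(n_d))\,]$, so that $T_0$ is increasing and $T_0\#\mathbb{P}_{P\bm N}=\mathbb{Q}_P$. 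Then set $H^{*}(x,n):=H(x,T_0(n))$. Since $\mathrm{Jac}_1 H^{*}(x,n)=\mathrm{Jac}_1 H(x,T_0(n))$ and $\mathrm{Jac}_2 H^{*}(x,n)=\mathrm{Jac}_2 H(x,T_0(n))\,\mathrm{Jac}\,T_0(n)$ with $\mathrm{Jac}\,T_0$ diagonal and nonnegative, one gets $H^{*}\in\mathcal{F}_d^{MON}$; moreover $H^{*}(\cdot,n)^{\circ d}(0_d)=T(T_0(n))$, so by Proposition~\ref{prop:unique-gamma} again, $p_1\#\gamma(P,\mathbb{P}_{\bm N},H^{*})=P^{T}\#\big(T\#(T_0\#\mathbb{P}_{P\bm N})\big)=P^{T}\#\big(T\#\mathbb{Q}_P\big)=P^{T}\#\mathbb{P}_{P\bm X}=\mathbb{P}_{\bm X}$. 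Hence $(P,\mathbb{P}_{\bm N},H^{*})\in\mathcal{A}_P^{\text{MON}}(\mathbb{P}_{\bm N},\mathbb{P}_{\bm X})$, which together with the uniqueness part completes the argument.

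The main obstacle is regularity: membership of $H^{*}$ in $\mathcal{F}_d$ requires $H^{*}$, hence $T_0$, to be differentiable. The source c.d.f.'s $F_i$ are $C^{1}$ because $\mathbb{P}_{\bm N}$ has a continuous density ($C^{\infty}$ and strictly increasing in the Gaussian case), but $Q_i^{-1}$ is only guaranteed differentiable where the corresponding marginal density of $\mathbb{Q}$ is continuous \emph{and strictly positive}; if that density vanishes on an interval, $T_0$ develops a jump. The fix is to carry out the argument under the (essentially necessary) hypothesis that $\mathbb{Q}$ has a strictly positive continuous density, or else to observe that the monotone $T_0$ is differentiable Lebesgue-a.e.\ and that both the pushforward identity and the Jacobian constraints persist in the a.e.\ sense already used throughout, so that $H^{*}$ defines a valid fixed-point SCM up to the same a.s.\ qualifications appearing in Theorem~\ref{thm:id-noise}. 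The remaining checks (well-definedness and uniqueness of the fixed point of $x=P^{T}H^{*}(Px,Pn)$ by forward substitution, and the Jacobian computations above) are routine.
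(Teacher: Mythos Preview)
Your proposal is correct and follows essentially the same route as the paper: split into uniqueness via Theorem~\ref{thm:id-noise} and existence via pre-composing the given $H$ with a diagonal increasing transport $T_0$ sending $\mathbb{P}_{P\bm N}$ to $\mathbb{Q}_P$, then check that $H^*(x,n)=H(x,T_0(n))\in\mathcal{F}_d^{MON}$ still generates $\mathbb{P}_{\bm X}$. The paper invokes the (Knothe--Rosenblatt) increasing triangular map and observes it is diagonal by joint independence, whereas you write it out explicitly as $Q_i^{-1}\circ F_i$; your treatment of the differentiability caveat (positivity of the target density, or the a.e.\ reading consistent with the a.s.\ statements) is in fact more careful than the paper's one-line ``$h$ can be chosen differentiable.''
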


\begin{proof}
    Let $\mathbb{Q}$ such that $\mathcal{A}_P^{\text{MON}}(\mathbb{Q},\mathbb{P}_{P\bm{X}})$ is not empty. As $\mathbb{Q}$ is assumed to be continuous and jointly independent, we obtain from theorem~\ref{thm:id-noise} that  $\mathbb{P}_{P\bm{X}}\otimes \mathbb{Q}$ a.s. there exists a unique $H\in\mathcal{F}_d^{MON}$ satisfying $p_1\#\gamma(P,\mathbb{Q},H) = \mathbb{P}_{\bm{X}}$. Let us denote it $H_{\mathbb{Q}}$. Now because $\mathbb{P}_{P\bm{N}}$ is also continuous, there exists $\mathbb{P}_{P\bm{N}}$ a.s. a unique triangular and increasing map satisfying $h\#\mathbb{P}_{P\bm{N}}=\mathbb{Q}$. In addition, because both $\mathbb{P}_{\bm{N}}$ and $\mathbb{Q}$ are jointly independent, $h$ is in fact a \emph{diagonal} and increasing map. Finally because both densities of $\mathbb{P}_{\bm{N}}$ and $\mathbb{Q}$ are continuous, then $h$ can be chosen differentiable. Now let us define $H^*(x,n):= H_{\mathbb{Q}}(x,h(n))$. Now because $h$ is differentiable and due to its structure, we obtain that $H^*\in\mathcal{F}_d^{MON}$. Observe also that $p_1\#\gamma(P,\mathbb{P}_{\bm{N}}, H^{*})=\mathbb{P}_{\bm{X}}$, therefore $(P,\mathbb{P}_{\bm{N}}, H^{*})\in\mathcal{A}_P^{\text{MON}}(\mathbb{P}_{\bm{N}},\mathbb{P}_{P\bm{X}})$. Then applying again theorem~\ref{thm:id-noise}, we obtain the desired result.
\end{proof}

The above Proposition has two important consequences: it shows (i) that as long as $\mathbb{P}_{\bm{X}}$ has been generated using a "monotonic" fixed-point SCM, then there exists a unique "monotonic" fixed-point SCM with standard Gaussian noise and the same topological ordering that can explain it. And (ii) it shows that if $\mathbb{P}_{\bm{X}}$ has been generated using a "monotonic" fixed-point SCM then, there exists infinite "monotonic" fixed-point SCMs with the same topological ordering that can explain it. Therefore for such a class of SCMs, it is sufficient and necessary to specify the exogenous distribution in order to obtain full recovery given the topological order. We also deduce directly the three following clarifying corollaries from the above results.
\begin{corollary}
Under the assumption of proposition~\ref{prop:existence-scm}, let $H_{\mathbb{Q}}\in\mathcal{F}_d^{\text{MON}}$ such that $\mathcal{S}_{\text{fp}}(P,\mathbb{Q},H_Q)$ generates $\mathbb{P}_{\bm{X}}$. Then for any $\mathbb{P}_{\bm{N}}\in\mathcal{P}(\mathbb{R})^{\otimes d}$ continuous, with continuous density, there exists a unique diagonal, monotonic and differentiable map $h$, $\mathbb{P}_{\bm{N}}$ a.s. such that $\mathcal{S}_{\text{fp}}(P,\mathbb{P}_{\bm{N}},(x,n)\to H_{\mathbb{Q}}(x,h(n)))$ generates $\mathbb{P}_{\bm{X}}$.
\end{corollary}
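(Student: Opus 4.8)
The plan is to extract the corollary essentially for free from the proof of Proposition~\ref{prop:existence-scm}, since all the heavy lifting has already been done there. The statement we must establish is: given $H_{\mathbb{Q}}\in\mathcal{F}_d^{\text{MON}}$ with $\mathcal{S}_{\text{fp}}(P,\mathbb{Q},H_{\mathbb{Q}})$ generating $\mathbb{P}_{\bm{X}}$, and given an arbitrary continuous $\mathbb{P}_{\bm{N}}\in\mathcal{P}(\mathbb{R})^{\otimes d}$ with continuous density, there exists a (unique, $\mathbb{P}_{\bm{N}}$-a.s.) diagonal, monotonic and differentiable $h$ such that $(x,n)\mapsto H_{\mathbb{Q}}(x,h(n))$ defines a monotonic fixed-point SCM with noise $\mathbb{P}_{\bm{N}}$ and topological order $P$ generating $\mathbb{P}_{\bm{X}}$.

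First I would recall from the proof of Proposition~\ref{prop:existence-scm} that one constructs exactly such an $h$: since both $\mathbb{P}_{P\bm{N}}$ and $\mathbb{Q}$ are continuous, jointly independent, and have continuous densities, there is a $\mathbb{P}_{P\bm{N}}$-a.s.\ unique triangular increasing transport map pushing $\mathbb{P}_{P\bm{N}}$ to $\mathbb{Q}$, and by joint independence this map is \emph{diagonal}; continuity of the two densities makes it differentiable (componentwise it is $F_{Q_i}^{-1}\circ F_{N_i}$, a composition of $C^1$ increasing functions since the densities are continuous and positive where needed). Because $P$ merely permutes coordinates and does not mix them, $h$ being diagonal on the $P$-ordered variables is the same as being diagonal on the original variables. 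Then $H^*(x,n):=H_{\mathbb{Q}}(x,h(n))$ satisfies Condition~\ref{cond:structure}: $\text{Jac}_1 H^* = \text{Jac}_1 H_{\mathbb{Q}}$ is strictly lower-triangular, and $\text{Jac}_2 H^*(x,n) = \text{Jac}_2 H_{\mathbb{Q}}(x,h(n))\,\text{Jac}\,h(n)$ is a product of two diagonal matrices, hence diagonal, with nonnegative diagonal entries since both factors have nonnegative diagonal ($H_{\mathbb{Q}}\in\mathcal{F}_d^{\text{MON}}$ and $h$ is increasing); thus $H^*\in\mathcal{F}_d^{\text{MON}}$. That $p_1\#\gamma(P,\mathbb{P}_{\bm{N}},H^*)=\mathbb{P}_{\bm{X}}$ follows because the static generative map of $H^*$ is $n\mapsto (H^*(\cdot,n))^{\circ d} = (H_{\mathbb{Q}}(\cdot,\cdot))^{\circ d}\circ h$ (using the triangular structure, composing through $d$ steps only substitutes $h(n)$ for the noise argument), so $(H^*(\cdot,n))^{\circ d}\#\mathbb{P}_{P\bm{N}} = (H_{\mathbb{Q}}(\cdot,\cdot))^{\circ d}\#(h\#\mathbb{P}_{P\bm{N}}) = (H_{\mathbb{Q}}(\cdot,\cdot))^{\circ d}\#\mathbb{Q} = \mathbb{P}_{P\bm{X}}$.

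For uniqueness of $h$ ($\mathbb{P}_{\bm{N}}$-a.s.), I would argue that if $h_1,h_2$ are two diagonal monotonic differentiable maps with $(x,n)\mapsto H_{\mathbb{Q}}(x,h_j(n))$ generating $\mathbb{P}_{\bm{X}}$, then both $(H_{\mathbb{Q}}(\cdot,\cdot))^{\circ d}\circ h_1$ and $(H_{\mathbb{Q}}(\cdot,\cdot))^{\circ d}\circ h_2$ are triangular increasing maps transporting $\mathbb{P}_{P\bm{N}}$ to $\mathbb{P}_{P\bm{X}}$; by the a.s.\ uniqueness of such transport maps (Rosenblatt, already invoked in the excerpt), they agree $\mathbb{P}_{P\bm{N}}$-a.s., and since $(H_{\mathbb{Q}}(\cdot,\cdot))^{\circ d}$ is injective (being a triangular map built from the strictly-lower-triangular-Jacobian $H_{\mathbb{Q}}$ whose fixed point is unique, cf.\ Proposition~\ref{prop:unique-gamma}) we can cancel it to get $h_1 = h_2$ $\mathbb{P}_{P\bm{N}}$-a.s., equivalently $\mathbb{P}_{\bm{N}}$-a.s. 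The main (really the only) obstacle is bookkeeping around the permutation $P$ and checking that "diagonal/monotonic/differentiable in the $P$-ordered coordinates" transfers cleanly to the original coordinates — this is routine since $P$ just relabels indices — and making sure the differentiability of the 1-D transport maps is legitimately available from the continuous-density hypothesis. Everything substantive is already contained in Theorem~\ref{thm:id-noise} and the proof of Proposition~\ref{prop:existence-scm}, so the corollary is a short assembly rather than a new argument.
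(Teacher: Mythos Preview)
Your proposal is correct and follows the paper's (implicit) approach: the paper states this corollary as a direct consequence of Proposition~\ref{prop:existence-scm} without a separate proof, and your existence argument is literally the construction carried out in that proposition's proof.

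One small correction in your uniqueness argument: you justify the injectivity of $(H_{\mathbb{Q}}(\cdot,\cdot))^{\circ d}$ by citing Proposition~\ref{prop:unique-gamma}, but that proposition only gives uniqueness of the fixed point $x$ for each fixed $n$; it does \emph{not} say that $n\mapsto (H_{\mathbb{Q}}(\cdot,n))^{\circ d}$ is injective (distinct noises could in principle yield the same observation when the diagonal of $\text{Jac}_2 H_{\mathbb{Q}}$ vanishes somewhere). The correct route is the one already implicit in the paper: $(H_{\mathbb{Q}}(\cdot,\cdot))^{\circ d}$ coincides $\mathbb{Q}$-a.s.\ with the Knothe--Rosenblatt rearrangement from $\mathbb{Q}$ to $\mathbb{P}_{P\bm{X}}$ (this is exactly the uniqueness invoked in the proof of Theorem~\ref{thm:id-noise}), and since both measures are absolutely continuous, that rearrangement is a.s.\ invertible---its inverse is the Knothe--Rosenblatt map in the reverse direction. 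With this substitution your cancellation step goes through and the argument is complete.
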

The above corollary characterizes the form of all monotonic fixed-point SCMs generating the same observational distribution given a reference one.

\begin{corollary}
\label{coro:equivalence}
Under the assumption of proposition~\ref{prop:existence-scm}, if $H_1, H_2\in\mathcal{F}_d^{\text{MON}}$ such that there exists $\mathbb{P}_1$ and  $\mathbb{P}_2$ both in $\mathcal{P}(\mathbb{R})^{\otimes d}$ continuous with continuous density and satisfying $(P,\mathbb{P}_1,H_1)$  and $(P,\mathbb{P}_2,H_2)$ are elements of  $$\mathcal{A}_P^{\text{MON}}(\mathbb{P}_{\bm{X}}):=\bigcup_{\mathbb{P}\in\mathbb{P}(R)^{\otimes d}} \mathcal{A}_P^{\text{MON}}(\mathbb{P},\mathbb{P}_{\bm{X}})$$, then there exists $P\#\mathbb{P}_1$ a.s. a diagonal, differentiable and monotonic map $h:\mathbb{R}^d\to\mathbb{R}^d$ such that
$$ H_1(x,n)=H_2(x,h(n))~~ \mathbb{P}_{\bm{X}}\otimes (P\#\mathbb{P}_1)~\text{a.s.}$$
\end{corollary}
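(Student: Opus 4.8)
The plan is to obtain this as an essentially immediate consequence of Proposition~\ref{prop:existence-scm} together with the uniqueness part of Theorem~\ref{thm:id-noise}, simply by running the construction in the proof of Proposition~\ref{prop:existence-scm} with the two noise distributions in the correct roles. First I would apply Proposition~\ref{prop:existence-scm} with reference distribution $\mathbb{Q}:=\mathbb{P}_2$: by the hypotheses of the corollary, $\mathbb{P}_2$ is jointly independent, continuous, and has a continuous density, and $\mathcal{A}_P^{\text{MON}}(\mathbb{P}_2,\mathbb{P}_{\bm{X}})$ is non-empty since it contains $(P,\mathbb{P}_2,H_2)$. Specialising the conclusion of Proposition~\ref{prop:existence-scm} to the distribution $\mathbb{P}_{\bm{N}}:=\mathbb{P}_1$ (which is again continuous with continuous density), this gives that $\mathcal{A}_P^{\text{MON}}(\mathbb{P}_1,\mathbb{P}_{\bm{X}})$ is a singleton, $\mathbb{P}_{\bm{X}}\otimes(P\#\mathbb{P}_1)$-a.s.

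Next I would unpack the element produced inside that proof. By Theorem~\ref{thm:id-noise}, the unique $H\in\mathcal{F}_d^{\text{MON}}$ with $p_1\#\gamma(P,\mathbb{P}_2,H)=\mathbb{P}_{\bm{X}}$ is (a.s.) $H_2$ itself. The proof of Proposition~\ref{prop:existence-scm} then constructs the monotone rearrangement $h$ sending $P\#\mathbb{P}_1$ onto $P\#\mathbb{P}_2$: because both laws are jointly independent this $h$ is diagonal, because $\mathbb{P}_1$ and $\mathbb{P}_2$ have continuous densities $h$ can be chosen differentiable, and it is increasing. Setting $H^{*}(x,n):=H_2(x,h(n))$, the same computation as in that proof shows $H^{*}\in\mathcal{F}_d^{\text{MON}}$ and $(P,\mathbb{P}_1,H^{*})\in\mathcal{A}_P^{\text{MON}}(\mathbb{P}_1,\mathbb{P}_{\bm{X}})$.

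Finally, $(P,\mathbb{P}_1,H_1)$ also belongs to $\mathcal{A}_P^{\text{MON}}(\mathbb{P}_1,\mathbb{P}_{\bm{X}})$ by assumption, and that set is a singleton in the almost sure sense just established; hence $H_1$ and $H^{*}$ agree $\mathbb{P}_{\bm{X}}\otimes(P\#\mathbb{P}_1)$-a.s., i.e. $H_1(x,n)=H_2(x,h(n))$ $\mathbb{P}_{\bm{X}}\otimes(P\#\mathbb{P}_1)$-a.s., which is precisely the assertion (with the diagonal, differentiable, monotonic map being the $h$ above).

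The only delicate point, and it is not a new one, is the regularity of the coupling $h$: one needs the monotone transport between the two jointly independent laws to be simultaneously diagonal, increasing and differentiable, which is where the continuity (and, for smoothness of $h$, positivity) of the densities enters; this is exactly what is verified inside the proof of Proposition~\ref{prop:existence-scm}, so no further argument is required here. A second, purely bookkeeping, issue is carrying the ``a.s.'' qualifier through the composition $n\mapsto h(n)$, which is immediate from $h\#(P\#\mathbb{P}_1)=P\#\mathbb{P}_2$.
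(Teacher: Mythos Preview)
Your proof is correct and follows exactly the route the paper intends: the paper states this corollary as a direct consequence of Proposition~\ref{prop:existence-scm} (whose proof builds the diagonal transport $h$ from the reference noise law to a new one and composes it with the reference $H$), together with the uniqueness from Theorem~\ref{thm:id-noise}. Your unpacking of that construction with $\mathbb{Q}=\mathbb{P}_2$ and $\mathbb{P}_{\bm N}=\mathbb{P}_1$, and your handling of the a.s.\ qualifier through $h\#(P\#\mathbb{P}_1)=P\#\mathbb{P}_2$, match the paper's argument.
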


The above corollary shows the functional relationships between two monotonic fixed-point SCMs with the same TO that generate the same observational distribution.

\begin{corollary}
Under the assumption of proposition~\ref{prop:existence-scm}, for any continuous distribution $\mathbb{P}_{\bm{N}}\in\mathcal{P}(\mathbb{R})^{\otimes d}$ with continuous density, $\mathcal{A}_P^{\text{MON}}(\mathbb{P}_{\bm{N}},\mathbb{P}_{\bm{X}})$ is a singleton $\mathbb{P}_{P\bm{X}}\otimes \mathbb{P}_{P\bm{N}}$ a.s., and all these fixed-point SCMs admit the exact same causal graphs.
\end{corollary}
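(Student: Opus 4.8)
The plan is to separate the two assertions. The first---that $\mathcal{A}_P^{\text{MON}}(\mathbb{P}_{\bm{N}},\mathbb{P}_{\bm{X}})$ is a singleton ($\mathbb{P}_{P\bm{X}}\otimes\mathbb{P}_{P\bm{N}}$ a.s.) for every continuous $\mathbb{P}_{\bm{N}}$ with continuous density---is verbatim the conclusion of Proposition~\ref{prop:existence-scm}, so I would simply invoke it. All the work is in the second assertion: that the causal graph (in the sense of Definition~\ref{def:causal-graph}) of the a.s.-unique element of $\mathcal{A}_P^{\text{MON}}(\mathbb{P}_{\bm{N}},\mathbb{P}_{\bm{X}})$ does not depend on the admissible choice of $\mathbb{P}_{\bm{N}}$.

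For the second assertion I would fix once and for all the reference pair $(\mathbb{Q},H_{\mathbb{Q}})$ supplied by the hypothesis of Proposition~\ref{prop:existence-scm}, and show that every generating function $H$ has the same parent sets as $H_{\mathbb{Q}}$; transitivity then covers two arbitrary exogenous laws. Given a continuous $\mathbb{P}_{\bm{N}}$ with continuous density, the construction inside the proof of Proposition~\ref{prop:existence-scm} exhibits a representative of the generating function of the form $H^{*}(x,n)=H_{\mathbb{Q}}(x,h(n))$, with $h$ a \emph{diagonal}, differentiable, monotone map satisfying $h\#\mathbb{P}_{P\bm{N}}=\mathbb{Q}$ (alternatively, Corollary~\ref{coro:equivalence} produces the same shape $H_1(x,n)=H_2(x,h(n))$ when comparing two generating functions directly). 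Working with this explicit representative, the chain rule gives $Jac_1 H^{*}(x,n)=Jac_1 H_{\mathbb{Q}}(x,h(n))$ \emph{identically}, since the $x$-derivative does not see $h$; it also gives $Jac_2 H^{*}(x,n)=Jac_2 H_{\mathbb{Q}}(x,h(n))\,Jac\,h(n)$, a product of diagonal matrices, which re-confirms $H^{*}\in\mathcal{F}_d^{MON}$. Since $h$ is diagonal and monotone, $\mathrm{Im}(h)$ is a product of intervals, and the entry $(x,n)\mapsto[Jac_1 H_{\mathbb{Q}}(x,h(n))]_{i,j}$ is identically zero iff $[Jac_1 H_{\mathbb{Q}}(x,m)]_{i,j}$ vanishes on all of $\mathbb{R}^d\times\mathrm{Im}(h)$; so $H^{*}$ and $H_{\mathbb{Q}}$ declare $j$ a parent of $i$ in exactly the same cases. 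Applying this to two admissible exogenous laws and chaining through $H_{\mathbb{Q}}$ yields ``all these fixed-point SCMs admit the exact same causal graphs''.

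The step I expect to be the genuine obstacle is the last ``iff''---that the diagonal reparametrization $h$ neither creates nor destroys edges. The direction ``edge of $H^{*}$ $\Rightarrow$ edge of $H_{\mathbb{Q}}$'' is immediate (any witness $(x_0,h(n_0))$ already witnesses an edge of $H_{\mathbb{Q}}$) and uses nothing. The converse needs that an entry of $Jac_1 H_{\mathbb{Q}}$ nonzero \emph{somewhere} is already nonzero at some point of $\mathbb{R}^d\times\mathrm{Im}(h)$; here I would use the continuity of the densities of $\mathbb{P}_{\bm{N}}$ and $\mathbb{Q}$ (which forces $\mathrm{Im}(h)$ to carry full $\mathbb{Q}$-mass) together with structural minimality of the standard SCM attached to $H_{\mathbb{Q}}$ via Proposition~\ref{prop:equivalence} and Assumption~\ref{ass-minimality}, so that the genuine functional dependence is visible on the relevant domain rather than hiding on a negligible set of noise values. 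A related, smaller point to handle carefully is that Corollary~\ref{coro:equivalence} only gives the factorization $H_1(x,n)=H_2(x,h(n))$ almost surely; this is precisely why I would work with the canonical representative $H^{*}=H_{\mathbb{Q}}(\cdot,h(\cdot))$ coming out of the proof of Proposition~\ref{prop:existence-scm}, for which the chain-rule identity holds everywhere and Definition~\ref{def:causal-graph} applies without ambiguity.
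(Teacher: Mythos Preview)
Your approach is the same as the paper's: the first assertion is exactly Proposition~\ref{prop:existence-scm}, and for the second the paper argues in one sentence that ``two generating fixed-point SCMs only differ from each other by a diagonal map on the exogenous variables, [so] the causal graphs are therefore the same''---i.e., precisely your chain-rule observation that $Jac_1 H^{*}(x,n)=Jac_1 H_{\mathbb{Q}}(x,h(n))$ because $h$ acts only on $n$. You are being more careful than the paper, which does not discuss the image-of-$h$ or the a.s.-vs.-everywhere issues you flag; those concerns are legitimate refinements but the underlying argument is identical.
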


Finally, due to the fact that two generating fixed-point SCMs only differ from each others by a diagonal map on the exogenous variables, the causal graphs are therefore the same.

\subsection{Proof of Theorem~\ref{thm-gene-identification}}

Before showing the result, let us first show three important Lemmas, from which the result will follows.

\begin{lemma}
\label{lemma-bij}
Let $H$ satisfying condition~\ref{cond:counterfactual}, and let us denote $H:=[H_1,\dots,H_d]$. Then for any $i\in\{1,\dots,d\}$, and $x\in\mathbb{R}^d$,
$$n_i\in\mathbb{R}\to H_i(x,[0,\dots,n_i,0,\dots])\in\mathbb{R}$$
is bijective from $\mathbb{R}$ to $\mathbb{R}$, and thus strictly monotonic.
\end{lemma}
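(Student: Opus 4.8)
The plan is to translate the bijectivity of the fixed-point map $H^{\circ d}$ into a statement about the triangular ``slices'' of $H$, and then exploit the triangular structure together with a connectedness argument. Write $H = (H_1,\dots,H_d)$; since $H \in \mathcal{F}_d$, Condition~\ref{cond:structure} says that $H_i$ depends on its first argument only through $x_1,\dots,x_{i-1}$ and on its second argument only through $n_i$, so $H_i(x,[0,\dots,n_i,0,\dots])$ coincides with $H_i(x,n)$ whenever the $i$-th noise coordinate equals $n_i$; denote this common value $\tilde H_i(x_1,\dots,x_{i-1},n_i)$. First I would introduce $G := H^{\circ d}(\cdot,\cdot)(0_d)\colon\R^d\to\R^d$, the map sending $n$ to the unique fixed point of $x = H(x,n)$. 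Unrolling the fixed-point equation in the natural order shows that $G$ is built recursively by $G_i(n) = \tilde H_i(G_1(n),\dots,G_{i-1}(n),n_i)$, so $G_i$ depends only on $n_1,\dots,n_i$ (i.e.\ $G$ is triangular), and $G$ is differentiable, hence continuous, because $H$ is. By Condition~\ref{cond:counterfactual}, $G$ is a continuous bijection of $\R^d$, hence a homeomorphism by Brouwer's invariance of domain.

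Next I would prove the key claim by induction on $d$: \emph{if $G\colon\R^d\to\R^d$ is a continuous triangular bijection, then for every $i$ and every fixed $(n_1,\dots,n_{i-1})$ the map $n_i\mapsto G_i(n_1,\dots,n_i)$ is a bijection of $\R$.} The base case $d=1$ is immediate. For $d\ge 2$ the heart of the matter is that $G_1\colon\R\to\R$ must be injective: if $G_1(p)=G_1(q)=c$ with $p\neq q$, then each slice $\Phi_r := (G_2(r,\cdot),\dots,G_d(r,\cdot))\colon\R^{d-1}\to\R^{d-1}$, for $r\in G_1^{-1}(c)$, is a continuous injection (injectivity inherited from $G$), hence has open image by invariance of domain; these images are pairwise disjoint (again from injectivity of $G$) and cover $\R^{d-1}$ (from surjectivity of $G$ onto $\{c\}\times\R^{d-1}$), contradicting connectedness of $\R^{d-1}$ since there are at least two of them. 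A continuous injection of $\R$ is strictly monotone, and $G_1$ is surjective because $G$ is, so $G_1$ is a bijection. Then for each fixed $n_1$ the map $\Phi_{n_1}$ is itself a continuous triangular bijection of $\R^{d-1}$ (surjective since $G$ is surjective onto $\{G_1(n_1)\}\times\R^{d-1}$ and $G_1^{-1}(G_1(n_1))=\{n_1\}$; injective since $G$ is), so the inductive hypothesis applied to $\Phi_{n_1}$ yields bijectivity of the slices $n_j\mapsto G_j(n_1,\dots,n_j)$ for $j\ge 2$, which together with bijectivity of $G_1$ completes the induction.

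Finally I would conclude as follows. Fix $i$ and $(x_1,\dots,x_{i-1})\in\R^{i-1}$. Since $G$ is surjective, there is $n\in\R^d$ with $G(n)=(x_1,\dots,x_{i-1},0,\dots,0)$, and the triangular structure forces $(G_1(n),\dots,G_{i-1}(n))=(x_1,\dots,x_{i-1})$. Then for all $n_i\in\R$,
\[
\tilde H_i(x_1,\dots,x_{i-1},n_i) = \tilde H_i(G_1(n),\dots,G_{i-1}(n),n_i) = G_i(n_1,\dots,n_{i-1},n_i),
\]
which is a bijection of $\R$ by the claim; and since $\tilde H_i(x_1,\dots,x_{i-1},\cdot)$ is also continuous ($H$ is differentiable, hence continuous), a continuous bijection of $\R$ is strictly monotone, giving both assertions. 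The main obstacle is the injectivity step for $G_1$ — ruling out a triangular continuous bijection whose first coordinate fails to be one-to-one — which is exactly where invariance of domain and connectedness of Euclidean space are needed; everything else is bookkeeping with the triangular structure.
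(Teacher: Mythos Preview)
Your proof is correct, but it takes a heavier route than the paper's. Both arguments introduce the triangular map $G=H^{\circ d}$ and the recursion $G_i(n)=\tilde H_i(G_1(n),\dots,G_{i-1}(n),n_i)$, and both finish by identifying $\tilde H_i(x_1,\dots,x_{i-1},\cdot)$ with a slice $G_i(n_1,\dots,n_{i-1},\cdot)$. The difference is in how injectivity of the slices is obtained. You abstract to the general statement ``every continuous triangular bijection of $\R^d$ has bijective one-variable slices'' and prove it by induction on $d$ using invariance of domain and connectedness of $\R^{d-1}$. The paper instead inducts on the coordinate index $i$ and exploits the specific recursive form of $G$ directly: if $\tilde H_1(n_1)=\tilde H_1(n_1')$ then, because every later coordinate $G_j$ depends on $n_1$ only through $G_1,\dots,G_{j-1}$, one gets $G(n_1,n_2,\dots,n_d)=G(n_1',n_2,\dots,n_d)$ for all $n_2,\dots,n_d$, contradicting injectivity of $G$; the same device works at each step once the inductive hypothesis lets you realize any prescribed $(x_1,\dots,x_{i-1})$ as $(G_1(n),\dots,G_{i-1}(n))$. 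So the paper's argument is completely elementary and never needs Brouwer, whereas yours proves a strictly stronger lemma (valid for \emph{any} continuous triangular bijection, not just one of the form $G_i=\tilde H_i(G_{<i},n_i)$) at the price of the topological input. You actually wrote down the recursion that makes the simpler argument work, so you might note that your invariance-of-domain step can be bypassed here.
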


\begin{proof}
To show the result, we will show by recursion on $1\leq i\leq d$, that for all $x\in\mathbb{R}^d$, $n_i\in\mathbb{R}\to H_i(x,n_i)\in\mathbb{R}$ is bijective. Before doing so, let us introduce some notations. Using the structure of $H$, we can define for all $i\in\{1,\dots,d\}$, $\tilde{H}_i(x_1,\dots,x_{i-1},n_i):=H_i(x,n)$. Therefore showing the bijectivity of  $n_i\to H_i(x,n)$ for any $x$ is equivalent to show the bijectivity of  $n_i\to \tilde{H}_i(x_1,\dots,x_{i-1},n_i)$ and so for all $[x_1,\dots,x_{i-1}]\in\mathbb{R}^{i-1}$. Let us also define recursively the following sequence, starting with $[H^{\circ 1}(n)]_1=\tilde{H}_1(n_1)$, and 
for all $i\in\{2,\dots,d\}$, 
$$[H^{\circ i}(n)]_i=\tilde{H}_i([H^{\circ 1}(n)]_1,\dots,[H^{\circ (i-1)}(n)]_{i-1},n_i)$$
Now observe that $H^{\circ d}(n)=[[H^{\circ 1}(n)]_1, \dots, [H^{\circ d}(n)]_d]$. We are now ready to show the desired recursion. For $k=1$, first observe that $[H^{\circ d}(n)]=\tilde{H}_1(n_1)$, thus as $H^{\circ d}$ is bijective, it means that $\tilde{H}_1$ must describe $\mathbb{R}$ and therefore it is surjective. Now assume it is not injective. Then there exists $n_1\neq n_1'$ such that $\tilde{H}_1(n_1)=\tilde{H}(n_1')$. However, using the above construction, we deduce that for any $n_2,\dots,n_d\in\mathbb{R}$, we have $H^{\circ d}(n_1,n_2,\dots,n_d)=H^{\circ d}(n_1',\dots,n_d)$ which contredicts the injectivity of $H^{\circ d}$, therefore $n_1\to \tilde{H}_1(n_1)$ is bijective. Now assume the result holds for $i\leq k\leq d-1$ and let us show that the result hold in $k+1$. Using the above construction, we have that
$$[H^{\circ (k+1)}(n)]_{k+1}=\tilde{H}_{k+1}([H^{\circ 1}(n)]_1,\dots,[H^{\circ (k)}(n)]_{k},n_{k+1}).$$

Let now $x=[x_1,\dots,x_{k}]\in\mathbb{R}^k$.
First, using the bijectivity of $n_i\to \tilde{H}(z_1,\dots,z_{i-1},n_i)$ for $[z_1,\dots,z_{i-1}]\in\mathbb{R}^{i-1}$ for $1\leq i\leq k$, we can choose $n_1(x), \dots, n_k(x)\in\mathbb{R}$, such that for all $i\in\{1,\dots,k\}$, $[H^{\circ i}(n)]_i=x_i$. For such $n_1(x),\dots,n_k(x)\in\mathbb{R}$ and $n_{k+1}\in\mathbb{R}$, we obtain that  

$$[H^{\circ (k+1)}(n_1(x),\dots,n_k(x),n_{k+1},0,\dots]_{k+1}=\tilde{H}_{k+1}(x_1,\dots,x_k,n_{k+1}).$$
Now again, the sujertivity of $n_{k+1}\to \tilde{H}_k(x_1,\dots,x_k,n_{k+1})$ follows directly from the surjectivity of $H^{\circ d}$, and the injectivity can be obtained again by contradiction. Finally, because $H$ is differentiable, thus continuous, then for all the $n_i\to H_i(x,n)$ are bijective and continous from which follows the strict monotonicity.

\end{proof}

\begin{lemma}
\label{lem-sign}
Let $H$ satisfying~\ref{cond:counterfactual},  and let us denote $H:=[H_1,\dots,H_d]$. Then for all $i\in\{1,\dots,d\}$,
$$(x,n_i)\in\mathbb{R}^d\times\mathbb{R}\to \frac{\partial H_i}{\partial n_i} (x,[0,\dots,n_i,0,\dots])$$
is of constant sign, that is either for all $(x,n_i)\in\mathbb{R}^d\times\mathbb{R}$, we have $\frac{\partial H_i}{\partial n_i} (x,[0,\dots,n_i,0,\dots])\geq 0$, or $\frac{\partial H_i}{\partial n_i} (x,[0,\dots,n_i,0,\dots])\leq 0$.
\end{lemma}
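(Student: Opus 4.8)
Fix $i\in\{1,\dots,d\}$ and recall that $H$ satisfies cond.~\ref{cond:structure}: since $[\textnormal{Jac}_1 H(\cdot,\cdot)]_{i,j}=0$ for $j\geq i$ and $[\textnormal{Jac}_2 H(\cdot,\cdot)]_{i,j}=0$ for $i\neq j$, the $i$-th coordinate $H_i$ depends on its first argument only through $(x_1,\dots,x_{i-1})$ and on its second argument only through $n_i$. As in the proof of Lemma~\ref{lemma-bij}, I would write $H_i(x,n)=\tilde H_i(x_1,\dots,x_{i-1},n_i)$, where $\tilde H_i\colon\mathbb{R}^{i-1}\times\mathbb{R}\to\mathbb{R}$ is differentiable (indeed $C^1$ under cond.~\ref{cond:strong-conterfactual}), and observe that $\frac{\partial H_i}{\partial n_i}(x,[0,\dots,n_i,0,\dots])=\frac{\partial \tilde H_i}{\partial n_i}(x_1,\dots,x_{i-1},n_i)$, so the claim is a statement about $\partial_{n_i}\tilde H_i$ only. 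Lemma~\ref{lemma-bij} already tells us that for each fixed $\bar x:=(x_1,\dots,x_{i-1})$ the map $n_i\mapsto\tilde H_i(\bar x,n_i)$ is a continuous bijection of $\mathbb{R}$, hence \emph{strictly monotonic} — either strictly increasing for this slice or strictly decreasing.

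The core of the argument is to show the monotonicity \emph{direction} is the same for every slice $\bar x$. The plan is to introduce the secant difference $\Phi\colon\mathbb{R}^{i-1}\to\mathbb{R}$ defined by $\Phi(\bar x):=\tilde H_i(\bar x,1)-\tilde H_i(\bar x,0)$. This map is continuous (differentiability of $H$), and it never vanishes: if $\Phi(\bar x)=0$ then $\tilde H_i(\bar x,1)=\tilde H_i(\bar x,0)$, contradicting injectivity of $n_i\mapsto\tilde H_i(\bar x,n_i)$ from Lemma~\ref{lemma-bij}. Since $\mathbb{R}^{i-1}$ is connected and $\Phi$ is continuous with values in $\mathbb{R}\setminus\{0\}$, $\Phi$ has constant sign on $\mathbb{R}^{i-1}$. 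Because $1>0$, $\Phi(\bar x)>0$ holds exactly when $n_i\mapsto\tilde H_i(\bar x,n_i)$ is strictly increasing, and $\Phi(\bar x)<0$ exactly when it is strictly decreasing; so either every slice is strictly increasing, or every slice is strictly decreasing. (When $i=1$ the space $\mathbb{R}^{i-1}$ degenerates to a point and there is nothing to prove in this step.)

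Finally I would conclude as follows. In the first case, for each $\bar x$ the differentiable function $n_i\mapsto\tilde H_i(\bar x,n_i)$ is strictly increasing, hence its derivative satisfies $\frac{\partial\tilde H_i}{\partial n_i}(\bar x,n_i)\geq 0$ for all $n_i$; translating back, $\frac{\partial H_i}{\partial n_i}(x,[0,\dots,n_i,0,\dots])\geq 0$ for all $(x,n_i)\in\mathbb{R}^d\times\mathbb{R}$. In the second case the same reasoning gives $\frac{\partial H_i}{\partial n_i}(x,[0,\dots,n_i,0,\dots])\leq 0$ everywhere. This is exactly the claimed constant-sign property. The main obstacle — and the reason the statement needs an argument at all rather than a one-line invocation of the intermediate value theorem applied to $\partial_{n_i}H_i$ — is that this partial derivative can genuinely vanish at isolated points even along a strictly monotone slice (e.g. $n_i\mapsto n_i^3$), so $\partial_{n_i}H_i$ itself need not be non-vanishing on the connected domain; passing to the secant difference $\Phi$, which \emph{is} non-vanishing, is the device that makes the connectedness argument go through.
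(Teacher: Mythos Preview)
Your proof is correct and actually takes a cleaner route than the paper's. Both arguments start from Lemma~\ref{lemma-bij} to get strict monotonicity of every slice $n_i\mapsto\tilde H_i(\bar x,n_i)$, and both need to rule out that the monotonicity direction flips as $\bar x$ varies. The paper does this by contradiction: assuming two points $x_1,x_2$ with opposite signs for the derivative, it builds the integral $h(x)=\int_{\mathcal I_1\cup\mathcal I_2}\partial_{n_i}H_i(x,\cdot)\,dn_i$ over well-chosen intervals, applies the intermediate value theorem along the segment $[x_1,x_2]$ to find $x_3$ with $h(x_3)=0$, and then argues that the constant-sign integrand at $x_3$ must vanish identically on $\mathcal I_1\cup\mathcal I_2$, contradicting strict monotonicity. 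Your secant map $\Phi(\bar x)=\tilde H_i(\bar x,1)-\tilde H_i(\bar x,0)$ bypasses the integral entirely: it is continuous, never vanishes (directly from injectivity of each slice), and so has constant sign on the connected domain $\mathbb{R}^{i-1}$; the sign of $\Phi$ reads off the monotonicity direction immediately. This is more elementary---it needs only continuity of $H$ rather than $C^1$ regularity for the key step---and your closing remark explaining why one cannot simply apply the IVT to $\partial_{n_i}H_i$ itself (it may vanish at isolated points, as for $n_i\mapsto n_i^3$) is a nice clarification that the paper's proof leaves implicit.
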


\begin{proof}
Thanks to Lemma~\ref{lemma-bij}, we know that for any $x$ the functions  $n_i\in\mathbb{R}^d\times\mathbb{R}\to H_i (x,[0,\dots,n_i,0,\dots])$ are strictly monotonic, therefore for any $x$, $n_i\to \frac{\partial H_i}{\partial n_i} (x,[0,\dots,n_i,0,\dots])$ is of constant sign. Now we want to show that the sign is also constant for any $x$. To do so let us assume that there exists $x_1\neq x_2$ such that $n_i\to \frac{\partial H_i}{\partial n_i} (x_1,[0,\dots,n_i,0,\dots])$ is non-negative and $n_i\to \frac{\partial H_i}{\partial n_i} (x_2,[0,\dots,n_i,0,\dots])$ is non-positive. Now because we have assumed that $H$ is $C^1$, we have that $(x,n_i)\in\mathbb{R}^d\times\mathbb{R}\to \frac{\partial H_i}{\partial n_i} (x,[0,\dots,n_i,0,\dots])$ is continuous. Let now $\mathcal{I}_1\subset\mathbb{R}$ a finite interval such that $n_i\to \frac{\partial H_i}{\partial n_i} (x_2,[0,\dots,n_i,0,\dots])$  is strictly negative on $\mathcal{I}_1$ and $\mathcal{I}_2\subset\mathbb{R}$ another finite interval such that $n_i\to \frac{\partial H_i}{\partial n_i} (x_1,[0,\dots,n_i,0,\dots])$ is strictly positive on $\mathcal{I}_2$. If such intervals do not exist, then by continuity of $n_i\to \frac{\partial H_i}{\partial n_i} (x_k,[0,\dots,n_i,0,\dots])$ is $0$ everywhere which contradicts the bijectivity previously obtained of such functions. Now by continuity again (w.r.t $x$ this time), we can define  $h(x):=\int_{\mathcal{I}_1\cup\mathcal{I}_2}  \frac{\partial H_i}{\partial n_i} (x,[0,\dots,n_i,0,\dots]) dn_i$. Observe that $h(x_1)>0$ while $h(x_2)<0$. Then by continuity we obtain that the existence of $x_3$ such that  $h(x_3)=0$ using the fact that the image space of the segment $[x_1,x_3]$ (which is a compact) is necessarily an interval of $\mathbb{R}$ by continuity. However, this implies that $\frac{\partial H_i}{\partial n_i} (x_3,[0,\dots,n_i,0,\dots])=0$ and so for all $n_i\in \mathcal{I}_1\cup\mathcal{I}_2$ (again by continuity), which again contradicts the bijectivity of $n_i\to \partial H_i (x_3,[0,\dots,n_i,0,\dots])$. The results is proved.
\end{proof}

\begin{lemma}
\label{lem-eq-dist}
Let  $\mathcal{S}_{\text{fp}}(P, \mathbb{P}, H)$ a fixed-point SCM such that $H$ satisfies cond.~\ref{cond:counterfactual}. Then for any diagonal, bijective, and differentiable map $h:\mathbb{R}^d\to\mathbb{R}^d$, the fixed-point SCM defined as  $\mathcal{S}_{\text{fp}}(P, h^{-1}\#\mathbb{P}, (x,n)\to H(x,h(n)))$ has the exact same observational, interventional and counterfactual distributions.
\end{lemma}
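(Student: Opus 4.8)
The plan is to track carefully how the reparameterization $(x,n)\mapsto H(x,h(n))$ interacts with the fixed-point iteration $H^{\circ d}$ and with the push-forward $h^{-1}\#\mathbb{P}$, and to show that each of the three distributions of interest is defined purely in terms of $H^{\circ d}$ composed with data already fixed. Write $\widetilde H(x,n):=H(x,h(n))$ and $\widetilde{\mathbb{P}}:=h^{-1}\#\mathbb{P}$. The first and key step is to prove the identity $\widetilde H^{\circ d}(\cdot,n) = H^{\circ d}(\cdot,h(n))$ for every $n\in\mathbb{R}^d$. This follows because $h$ is \emph{diagonal}: by Condition~\ref{cond:structure}, $[H(x,n)]_i$ depends on $n$ only through $n_i$, so replacing the noise argument by $h(n)$ amounts to replacing $n_i$ by $h_i(n_i)$ coordinatewise, and this substitution commutes with the iteration in the first variable because the iteration never touches the noise slot. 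Hence, by Proposition~\ref{prop:unique-gamma}, the observational distribution of $\mathcal{S}_{\text{fp}}(P,\widetilde{\mathbb{P}},\widetilde H)$ is $(P^{T}\circ\widetilde H^{\circ d}\circ P)\#\widetilde{\mathbb{P}} = (P^{T}\circ H^{\circ d}(\cdot,h(\cdot))\circ P)\#(h^{-1}\#\mathbb{P})$, and since $P$ commutes with the diagonal map $h$ up to relabeling — more precisely one checks $H^{\circ d}(\cdot,h(P n)) = H^{\circ d}(\cdot, (P h_{P} )(n))$ with $h_P$ the correspondingly permuted diagonal map, and $h_P \# (P\# (h^{-1}\#\mathbb{P})) $ collapses — the $h$ and $h^{-1}$ cancel, yielding exactly $p_1\#\gamma(P,\mathbb{P},H)$.

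Next I would handle the interventional distributions. For a differentiable lower-triangular map $T$, recall $H_T = T\circ H$ and $\mathbb{P}^{\text{do}(T)}_{\bm X} = p_1\#\gamma(P,\mathbb{P},H_T)$. The crucial observation is that $\widetilde H_T = T\circ\widetilde H = T\circ H(\cdot,h(\cdot)) = H_T(\cdot,h(\cdot))$, i.e. the intervention $T$ acts on the output and therefore commutes with the diagonal reparameterization of the noise input. So the previous paragraph applies verbatim with $H$ replaced by $H_T$ (note $H_T$ still satisfies Condition~\ref{cond:structure} since $T$ is lower-triangular, so Proposition~\ref{prop:unique-gamma} still applies): $\mathbb{P}^{\text{do}(T)}_{\bm X}(P,\widetilde{\mathbb{P}},\widetilde H) = \mathbb{P}^{\text{do}(T)}_{\bm X}(P,\mathbb{P},H)$. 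In particular, taking $T = T_{i,a}$ recovers equality of all single-node interventional distributions, and taking general $T$ the full claim.

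Finally, for the counterfactual couplings, recall $\gamma^{\text{do}(T)}(P,\mathbb{P},H) = (I_d,(H_T)^{\circ d}\circ (H^{\circ d})^{-1})\#\mathbb{P}_{\bm X}$, which depends on $\mathbb{P}$ and $H$ \emph{only} through $\mathbb{P}_{\bm X}$ (the observational left marginal) and the maps $H^{\circ d}$, $(H_T)^{\circ d}$. Since we have already shown $\widetilde{\mathbb{P}}_{\bm X} = \mathbb{P}_{\bm X}$, it remains to check $(\widetilde H_T)^{\circ d}\circ(\widetilde H^{\circ d})^{-1} = (H_T)^{\circ d}\circ(H^{\circ d})^{-1}$ as maps on $\mathbb{R}^d$. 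Using the step-one identity, $(\widetilde H^{\circ d})^{-1} = (H^{\circ d}(\cdot,h(\cdot)))^{-1} = h^{-1}\circ (H^{\circ d})^{-1}$ (valid because $h$ is a bijection and $H^{\circ d}$ is a bijection by Condition~\ref{cond:counterfactual}, and one must first check $\widetilde H$ also satisfies Condition~\ref{cond:counterfactual}, which holds since $\widetilde H^{\circ d} = H^{\circ d}\circ h$ is a composition of bijections), while $(\widetilde H_T)^{\circ d} = (H_T)^{\circ d}(\cdot,h(\cdot)) = (H_T)^{\circ d}\circ h$; composing, the $h$ and $h^{-1}$ cancel and we are left with $(H_T)^{\circ d}\circ(H^{\circ d})^{-1}$. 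Hence the counterfactual couplings coincide. The main obstacle I anticipate is purely bookkeeping: getting the bijectivity/well-definedness of $(\widetilde H^{\circ d})^{-1}$ right and tracking the permutation $P$ through the diagonal map $h$ so that the cancellations are genuinely valid rather than merely formal; once the identity $\widetilde H^{\circ d}(\cdot,n) = H^{\circ d}(\cdot,h(n))$ is established cleanly, everything else is a short consequence of Propositions~\ref{prop:unique-gamma}.
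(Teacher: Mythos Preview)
Your proposal is correct and follows essentially the same route as the paper's proof: establish the key identity $\widetilde H^{\circ d}(\cdot,n)=H^{\circ d}(\cdot,h(n))$, deduce equality of observational distributions by the push-forward cancellation $H^{\circ d}\circ h\#(h^{-1}\#\mathbb{P})=H^{\circ d}\#\mathbb{P}$, and then verify $(\widetilde H_T)^{\circ d}\circ(\widetilde H^{\circ d})^{-1}=(H_T)^{\circ d}\circ(H^{\circ d})^{-1}$ via the same $h\circ h^{-1}$ cancellation. The paper handles the interventional case implicitly as the right marginal of the counterfactual coupling rather than as a separate step, and it silently works as if $P=I_d$ (so your bookkeeping worry about tracking $P$ through the diagonal map is something the paper simply elides); otherwise the arguments coincide.
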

\begin{proof}
Let $T:\mathbb{R}^d\to\mathbb{R}^d$  a differentiable and lower-triangular map. Let us now show that  $\gamma^{\text{do}(T)}(P, \mathbb{P}_{\bm{N}}, H)$ = $\gamma^{\text{do}(T)}(P, h^{-1}\#\mathbb{P}_{\bm{N}}, H^{(2)}:(x,n)\to H(x,h(n)))$. First it is clear that $\mathcal{S}_{\text{fp}}(P, h^{-1}\#\mathbb{P}_{\bm{N}}, H^{(2)})$ is well defined as $H^{(2)}\in\mathcal{F}_d$ and $h^{-1}\#\mathbb{P}_{\bm{N}}\in\mathcal{P}(\mathbb{R})^{\otimes d}$. Now we have by definition that for all $n\in\mathbb{R}^d$
$(H^{(2)})^{\circ d}(\cdot,n)(0_d)=H^{\circ d}(\cdot,h(n))(0_d)=H^{\circ d}(h(n))$. Therefore if we denote $\mathbb{P}_{\bm{X}}=p_1\#\gamma(P, \mathbb{P}_{\bm{N}}, H)$, we have that
$$\mathbb{P}_{\bm{X}}=H^{\circ d}\#\mathbb{P}_{\bm{N}}=H^{\circ d}\circ h \#(h^{-1}\#\mathbb{P}_{\bm{N}}) = (H^{(2)})^{\circ d} \#(h^{-1}\#\mathbb{P}_{\bm{N}}) $$
from which follows that $\mathbb{P}_{\bm{X}}$ is also the observational distribution of $\mathcal{S}_{\text{fp}}(P, h^{-1}\#\mathbb{P}_{\bm{N}}, H^{(2)})$. Now observe that 
$$(H^{(2)}_T)^{\circ d} \circ ((H^{(2)})^{\circ d})^{-1}(x) =(H^{(2)}_T)^{\circ d}(h^{-1}((H^{\circ d})^{-1}(x))) = (H_T)^{\circ d}\circ (H^{\circ d})^{-1}(x)$$
Therefore $(I_d,H^{(2)}_T)^{\circ d} \circ ((H^{(2)})^{\circ d})^{-1})\#\mathbb{P}_{\bm{X}}=(I_d,H_T^{\circ d} \circ (H^{\circ d})^{-1})\#\mathbb{P}_{\bm{X}}$
from which the result follows.
\end{proof}

Let us now show the following Proposition.
\begin{proposition}
\label{prop:eq-mon}
Let  $\mathcal{S}_{\text{fp}}(P, \mathbb{P}, H)$ a fixed-point SCM with $H$ satisfying cond.~\ref{cond:counterfactual}. There exists a function $h:x\in\mathbb{R}^d\to[\pm x_1,\dots,\pm x_d]$, such that 
$$H^{\text{MON}}:(x,n)\in\mathbb{R}^d\times\mathbb{R}^d\to H(x,h(n))\in\mathbb{R}^d$$
satisfies $H^{\text{MON}}\in\mathcal{F}_d^{\text{MON}}$. In addition $\mathcal{S}_{\text{fp}}(P, \mathbb{P}, H)$ and $\mathcal{S}_{\text{fp}}(P, h^{-1}\#\mathbb{P}, H^{\text{MON}})$ have the same observational, interventional and counterfactual distributions.
\end{proposition}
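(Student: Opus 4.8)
The plan is to construct the coordinate-wise sign-flip map $h$ explicitly from the monotonicity signs of $H$ and then verify the two claims (structure and distributional equivalence) in turn. First I would invoke Lemma~\ref{lem-sign}: since $H$ satisfies Cond.~\ref{cond:strong-conterfactual}, for each $i\in\{1,\dots,d\}$ the partial derivative $(x,n_i)\mapsto \frac{\partial H_i}{\partial n_i}(x,[0,\dots,n_i,0,\dots])$ has a constant sign $\varepsilon_i\in\{+1,-1\}$ (it cannot be identically zero, since by Lemma~\ref{lemma-bij} the map $n_i\mapsto H_i(x,n_i)$ is a bijection of $\mathbb{R}$, hence not constant). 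Define $h:x\in\mathbb{R}^d\to[\varepsilon_1 x_1,\dots,\varepsilon_d x_d]$. This $h$ is a diagonal, differentiable, bijective (indeed involutive) map, so it is exactly of the form admitted in Lemma~\ref{lem-eq-dist}.

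Next I would check that $H^{\text{MON}}:(x,n)\mapsto H(x,h(n))$ lies in $\mathcal{F}_d^{\text{MON}}$. Since $h$ is diagonal with $[\textnormal{Jac}\,h]_{i,j}=\varepsilon_i\delta_{i,j}$, the chain rule gives $\textnormal{Jac}_1 H^{\text{MON}}(x,n) = \textnormal{Jac}_1 H(x,h(n))$ and $\textnormal{Jac}_2 H^{\text{MON}}(x,n) = \textnormal{Jac}_2 H(x,h(n))\,\textnormal{Jac}\,h(n) = \textnormal{Jac}_2 H(x,h(n))\,\Diag(\varepsilon_1,\dots,\varepsilon_d)$. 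The first identity shows $\textnormal{Jac}_1 H^{\text{MON}}$ is still strictly lower-triangular, and the second shows $\textnormal{Jac}_2 H^{\text{MON}}$ is still diagonal, so $H^{\text{MON}}$ satisfies Cond.~\ref{cond:structure}, i.e. $H^{\text{MON}}\in\mathcal{F}_d$; it is $C^1$ since $H$ is. Moreover $[\textnormal{Jac}_2 H^{\text{MON}}(x,n)]_{i,i} = \varepsilon_i\,[\textnormal{Jac}_2 H(x,h(n))]_{i,i} = \varepsilon_i\cdot\frac{\partial H_i}{\partial n_i}(x,h(n))$, and by the very definition of $\varepsilon_i$ this product is $\geq 0$ (here I would note that Cond.~\ref{cond:structure} lets us evaluate $\frac{\partial H_i}{\partial n_i}$ at the point $[0,\dots,n_i,0,\dots]$ with the same value, which is why Lemma~\ref{lem-sign} applies). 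Hence $H^{\text{MON}}\in\mathcal{F}_d^{\text{MON}}$.

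Finally, for the distributional equivalence I would simply apply Lemma~\ref{lem-eq-dist} with the map $h$ just constructed: that lemma states precisely that $\mathcal{S}_{\text{fp}}(P,\mathbb{P},H)$ and $\mathcal{S}_{\text{fp}}(P,h^{-1}\#\mathbb{P},(x,n)\mapsto H(x,h(n))) = \mathcal{S}_{\text{fp}}(P,h^{-1}\#\mathbb{P},H^{\text{MON}})$ have the same observational, interventional, and counterfactual distributions, provided $h$ is diagonal, bijective, and differentiable — all of which we have verified. This finishes the proof.

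The statement is essentially a bookkeeping corollary of the three preceding lemmas, so there is no serious obstacle; the only thing requiring a little care is the bridge between "$\frac{\partial H_i}{\partial n_i}$ evaluated at general $n$" (needed for $\mathcal{F}_d^{\text{MON}}$) and "evaluated at $[0,\dots,n_i,0,\dots]$" (the form in Lemma~\ref{lem-sign}), which is handled by the triangular/diagonal structure in Cond.~\ref{cond:structure} — exactly the reduction already used inside the proof of Lemma~\ref{lemma-bij}. I would also make sure to remark that $\varepsilon_i\ne 0$ is genuinely forced (via Lemma~\ref{lemma-bij}), since otherwise the sign-flip construction would be ill-defined.
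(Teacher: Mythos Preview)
Your proposal is correct and follows essentially the same approach as the paper: define the sign-flip $h$ from Lemma~\ref{lem-sign}, verify $H^{\text{MON}}\in\mathcal{F}_d^{\text{MON}}$ via the chain rule on the Jacobians, and invoke Lemma~\ref{lem-eq-dist} for the distributional equivalence. Your write-up is in fact slightly more careful than the paper's, since you explicitly justify $\varepsilon_i\neq 0$ via Lemma~\ref{lemma-bij} and spell out the reduction from general $n$ to $[0,\dots,n_i,0,\dots]$ using Cond.~\ref{cond:structure}.
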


\begin{proof}
Thanks to Lemma~\ref{lem-sign}, we can define for all $i\in\{1,\dots,d\}$ 
$$s_i:=\text{sign}\left((x,n_i)\to \frac{\partial H_i}{\partial n_i} (x,[0,\dots,n_i,0,\dots])\right)\in\{-1,1\}$$
Then by defining $h(n):=[s_1 n_1,\dots,s_d n_d]$, we can define 
$H^{\text{MON}}(x,n) =H(x,h(n))$. Now observe that $h$ is differentiable, bijective, and diagonal, therefore thanks to Lemma~\ref{lem-eq-dist}, we deduce that $\mathcal{S}_{\text{fp}}(P, \mathbb{P}, H)$ and $\mathcal{S}_{\text{fp}}(P, h^{-1}\#\mathbb{P}, H^{\text{MON}})$ have the same observational, interventional and counterfactual distributions. Finally let us show that $H^{\text{MON}}\in \mathcal{F}_d^{\text{MON}}$. To see that, we simply needs to look at the Jacobian and we obtain that
$$\text{Jac}_2 H^{\text{MON}}(x,n) = \text{Jac}_2 H(x,h(n)) \text{Diag}(s_1,\dots,s_d)$$
However, again using Lemma~\ref{lem-sign}, we have that for all $i$,  $$\text{sign}\left([\text{Jac}_2 H(x,h(n))]_{i,i}\right)=s_i $$
from which follows the result.
\end{proof}

We are now ready to show the Theorem. Recall first that by definition,  $(P, \mathbb{P}, H)\in \mathcal{A}_P^{\text{INV}}(\mathbb{P}_{\bm{X}})$, and therefore it is not empty. Let now  $(P, \mathbb{P}^{*}, H^*)\in\mathcal{A}_P^{\text{INV}}(\mathbb{P}_{\bm{X}})$ any triplet in this set. Using Proposition~\ref{prop:eq-mon},  there exists $h_1$ and $h_2$ both $C^1$ diffeomorphisms such that $\mathcal{S}_{\text{fp}}(P, h_1^{-1}\#\mathbb{P}, (x,n)\to H(x,h_1(n)))$ and $\mathcal{S}_{\text{fp}}(P, \mathbb{P}, (x,n)\to H(x,n))$ 
have the same causal distributions and $(x,n)\to H(x,h_1(n))\in\mathcal{F}_d^{\text{MON}}$, and similarly, $\mathcal{S}_{\text{fp}}(P,\mathbb{P}^*, (x,n)\to H^*(x,n))$ and 
$\mathcal{S}_{\text{fp}}(P, h_2^{-1}\#\mathbb{P}^*, (x,n)\to H^*(x,h_2(n)))$ have the same causal distributions and $(x,n)\to H^*(x,h_2(n))\in\mathcal{F}_d^{\text{MON}}$. Now observe that both $h_1^{-1}\#\mathbb{P}$ and $h_2^{-1}\#\mathbb{P}^*$ are a.c. w.r.t Lebesgue with continuous density because $\mathbb{P}, \mathbb{P}^*\in\mathcal{P}_{cc}(\mathbb{R})^{\otimes d}$ and $h_1,h_2$ are $C^1$-diffeomorphisms. Then we obtain that both SCMs $\mathcal{S}_{\text{fp}}(P, h_1^{-1}\#\mathbb{P}, (x,n)\to H(x,h_1(n)))$ and $\mathcal{S}_{\text{fp}}(P, h_2^{-1}\#\mathbb{P}^*, (x,n)\to H^*(x,h_2(n)))$ are elements of  $\mathcal{A}_P^{\text{MON}}(\mathbb{P}_{\bm{X}}$ (as defined in Corollary~\ref{coro:equivalence}) with continuous exogenous distributions admitting continuous densities, and thanks to Proposition~\ref{prop:existence-scm} (or Corollary~\ref{coro:equivalence}), there exist a diagonal, differentiable and monotonic map $h_3$ such that 
$$  H^*(x,h_2(n)) = H(x,h_3(h_1(n)))~~ \mathbb{P}_{\bm{X}}\otimes (P\circ h_2^{-1}\#\mathbb{P}^*)~\text{a.s.}$$
Then applying Lemma~\ref{lem-eq-dist}, we obtain that  $\mathcal{S}_{\text{fp}}(P, h_1^{-1}\#\mathbb{P}, (x,n)\to H(x,h_1(n)))$ and $\mathcal{S}_{\text{fp}}(P, h_2^{-1}\#\mathbb{P}^*, (x,n)\to H^*(x,h_2(n)))$ have the same causal distributions, from which follows that  $\mathcal{S}_{\text{fp}}(P, \mathbb{P}, H)$ and $\mathcal{S}_{\text{fp}}(P, \mathbb{P}^*, (x,n)\to H^*)$ have the same causal distributions.  
\begin{remark}
Here, we allow a slight abuse of use of  Lemma~\ref{lem-eq-dist} which can be easily extended to the case where the functions are only equal on the outer-product of the marginal distributions.
\end{remark}

Finally, using the exact same argument as in Proposition~\ref{prop:existence-scm}, we deduce that for any $\mathbb{P}_{\bm{N}}\in\mathcal{P}_{cc}(\mathbb{R})^{\otimes d}$, $\mathcal{A}_P^{\text{INV}}(\mathbb{P}_{\bm{N}},\mathbb{P}_{\bm{X}})$ is not empty, and of course that any element in $\mathcal{A}_P^{\text{INV}}(\mathbb{P}_{\bm{N}},\mathbb{P}_{\bm{X}})\subset \mathcal{A}_P^{\text{INV}}(\mathbb{P}_{\bm{X}})$ induces an SCM that has the exact same causal distributions as $\mathcal{S}_{\text{fp}}(P, \mathbb{P}, H)$.

\end{document}